\newtheorem{theorem}{Theorem}
\newtheorem{proposition}{Proposition}[section]
\newtheorem{remark}[theorem]{Remark}
\newtheorem{definition}{Definition}[section]
 \definecolor{azure}{rgb}{0.0, 0.5, 1.0}
\newtcolorbox{mybox}[1][]
{
  colframe = #1!25,
  colback  = #1!10,
  coltitle = #1!20!black,  
}
\def\msquare{\mathord{\scalerel*{\Box}{gX}}}
\title{Forward Operator Estimation in Generative Models with Kernel Transfer Operators}
\author{Zhichun Huang \\
Carnegie Mellon University\\
Pittsburgh, PA 15213, USA \\
\texttt{zhichunh@cs.cmu.edu} \\
\And
Rudrasis Chakraborty \\
Butlr \\
Burlingame, CA 94002, USA \\
\texttt{rudra@butlr.io}\\
\And
Vikas Singh \\
University of Wisconsin-Madison\\
Madison, WI 53706, USA\\
\texttt{vsingh@biostat.wisc.edu}\\
}
\begin{document}

\maketitle

\begin{abstract}
% Flow-based generative models refer to deep generative models with 
% tractable likelihoods, and 
% offer several attractive properties including 
% efficient density estimation and sampling. 
% Despite many advantages, 
% current formulations (e.g., normalizing flow) often have an expensive memory/runtime footprint,
% which hinders their use in a number of applications. 
% In this paper, we consider the setting where we have access to an autoencoder, which is
% suitably effective for the dataset of interest. 
% Under some mild conditions,
% we show that we can calculate a mapping to a RKHS which subsequently enables deploying 
% mature ideas from the kernel methods literature for flow-based generative models. 
% Specifically, we can \textit{explicitly} map the RKHS distribution (i.e., 
% approximate the flow) to match or align with  
% a template/well-characterized distribution, via kernel transfer operators. 
% This leads to a direct and resource efficient approximation avoiding iterative optimization. 
% We empirically show that this simple idea yields competitive results on popular datasets such as CelebA,
% as well as promising results on a public 3D brain imaging dataset where the sample sizes are much smaller. 

Generative models which use explicit density modeling (e.g., variational autoencoders, flow-based generative models) involve finding a mapping from a known distribution, e.g. Gaussian, to the unknown input distribution. This often requires searching over a class of non-linear functions (e.g., representable by a deep neural network). While effective in practice, the associated runtime/memory costs can increase rapidly, usually as a function of the performance desired in an application. We propose a much cheaper (and simpler) strategy 
to estimate this mapping based on adapting known results in kernel transfer operators. We show that our formulation enables highly efficient distribution approximation and sampling, and offers surprisingly good empirical performance that compares favorably with powerful baselines, but with significant runtime savings. We show that the algorithm also performs well in small sample size settings (in brain imaging). 

%of arbitrarily shaped distributions in $\mathbf{R}^n$ and $\mathbf{S}^{n - 1}$. We propose to solve the problem of learning the non-linear density transfer operator in the input space by alternatively computing the {\it closed-form} optimal linear operator in the reproducing-kernel Hilbert space (RKHS). Through a comprehensive empirical evaluation, we further show that the proposed method yields comparable or better image generation samples on several computer vision datasets compared with popular VAE variants and other density estimation methods, while having a significant advantage in learning time. {\color{red} WRITE ONE SENTENCE ABOUT MEDICAL EXPTS}
\end{abstract}

\input{intro}
\section{Preliminaries}

We briefly introduce reproducing kernel Hilbert space (RKHS) and kernel embedding of probability distributions, concepts 
we will use frequently. 

%which are the building blocks of this paper.
%\edit{We can define feature maps earlier here}
\begin{definition}[RKHS \citep{aronszajn1950theory}]
\label{def:rkhs}
For a set $\mathcal{X}$, let $\mathcal{H}$ be a set of functions $g:\mathcal{X}\rightarrow \mathbf{R}$. Then, $\mathcal{H}$ is a reproducing kernel Hilbert space (RKHS) with a product $\langle \cdot, \cdot \rangle_{\mathcal{H}}$ if there exists a function $k:\mathcal{X}\times \mathcal{X}\rightarrow \mathbf{R}$ (called a reproducing kernel) such that \begin{inparaenum}[(i)] \item $\forall x \in \mathcal{X}, g \in \mathcal{H}, g(x) = \langle g, k(x, \cdot)\rangle_{\mathcal{H}}$; \item $\mathcal{H} = cl({\text{span}}(\left\{k(x, \cdot), x \in \mathcal{X}\right\}))$, where $cl(\cdot)$ is the set closure. \end{inparaenum} 
\end{definition}

\begin{wraptable}{r}{0.45\textwidth}
\centering % to have the caption near the table
{\small
\begin{tabular}{p{.45cm} p{.45cm} p{3.65cm} }
\specialrule{1pt}{1pt}{0pt}
\rowcolor{azure!20}
\multicolumn{2}{c}{Notations} &  Meaning \\ \specialrule{1pt}{0pt}{1pt}
$\var{z}$ & $\var{x}$  & Random variable\\
$\mathbf{Z}$ & $\mathbf{X}$ & Data samples\\
$\spc{Z}$ & $\spc{X}$  & Domain \\
$P_{\var{z}}$ & $P_{\var{x}}$ & Distribution\\
$p_{\var{z}}$ & $p_{\var{x}}$  & Density function \\
$k$ & $l$  & Kernel function \\
$\mathcal{H}$ & $\mathcal{G}$  & RKHS\\
$\phi(\cdot)$ & $\psi(\cdot)$  & Feature mapping\\
$\mathcal{E}_k$ & $\mathcal{E}_l$  & Mean embedding operator\\
$\mu_{\var{z}}$ & $\mu_{\var{x}}$  & Kernel mean embedding\\
\bottomrule
\end{tabular}
\vspace{-1em}
\caption{\label{tab:notations} Notations  used in this paper.}
}
\end{wraptable}

The function $\phi(x) = k(x, \cdot) : \mathcal{X} \to \mathcal{H}$ is referred to as the \textit{feature mapping} of the induced RKHS $\mathcal{H}$. A useful identity derived from feature mappings is the {\it kernel mean embedding}: it defines a mapping from a probablity measure in $\mathcal{X}$ to an element in the RKHS.
% The use of kernel mean embeddings has recently regained much attention in the machine learning community for tasks associated with distribution matching.

\begin{definition}[Kernel Mean Embedding \citep{smola2007ahilbertspace}]
\label{def:me}
Given a probability measure $p$ on $\mathcal{X}$ with an associated RKHS $\mathcal{H}$ equipped with
a reproducing kernel $k$ such that $\sup_{x \in \mathcal{X}} k(x, x) < \infty$, the kernel mean embedding of $p$ in RKHS $\mathcal{H}$, denoted by $\mu_{p} \in \mathcal{H}$, is defined as $\mu_{p} = E_p[\phi(x)]= \int k(x, \cdot) p(x)dx$, and the mean embedding operator $\mathcal{E}: L^1(\mathcal{X}) \to \mathcal{H}$ is defined as $\mu_p = \mathcal{E}p$. 
\end{definition}

\begin{tcolorbox}[bottom=0mm,top=0mm]
\begin{remark}
For characteristic kernels, the operator $\mathcal{E}$ is injective. Thus, two distributions $(p, q)$ in $\mathcal{X}$ are identical \textit{iff} $\mathcal{E}p = \mathcal{E}q$.
\end{remark}
\end{tcolorbox}
 This property allows using of  
{\it Maximum Mean Discrepancy (MMD)}
for distribution matching \citep{gretton2012kernel, li2017mmdgan} and is 
common, see \citep{Muandet2017kernel,zhou2018pnas}. For a finite number of
samples $\left\{\mathbf{x}_i\right\}_{i=1}^n$ drawn from the probability measure $p$,
an unbiased empirical estimate of $\mu_{\mathcal{H}}$ is $\hat{\mu}_{\mathcal{H}} = \tfrac{1}{n} \sum_{i=1}^{n}k(\mathbf{x}_i, \cdot)$ such that $\lim_{n \to \infty} \tfrac{1}{n} \sum_{i=1}^{n}k(\mathbf{x}_i, \cdot) = \mu_{\mathcal{H}}$.

Next, we review the covariance/cross covariance operators, two widely-used identities in kernel methods \citep{fukumizu2013kernel,song2013kernel} and building blocks of our approach.
\begin{definition}[Covariance/Cross-covariance Operator]
    Let $X, Z$ be random variables defined on $\mathcal{X} \times \mathcal{Z}$ with joint distribution $P_{X, Z}$ and marginal distributions $P_{X}$, $P_{Z}$. Let $(l, \phi, \mathcal{H})$ and $(k, \psi, \mathcal{G})$ be two sets of (a) bounded kernel, (b) their corresponding feature map, and (c) their induced RKHS, respectively. The (uncentered) covariance operator $\mathcal{C}_{ZZ}: \mathcal{H} \to \mathcal{H}$ and cross-covariance operator $\mathcal{C}_{XZ}: \mathcal{H} \to \mathcal{G}$ are defined as 
    \begin{equation}
        \mathcal{C}_{ZZ} \triangleq \mathbb{E}_{z \sim P_Z}[\phi(z) \otimes \phi(z)]\qquad \mathcal{C}_{XZ} \triangleq \mathbb{E}_{(x, z) \sim P_{X, Z}}[\psi(x) \otimes \phi(z)]
    \end{equation}
    where $\otimes$ is the outer product operator.
\end{definition}

\section{Simplifying the estimation of the forward operator}

\paragraph{Forward operator as a dynamical system:} 
The dynamical system view of generative models has been described by others \citep{Chen2018neuralode, grathwohl2018scalable, behrmann2019invertible}. 
These strategies model the evolution of latent variables in a residual neural network in 
terms of its dynamics over continuous or discrete time $t$, and consider the output function $f$ as the evaluation function at a predetermined boundary condition $t = t_1$. Specifically, given an input (i.e., initial condition) $z(t_0)$, $f$ is defined as
\begin{equation}
    f(z(t_0)) = z(t_0) + \int_{t_0}^{t_1} \Delta_t(z(t)) dt
\end{equation}
where $\Delta_t$ is a time-dependent neural network function and $z(t)$ is the intermediate solution at $t$. This view of generative models is not limited to specific methods or model archetypes, but generally useful, for example, by viewing the outputs of each hidden layer as evaluations in discrete-time dynamics. After applying $f$ on a random variable $Z$, the marginal density of the output over any subspace $\Lambda \subseteq \mathcal{X}$ can be expressed as
\begin{align}
\label{eq0}
\int_\Lambda p_{f(Z)}(x) dx = \int_{z \in {f}^{-1}(\Lambda)} p_{\var{z}}(z)dz
\end{align}
If there exists some neural network instance $\Delta^\star_t$ such that the corresponding output function $f^\star$ satisfies $P_X = P_{f^\star(Z)}$, by Def. \ref{def:forward}, $f^\star$ is a forward operator. Let $\mathbf{X}$ be a set of \textit{i.i.d.} samples drawn from $P_X$. In typical generative learning, either maximizing the likelihood $\tfrac{1}{|\mathbf{X}|}\sum_{x \in \mathbf{X}} p_{f(Z)}(x)$ or minimizing the distributional divergence $d(P_{f(Z)}, P_{\mathbf{X}})$ requires evaluating and differentiating through $f$ or $f^{-1}$ many times.

\paragraph{Towards a one-step estimation of forward operator:} Since $f$ and ${f}^{-1}$ in \eqref{eq0} will be highly nonlinear in practice, evaluating and computing the gradients can be expensive. Nevertheless, the dynamical systems literature suggests a {\em linear} extension of $f^*$, namely the \textit{Perron-Frobenius} operator or transfer operator, that conveniently transfers $p_{\var{z}}$ to $p_{\var{x}}$.
\begin{definition}[Perron-Frobenius operator \citep{mayer1980ruelle}]
\label{def:pf}
Given a dynamical system $f: \spc{X} \to \spc{X}$, the Perron-Frobenius (PF) operator $\mathcal{P}: L^1(\mathcal{X}) \rightarrow L^1(\mathcal{X})$ is an \textit{infinite-dimensional linear} operator defined as $\int_\Lambda (\mathcal{P}p_{\var{z}})(x) dx = \int_{z \in {f^{-1}(\Lambda)}}p_{\var{z}}(z)dz$ for all $\Lambda \subseteq \mathcal{X}.$
\end{definition}

Although in Def. \ref{def:pf}, the PF operator $\mathcal{P}$ is defined for self-maps, it is trivial to extend $\mathcal{P}$ to mappings $f: \mathcal{Z} \to \mathcal{X}$ by restricting the RHS integral $\int_{z \in {f^{-1}(\Lambda)}}p_{\var{z}}(z)dz$ to $\mathcal{Z}$.

It can be seen that, for the forward operator $f^*$,  the corresponding PF operator $\mathcal{P}$ satisfies
\begin{align}
\label{eq1}
p_{\var{x}} = \mathcal{P}p_{\var{z}}. 
\end{align}
If $\mathcal{P}$ can be efficiently computed, transferring the tractable density $p_{\var{z}}$ to the target density $p_{\var{x}}$ can be accomplished simply by applying $\mathcal{P}$. 
%\revise{if we have figs for GLOW/VAE etc, refer to them %here}. 
However, since $\mathcal{P}$ is an infinite-dimensional operator on $L^1(\mathcal{X})$, it is impractical to instantiate it explicitly and exactly. 
Nonetheless, there exist several methods for estimating the Perron-Frobenius operator, including Ulam's method \citep{ulam1960collection} and the Extended Dynamical Mode Decomposition (EDMD) \citep{williams2015data}.
Both strategies project $\mathcal{P}$ onto a finite number of hand-crafted basis functions -- this may suffice 
in many settings but may fall short in modeling highly complex dynamics. 

% Thus, a natural solution is to search for a vector space of infinite dimension to model any complex dynamics. Next, we extend the PF operator in RKHS, denoted by embedded PF operator which will be expressive enough for complex dynamics. 

\paragraph{Kernel-embedded form of PF operator:} A natural 
extension of PF operator 
%over the previously mentioned methods 
is to represent $\mathcal{P}$ by an infinite set of functions \citep{klus2020eigendecompositions}, e.g., projecting it onto the bases of an RKHS via the \textit{kernel trick}. There, for a characteristic kernel $l$, the $\textit{kernel mean embedding}$ uniquely identifies an element $\mu_{\var{x}} = \mathcal{E}_{l}p_{\var{x}} \in \spc{G}$ for any $p_{\var{x}} \in L^1(\spc{X})$. Thus, to approximate $\mathcal{P}$, we may alternatively solve for the dynamics from $p_{\var{z}}$ to $p_{\var{x}}$ in their {\em embedded} form. Using Tab. \ref{tab:notations} notations, we have the following linear operator that defines the dynamics between two embedded densities.

% Let $(k, \mathcal{H}, \mathcal{E}_{k})$ and $(l, \mathcal{G}, \mathcal{E}_{l})$ be two sets of positive definite kernels, their induced RKHS and the corresponding mean embedding operator (as defined in Tab. \ref{tab:notations}). We can apply $\mathcal{E}_{l}$ on both sides on \eqref{eq1} to get
% \begin{align}
% \label{eq2}
% \mathcal{E}_{l}p_{\var{x}} = \mathcal{E}_{l}\mathcal{P}p_{\var{z}}.
% \end{align}

\begin{definition}[Kernel-embedded Perron-Frobenius operator \citep{klus2020eigendecompositions}]
\label{def:kpf}Given $p_{\var{z}} \in L^1(\mathcal{X})$ and $p_{\var{x}} \in L^1(\mathcal{X})$. Denote $k$ as the \textbf{input kernel} and $l$ as the \textbf{output kernel}. Let $\mu_{\var{x}} = \mathcal{E}_l p_{\var{x}}$ and $\mu_{\var{z}} = \mathcal{E}_k p_{\var{z}}$ be their corresponding mean kernel embeddings. The kernel-embedded Perron-Frobenius (kPF) operator, denoted
by $\mathcal{P}_{\mathcal{E}}:\mathcal{H}\rightarrow \mathcal{G}$, is defined as
%\boxed{
\begin{align}
    \mathcal{P}_\mathcal{E} = \mathcal{C}_{\var{x}\var{z}}\mathcal{C}_{\var{z}\var{z}}^{-1}
\end{align}
\end{definition}

\begin{proposition}[\citet{song2013kernel}] \label{prop:kpf}
With the above definition, $\mathcal{P}_\mathcal{E}$ satisfies
\begin{align}
   \mu_{\var{x}} = \mathcal{P}_\mathcal{E}\mu_{\var{z}}
\end{align}
under the conditions:   \begin{inparaenum}[\bfseries (i)] \item $\mathcal{C}_{\var{z}\var{z}}$ is injective \item $\mu_{t} \in \text{range}(\mathcal{C}_{\var{z}\var{z}})$ \item $\mathbb{E}[g(\var{x})|\var{z} = \cdot] \in \mathcal{H}$ for any $g \in G$.\end{inparaenum}
%}
\end{proposition}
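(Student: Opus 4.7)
The plan is to verify the identity weakly against arbitrary test functions $g\in\mathcal{G}$, using the reproducing property together with the tower law of conditional expectation to reduce everything to a covariance-operator equation whose solution is precisely $h = \mathcal{C}_{zz}^{-1}\mathcal{C}_{zx}g$.

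First I would fix any $g\in\mathcal{G}$ and expand $\langle g,\mu_{\var{x}}\rangle_{\mathcal{G}} = \langle g,\mathbb{E}_x[\psi(x)]\rangle_{\mathcal{G}} = \mathbb{E}_x[g(x)]$ via the reproducing property in $\mathcal{G}$, and then apply the tower property to obtain $\mathbb{E}_x[g(x)] = \mathbb{E}_z[\mathbb{E}[g(x)\mid z]]$. Condition (iii) is what lets the next step go through: it guarantees that $h(\cdot):=\mathbb{E}[g(x)\mid z=\cdot]$ actually lies in $\mathcal{H}$, so that $\mathbb{E}[g(x)\mid z] = \langle h,\phi(z)\rangle_{\mathcal{H}}$ by the reproducing property in $\mathcal{H}$. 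Taking the outer expectation then yields $\mathbb{E}_x[g(x)] = \langle h,\mu_{\var{z}}\rangle_{\mathcal{H}}$.

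Next I would identify $h$ in terms of the covariance operators. A direct computation from the definitions gives, on the one hand, $\mathcal{C}_{zz}h = \mathbb{E}_z[\phi(z)\langle \phi(z),h\rangle_{\mathcal{H}}] = \mathbb{E}_z[\phi(z)h(z)]$ and, on the other hand using the tower law again, $\mathbb{E}_z[\phi(z)h(z)] = \mathbb{E}_{x,z}[\phi(z)g(x)] = \mathcal{C}_{zx}g$. So $h$ satisfies $\mathcal{C}_{zz}h = \mathcal{C}_{zx}g$. Condition (i) (injectivity of $\mathcal{C}_{zz}$) makes this solution unique, so $h=\mathcal{C}_{zz}^{-1}\mathcal{C}_{zx}g$. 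Substituting back and using self-adjointness of $\mathcal{C}_{zz}$ (hence of $\mathcal{C}_{zz}^{-1}$ on its range) and $\mathcal{C}_{zx}=\mathcal{C}_{xz}^{\ast}$ gives
\begin{equation*}
\langle g,\mu_{\var{x}}\rangle_{\mathcal{G}} = \langle \mathcal{C}_{zz}^{-1}\mathcal{C}_{zx}g,\mu_{\var{z}}\rangle_{\mathcal{H}} = \langle g,\mathcal{C}_{xz}\mathcal{C}_{zz}^{-1}\mu_{\var{z}}\rangle_{\mathcal{G}} = \langle g,\mathcal{P}_{\mathcal{E}}\mu_{\var{z}}\rangle_{\mathcal{G}}.
\end{equation*}
Since $g\in\mathcal{G}$ was arbitrary and the equality holds on a dense set (by the closure part of Definition~\ref{def:rkhs}), we conclude $\mu_{\var{x}} = \mathcal{P}_{\mathcal{E}}\mu_{\var{z}}$.

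The main obstacle is the last step of the previous paragraph: $\mathcal{C}_{zz}$ is typically a compact, and hence not boundedly invertible, positive operator on $\mathcal{H}$, so the symbol $\mathcal{C}_{zz}^{-1}$ has to be handled with care. This is where condition (ii) enters: requiring $\mu_{\var{z}}\in\mathrm{range}(\mathcal{C}_{zz})$ (and similarly that $\mathcal{C}_{zx}g$ lies in $\mathrm{range}(\mathcal{C}_{zz})$, which follows from (iii) via the identity above) makes $\mathcal{C}_{zz}^{-1}\mu_{\var{z}}$ and $\mathcal{C}_{zz}^{-1}\mathcal{C}_{zx}g$ well-defined elements of $\mathcal{H}$, so that all inner products and adjoint manipulations are legitimate. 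In practice one would state this with a regularized inverse $(\mathcal{C}_{zz}+\varepsilon I)^{-1}$ and take $\varepsilon\to 0$, but under the stated range assumption the cleaner statement of Proposition~\ref{prop:kpf} follows directly.
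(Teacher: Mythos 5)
Your derivation is correct: the weak formulation against test functions $g\in\mathcal{G}$, the tower property combined with condition (iii) to write $\mathbb{E}[g(\var{x})]=\langle h,\mu_{\var{z}}\rangle_{\mathcal{H}}$, the identity $\mathcal{C}_{\var{z}\var{z}}h=\mathcal{C}_{\var{z}\var{x}}g$, and the careful use of (i) and (ii) to make sense of $\mathcal{C}_{\var{z}\var{z}}^{-1}$ and the adjoint manipulations are exactly the standard argument behind the conditional embedding / kernel sum rule. Note that the paper itself offers no proof of this proposition --- it is imported verbatim from \citet{song2013kernel} (building on Theorem~1 of \citet{fukumizu2013kernel}) --- and your reconstruction matches that source's reasoning, including the correct observation that $\mathcal{C}_{\var{z}\var{x}}g\in\mathrm{range}(\mathcal{C}_{\var{z}\var{z}})$ is supplied by (iii) rather than needing to be assumed separately.
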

The last two assumptions can sometimes be difficult to satisfy for certain RKHS (see Theorem 2 of \citet{fukumizu2013kernel}).  In such cases, a relaxed solution can be constructed by replacing $\mathcal{C}^{-1}_{ZZ}$ by a regularized inverse $(\mathcal{C}_{ZZ} + \lambda I)^{-1}$ or a Moore-Penrose pseudoinverse $\mathcal{C}^\dagger_{ZZ}$.
% \begin{tcolorbox}
% \begin{remark}
% Note that $\mathcal{P}_\mathcal{E}$ here essentially has the same form as the \textit{conditional mean embedding} $\mathcal{U}_{\var{x}|\var{z}}$ in \citep{song2013kernel}.
% \end{remark} 
% \end{tcolorbox}
% While the first assumption in Prop. \ref{prop:kpf} can be easily satisfied with topological $\spc{X}$, continuous $k$, and fully supported $p_{\var{z}}$, the last two assumptions often need to be relaxed by taking the regularized inverse $(C_{\var{z}\var{z}} + \lambda n I)^{-1}$ (see Theorem 2 of \citep{fukumizu2013kernel}).

\begin{figure}[t]
    \centering
    \vspace{-1em}
    \includegraphics[width=0.65\textwidth, clip, trim={0 2cm 5cm 0}]{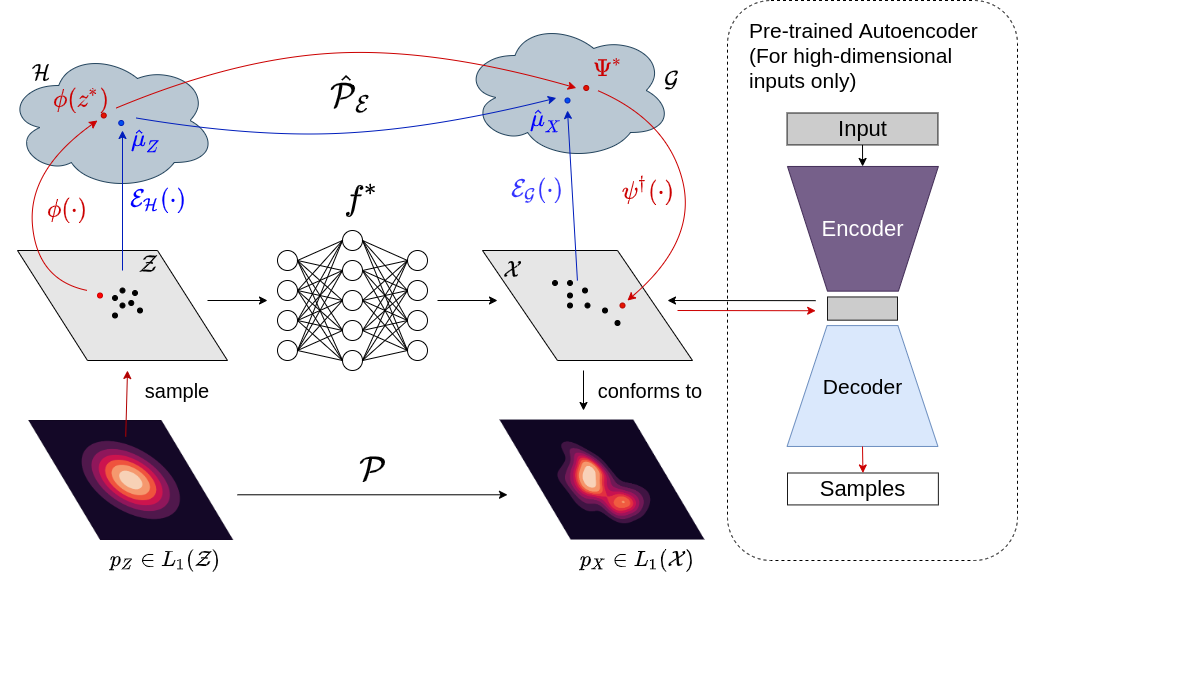}
    \vspace{-8pt}
    \caption{Summary of our framework. The proposed operator is estimated to transfer RKHS embedded densities from one to anothter (blue arrows). Generating new samples (red arrows) involves embedding the prior samples $z^* \sim P_Z$, applying the operator $\Psi^* = \hat{\mathcal{P}}_\mathcal{E}\phi(z^*)$, and finding preimages $\psi^\dagger(\Psi^*)$. The pre-trained autoencoder projects the data onto a \textit{smooth} latent space and is only required when generating high-dimensional data such as images.}
    % \vspace{-1em}
    \label{fig:kpf_workflow}
\end{figure}

The following proposition shows commutativity between the (kernel-embedded) PF operator and the mean embedding operator, showing its equivalence to $\mathcal{P}$ when $l$ is characteristic.

\begin{proposition}[\citep{klus2020eigendecompositions}]
  \label{prop:klus}
   With the above notations, $\mathcal{E}_l \circ \mathcal{P} = \mathcal{P}_{\mathcal{E}} \circ \mathcal{E}_k$.
\end{proposition}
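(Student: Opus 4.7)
The plan is to prove the operator identity $\mathcal{E}_l \circ \mathcal{P} = \mathcal{P}_{\mathcal{E}} \circ \mathcal{E}_k$ pointwise: fix an arbitrary $p_{\var{z}} \in L^1(\spc{Z})$ and show that both $\mathcal{E}_l(\mathcal{P} p_{\var{z}})$ and $\mathcal{P}_{\mathcal{E}}(\mathcal{E}_k p_{\var{z}})$ equal the same element of $\mathcal{G}$, namely the kernel mean embedding $\mu_{\var{x}}$ of the pushforward density $p_{\var{x}} := \mathcal{P} p_{\var{z}}$ under $f$.

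First I would unpack the left-hand side. By Def.~\ref{def:me}, $\mathcal{E}_l(\mathcal{P}p_{\var{z}}) = \int_{\spc{X}} l(x,\cdot)\,(\mathcal{P}p_{\var{z}})(x)\,dx$. The defining relation of the Perron-Frobenius operator in Def.~\ref{def:pf} is stated on indicators $\mathbf{1}_{\Lambda}$, so the key step is to promote it to the change-of-variables identity
\begin{equation}
\int_{\spc{X}} g(x)\,(\mathcal{P}p_{\var{z}})(x)\,dx \;=\; \int_{\spc{Z}} g(f(z))\,p_{\var{z}}(z)\,dz
\end{equation}
for every bounded measurable $g$, by approximation with simple functions and monotone/dominated convergence. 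Applying this in the RKHS argument with $g(\cdot) = l(\cdot, y)$ for each $y \in \spc{X}$ (or, more cleanly, as a Bochner integral, which is well-defined because $\sup_x l(x,x)<\infty$ yields $\|\phi(x)\|_{\mathcal{G}}$ uniformly bounded) gives
\begin{equation}
\mathcal{E}_l(\mathcal{P}p_{\var{z}}) \;=\; \int_{\spc{Z}} l(f(z),\cdot)\,p_{\var{z}}(z)\,dz \;=\; \mathbb{E}_{\var{x}\sim p_{\var{x}}}[l(\var{x},\cdot)] \;=\; \mu_{\var{x}}.
\end{equation}

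For the right-hand side, the definition of $\mathcal{E}_k$ gives $\mathcal{E}_k p_{\var{z}} = \mu_{\var{z}}$, and Proposition~\ref{prop:kpf}, under its three stated assumptions, asserts exactly $\mathcal{P}_{\mathcal{E}}\mu_{\var{z}} = \mu_{\var{x}}$. Combining the two computations yields $\mathcal{E}_l(\mathcal{P}p_{\var{z}}) = \mu_{\var{x}} = \mathcal{P}_{\mathcal{E}}(\mathcal{E}_k p_{\var{z}})$, and since $p_{\var{z}}$ was arbitrary the operator equality follows.

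The main obstacle is the measure-theoretic lift from Def.~\ref{def:pf} (which is phrased only for indicator sets) to the Bochner/pointwise change-of-variables identity needed to pull the feature map $l(\cdot,y)$ through the integral; this requires invoking boundedness of the kernel and standard approximation arguments but is otherwise routine. A secondary caveat is that the invocation of Prop.~\ref{prop:kpf} presupposes the three regularity conditions listed there (injectivity of $\mathcal{C}_{\var{z}\var{z}}$, the range condition, and conditional-expectation smoothness); when they fail, the identity should be read with the regularized inverse $(\mathcal{C}_{\var{z}\var{z}}+\lambda I)^{-1}$ or the pseudoinverse $\mathcal{C}_{\var{z}\var{z}}^{\dagger}$ as the excerpt notes, and the equality then holds only asymptotically.
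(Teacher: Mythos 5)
The paper itself offers no proof of Proposition~\ref{prop:klus} --- it is imported verbatim from \citet{klus2020eigendecompositions} --- so the comparison is against the standard argument behind that citation. Your treatment of the left-hand side is correct and is indeed the routine part: lifting Def.~\ref{def:pf} from indicator functions to bounded measurable test functions and then to a Bochner integral (using $\sup_x l(x,x)<\infty$) gives $\mathcal{E}_l(\mathcal{P}p) = \int_{\mathcal{Z}} l(f(z),\cdot)\,p(z)\,dz$ for any density $p$.

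The gap is on the right-hand side. Proposition~\ref{prop:kpf} asserts $\mathcal{P}_{\mathcal{E}}\mu_Z = \mu_X$ only for the \emph{particular} marginals $P_Z, P_X$ of the joint distribution used to construct $\mathcal{C}_{XZ}$ and $\mathcal{C}_{ZZ}$; it says nothing about $\mathcal{P}_{\mathcal{E}}(\mathcal{E}_k p)$ for an arbitrary $p \in L^1(\mathcal{Z})$. Since Proposition~\ref{prop:klus} is an operator identity --- it must hold on every density in the domain, which is precisely what makes it usable for transferring an arbitrary prior --- your argument establishes the identity only at the single element $\mathcal{E}_k P_Z$. The missing ingredient is the pointwise conditional mean embedding identity: under condition (iii) of Proposition~\ref{prop:kpf} one has $\mathcal{C}_{ZZ}\,\mathbb{E}[g(X)\mid Z=\cdot] = \mathcal{C}_{ZX}\,g$ for all $g \in \mathcal{G}$, whence
\begin{equation}
\mathcal{P}_{\mathcal{E}}\phi(z) \;=\; \mathcal{C}_{XZ}\mathcal{C}_{ZZ}^{-1}\phi(z) \;=\; \mathbb{E}[\psi(X)\mid Z=z] \;=\; \psi(f(z)) \quad (P_Z\text{-a.e.}),
\end{equation}
the last equality because the dynamics $X=f(Z)$ are deterministic. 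Then, for any $p$ absolutely continuous with respect to $P_Z$ (or under conditions making the identity hold everywhere), boundedness of $\mathcal{P}_{\mathcal{E}}$ lets you pull it through the Bochner integral: $\mathcal{P}_{\mathcal{E}}\mathcal{E}_k p = \int \mathcal{P}_{\mathcal{E}}\phi(z)\,p(z)\,dz = \int \psi(f(z))\,p(z)\,dz$, which matches your left-hand side. With that replacement the argument closes; your concluding caveats about falling back on $(\mathcal{C}_{ZZ}+\lambda I)^{-1}$ or $\mathcal{C}_{ZZ}^{\dagger}$ when the range and smoothness conditions fail remain apt.
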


\paragraph{Transferring embedded densities with the kPF operator:} The kPF operator is a powerful tool that allows transferring embedded densities in RKHS. The main steps are:
\begin{tcolorbox}[top=0mm,bottom=0mm]
\begin{compactenum}[\bfseries (1)] \item Use mean embedding operator $\mathcal{E}_{l}$ on $p_{\var{z}}$. Let us denote it by $\mu_{\var{z}}$.  \item  Transfer $\mu_{\var{z}}$ using kPF operator $\mathcal{P}_\mathcal{E}$ to get the mean embedded $p_{\var{x}}$, given by $\mu_{\var{x}}$.\end{compactenum}
\end{tcolorbox}
Of course, in practice with finite data, $\{\mathbf{x}_i\}_{i \in [n]} \sim P_{\var{x}}$ and $\{\mathbf{z}_i\}_{i \in [n]} \sim P_{\var{x}}$, $\mathcal{P}_\mathcal{E}$ must be estimated empirically (see \citet{klus2020eigendecompositions} for an error analysis).
\begin{align}
\hat{\mathcal{P}}_\mathcal{E} = \hat{\mathcal{C}}_{\var{x}\var{z}}(\hat{\mathcal{C}}_{\var{z}\var{z}})^{-1} \nonumber \approx \Psi(\Phi^T\Phi + \lambda n I)^{-1}\Phi^T \approx \Psi(\Phi^T\Phi)^\dagger\Phi^T
\end{align}
where $\Phi = [k(\mathbf{z}_1, \cdot), \cdots, k(\mathbf{z}_n, \cdot)],~
 \Psi = [l(\mathbf{x}_1,\cdot), \cdots, l(\mathbf{x}_n, \cdot)]$ are simply the corresponding feature matrices for samples of $P_{\var{x}}$ and $P_{\var{z}}$, and $\lambda$ is a small penalty term.

\paragraph{Learning kPF for unconditional generative modeling:} Some generative modeling methods such as VAEs and 
flow-based formulations explicitly model the latent variable $Z$ as conditionally dependent on the data variable $X$. This allows deriving/optimizing the likelihood $p_{f(Z)}(X)$. 
This is desirable but may not be essential in all 
applications. 
To learn a kPF, however, $X$ and $Z$ can be independent RVs. While it may not be immediately obvious why we could assume this independence, we can observe the following property for the empirical kPF operator, assuming that the empirical covariance operator $\hat{\mathcal{C}}_{ZZ}$ is non-singular:
%
% In unconditional generative modeling, often we assume a joint distribution of the data variable $X$ and a latent variable $Z$, such that the marginal $p_{Z}$ conforms to a known simple distribution (e.g., Gaussian). However, since the joint distribution $p(X, Z)$ is typically unknown, sampling jointly in general is not possible. Existing generative methods that learn with explicit densities often use additional mechanisms to capture the joint (e.g. approximate posterior in VAE). Nevertheless, to learn a kPF operator, we may simply create an known-distributed latent variable $Z$ that contains no information of $X$, in other words, $X$ and $Z$ are independent RVs. While it may not be immediately obvious why we could assume this independence, we have the following property for the empirical kPF operator assuming that the empirical covariance operator $\hat{\mathcal{C}}_{ZZ}$ is non-singular:
%
\begin{align}\label{eq:emp_mean_emb}
    \hat{\mathcal{P}}_{\mathcal{E}}\hat{\mu}_{Z} &= \hat{\mathcal{C}}_{XZ}\hat{\mathcal{C}}_{ZZ}^{-1}\hat{\mu}_{Z} = \underbrace{\Psi \Phi^\top}_{\hat{\mathcal{C}}_{XZ}} (\underbrace{\Phi \Phi^\top}_{\hat{\mathcal{C}}_{ZZ}})^{-1} \Phi \mathds{1}_{n}
    = {\Psi (\Phi^\top \Phi)^{-1} \Phi^\top \Phi} \mathds{1}_{n} = \Psi \mathds{1}_{n} = \hat{\mu}_{X}
\end{align}
Suppose that $\{\mathbf{x}_i\}_{i \in [n]}$ and $\{\mathbf{z}_j\}_{i \in [n]}$ are independently sampled from the marginals $P_X$ and $P_Z$. It is easy to verify that (\ref{eq:emp_mean_emb}) holds for any pairing $\{(\mathbf{x}_i, \mathbf{z}_j)\}_{(i, j) \in [n] \times [n]}$. However, instantiating the RVs in this way rules out the use of kPF for certain downstream tasks such as controlled generation or mode detection, since $\var{Z}$ does not contain information regarding $\var{X}$. Nevertheless, if sampling is our only goal, then this instantiation of kPF will suffice.
\paragraph{Mapping $Z$ to $\mathcal{G}$:} Now, since  $\mathcal{P_\mathcal{E}}$ is a {\em deterministic} linear operator, we can easily set up a scheme to map samples of $\var{z}$ to elements of $\mathcal{G}$ where the expectation of the mapped samples equals $\mu_{\var{x}}$

% Now, we can easily set up a strategy to map samples of $\var{z}$ to elements of $\mathcal{G}$ by utilizing the fact that $\mathcal{P_\mathcal{E}}$ is a deterministic linear operator and show that the mapped density asymptotically converges to $p_{\var{x}}$. 

Define $\phi(z) = k(z, \cdot)$ and $\psi(x) = l(x, \cdot)$ as feature maps of kernels $k$ and $l$. We can rewrite $\mu_{\var{x}}$  as 
\begin{align}
{{\mu}_{\var{x}}} = \mathcal{P}_\mathcal{E}\mathcal{E}_k {p_{\var{z}}}  
                = \mathcal{P}_\mathcal{E}E_{\var{z}}[\phi(\var{z})]
                = E_{\var{z}}[\mathcal{P}_\mathcal{E}(\phi(\var{Z}))]
                = E_{\var{Z}}[\psi\left(\psi^{-1}\left(\mathcal{P}_\mathcal{E} \phi\left(\var{z}\right)\right)\right)]
\label{eq:approx_sample}
\end{align}
Here $\psi^{-1}$ is the inverse or the \textit{preimage map} of $\psi$. Such an inverse, in general, may not exist \citep{kwok2004pre,honeine2011preimage}. We will discuss a procedure to approximate $\psi^{-1}$ in \S\ref{sec:preimage}. In what follows, we will temporarily assume that an exact preimage map exists and is tractable to compute. 

Define $\Psi^* = \hat{\mathcal{P}}_\mathcal{E} \phi(Z)$ as the \textit{transferred sample} in $\mathcal{G}$ using the empirical embedded PF operator $\hat{\mathcal{P}}_\mathcal{E}$. Then the next result shows that asymptotically the transferred samples converge in distribution to the target distribution. 

\begin{proposition}
  \label{prop:convergence} As $n \to \infty$, $\psi^{-1}\left(\Psi^*\right) \stackrel{\mathcal{D}}{\to} P_{\var{x}}$. That is, the preimage of the transferred sample approximately conforms to  $P_{\var{X}}$ under previous assumptions when $n$ is large.
\end{proposition}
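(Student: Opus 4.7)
The plan is to split the convergence into two pieces: (i) operator-level convergence of the empirical kPF to the population kPF, and (ii) identification of the idealized transferred sample with $P_{\var{x}}$ via the characteristic-kernel property. First, I would invoke the consistency of empirical (cross-)covariance operators in Hilbert--Schmidt norm, $\|\hat{\mathcal{C}}_{XZ} - \mathcal{C}_{XZ}\|_{\mathrm{HS}} = O_p(n^{-1/2})$ and similarly for $\hat{\mathcal{C}}_{ZZ}$ (a standard concentration result in RKHS, as in \citet{fukumizu2013kernel} and the error analysis in \citet{klus2020eigendecompositions} cited above). Under the regularity hypotheses of Proposition \ref{prop:kpf}, or using the regularized inverse $(\hat{\mathcal{C}}_{ZZ}+\lambda_n I)^{-1}$ with $\lambda_n \to 0$ at an appropriate rate, the composed operator $\hat{\mathcal{P}}_\mathcal{E} = \hat{\mathcal{C}}_{XZ}\hat{\mathcal{C}}_{ZZ}^{-1}$ converges to $\mathcal{P}_\mathcal{E}$ in operator norm. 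Hence, for $Z \sim P_{\var{z}}$ drawn independently of the training samples, $\|\hat{\mathcal{P}}_\mathcal{E}\phi(Z) - \mathcal{P}_\mathcal{E}\phi(Z)\|_{\mathcal{G}} \to 0$ in probability.

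Second, I would identify the distribution of the \emph{ideal} transferred sample. Define $\tilde{X} := \psi^{-1}(\mathcal{P}_\mathcal{E}\phi(Z))$ with $Z \sim P_{\var{z}}$. Taking the expectation over $Z$ and using the identity \eqref{eq:approx_sample},
\begin{equation*}
E[\psi(\tilde{X})] = E_Z\!\left[\psi\!\left(\psi^{-1}\!\left(\mathcal{P}_\mathcal{E}\phi(Z)\right)\right)\right] = \mathcal{P}_\mathcal{E}\mu_{\var{z}} = \mu_{\var{x}} = E[\psi(\var{x})].
\end{equation*}
Since $l$ is characteristic, the mean-embedding operator $\mathcal{E}_l$ is injective (cf.\ the remark following Definition \ref{def:me}), so this equality of embeddings forces the law of $\tilde{X}$ to coincide with $P_{\var{x}}$.

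Third, I would close with a continuous-mapping argument. Provided $\psi^{-1}$ is continuous on the (almost-sure) range of $\mathcal{P}_\mathcal{E}\phi(Z)$, the in-probability convergence from the first step combined with the continuous mapping theorem yields $\psi^{-1}(\Psi^*) = \psi^{-1}(\hat{\mathcal{P}}_\mathcal{E}\phi(Z)) \to \tilde{X}$ in probability, and hence in distribution; together with the second step this gives $\psi^{-1}(\Psi^*) \stackrel{\mathcal{D}}{\to} P_{\var{x}}$.

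The main obstacle is the preimage map $\psi^{-1}$, which is generically neither well-defined nor continuous on all of $\mathcal{G}$, as the paper itself acknowledges (and which motivates the approximate pre-image procedure in \S\ref{sec:preimage}). One must either restrict to a suitable subset of $\mathcal{G}$ on which $\psi^{-1}$ acts as a continuous single-valued map, or propagate an additional approximation error from the empirical pre-image through the limit. A secondary technicality is transferring the regularity conditions of Proposition \ref{prop:kpf} to the empirical regime so that the pseudoinverse (or Tikhonov-regularized inverse) used to build $\hat{\mathcal{P}}_\mathcal{E}$ remains stable as $n \to \infty$; picking $\lambda_n$ to vanish slowly enough to dominate the $O_p(n^{-1/2})$ estimation error is the standard way to handle this.
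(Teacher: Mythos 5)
Your proposal follows essentially the same route as the paper's proof, which simply asserts the asymptotic convergence $\hat{\mathcal{P}}_\mathcal{E} \to \mathcal{P}_\mathcal{E}$ and appeals to the identity chain in \eqref{eq:approx_sample}; your three steps (operator consistency, identification of the ideal transferred law via injectivity of the characteristic mean embedding, and a continuous-mapping argument) are precisely the details that one-line argument leaves implicit. The obstacles you flag --- the ill-posedness and possible discontinuity of $\psi^{-1}$, and the stability of the regularized inverse as $n \to \infty$ --- are the same caveats the paper itself acknowledges (in \S\ref{sec:preimage} and \S\ref{sec:limitations}) without resolving, so your write-up is a faithful, more explicit rendering of the paper's argument rather than a different one.
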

\begin{proof}
Since $\hat{\mathcal{P}}_\mathcal{E} \overset{\text{asymp.}}{\to} \mathcal{P}$, the proof immediately follows from \eqref{eq:approx_sample}.
\end{proof}

\section{Sample generation using the Kernel transfer operator}\label{sec:gen_sample}
At this point, the transferred sample  $\Psi^*$, obtained by the kPF operator, remains an element of RKHS $\spc{G}$. To translate the samples back to the input space, we must find the  preimage $x^*$ such that $\psi(x^*) = \Psi^*$.

\subsection{Solving for an approximate preimage}\label{sec:preimage}

% \begin{wrapfigure}{r}{0.425\textwidth}
% \vspace{-18pt}
%     \centering
%     \includegraphics[width=0.23\linewidth]{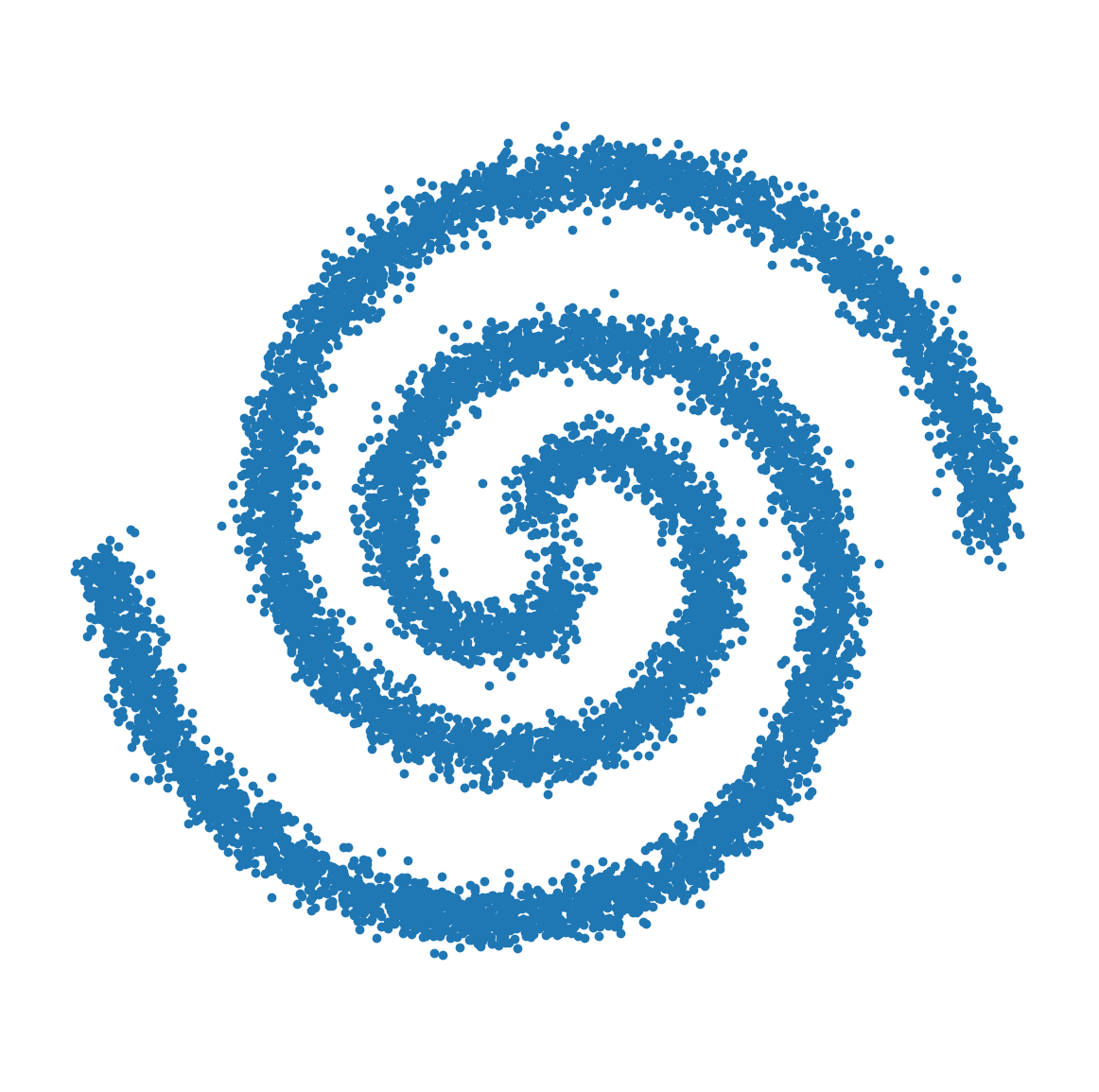}
%     \includegraphics[width=0.23\linewidth]{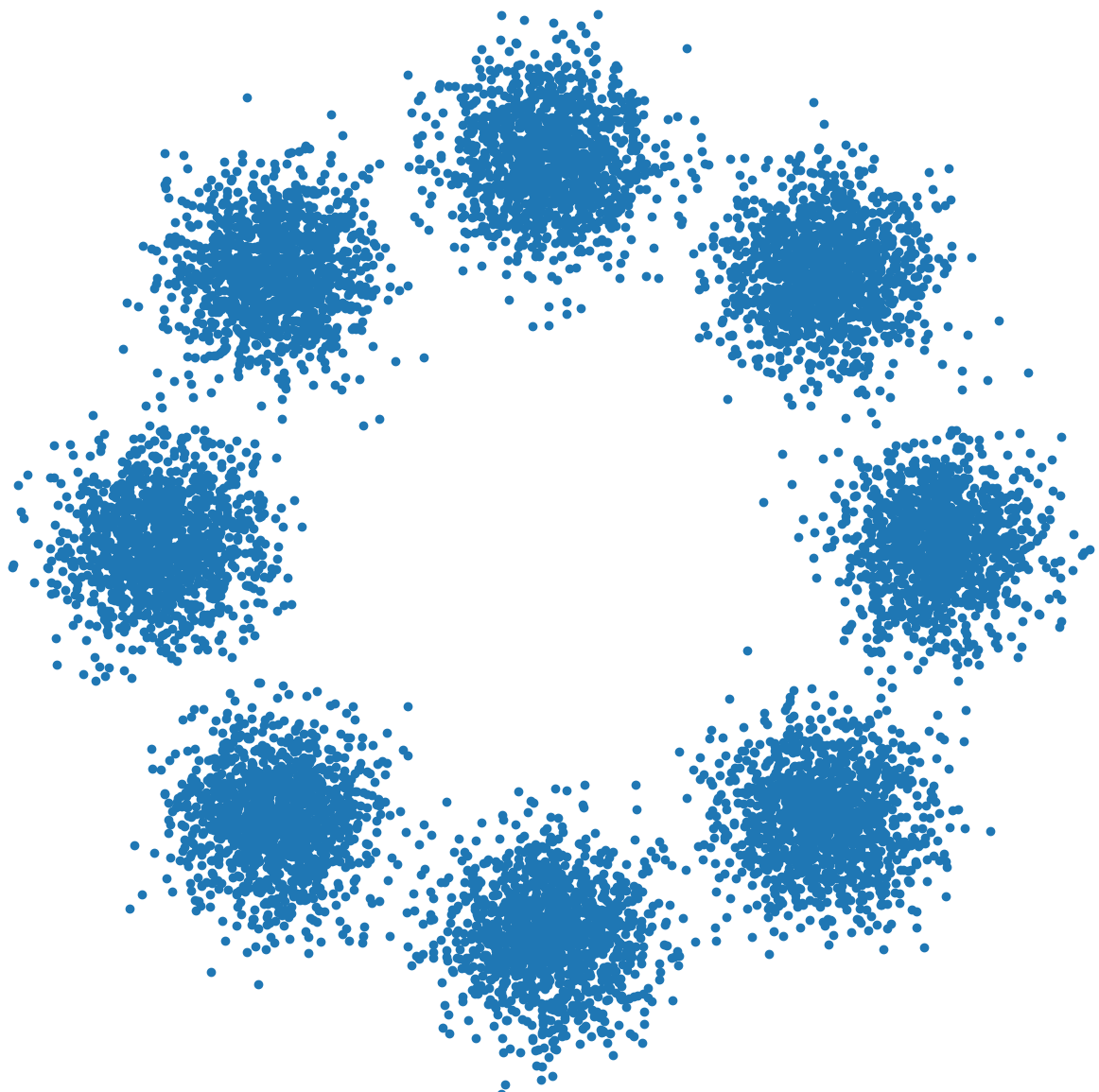}
%     \includegraphics[width=0.23\linewidth]{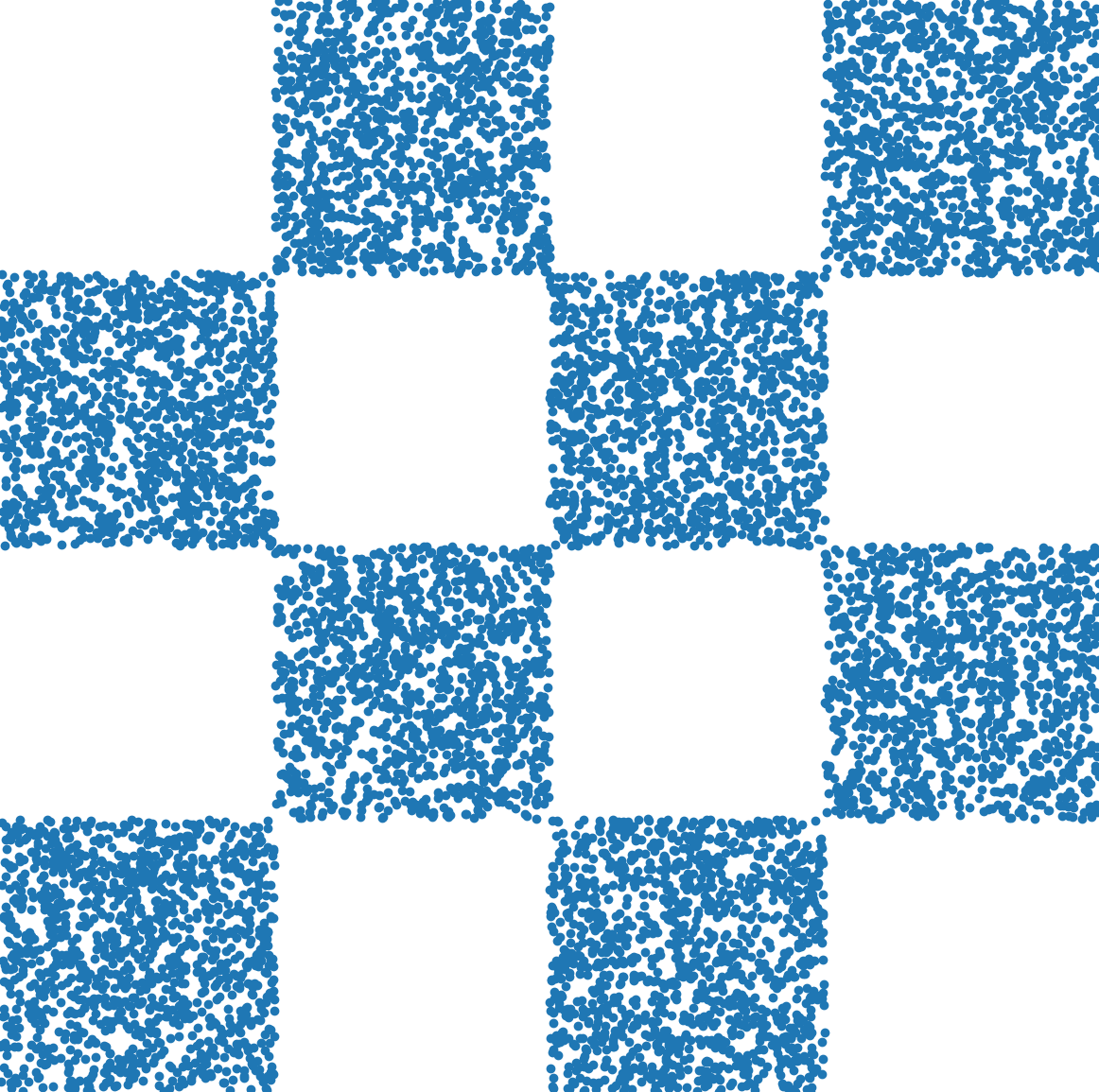}
%     \includegraphics[width=0.23\linewidth]{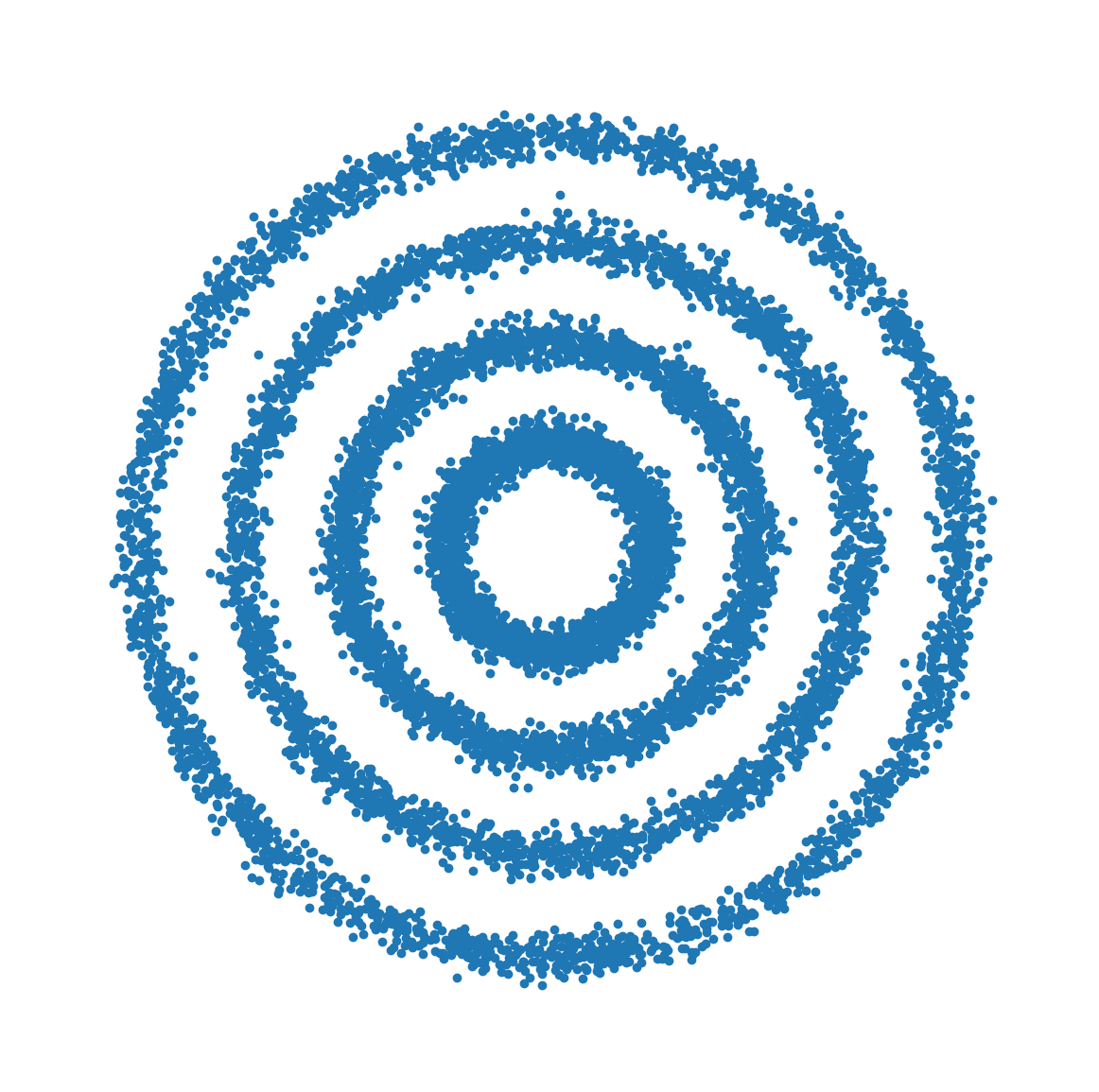}\\
%     \includegraphics[width=0.23\linewidth]{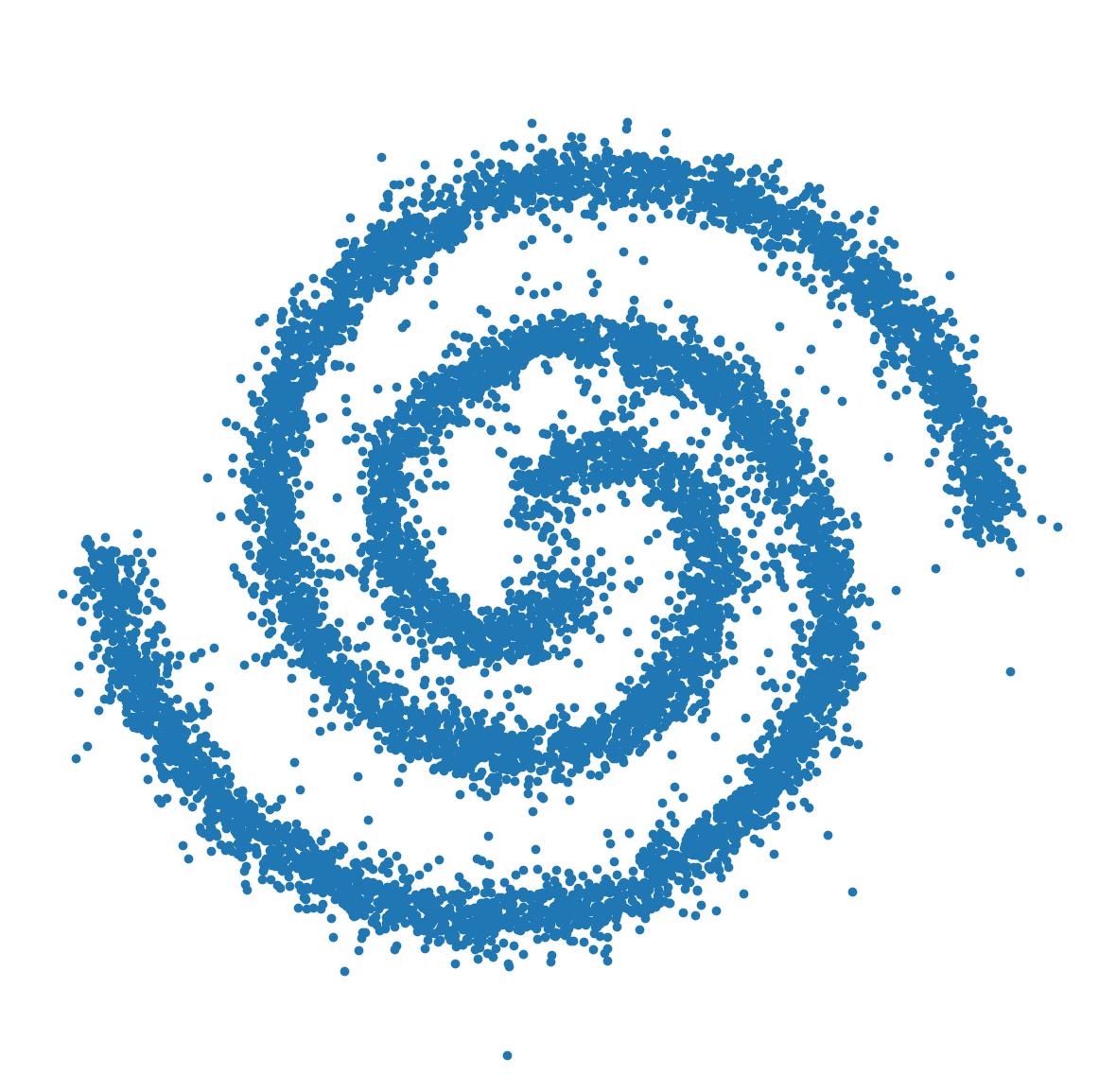}
%     \includegraphics[width=0.23\linewidth]{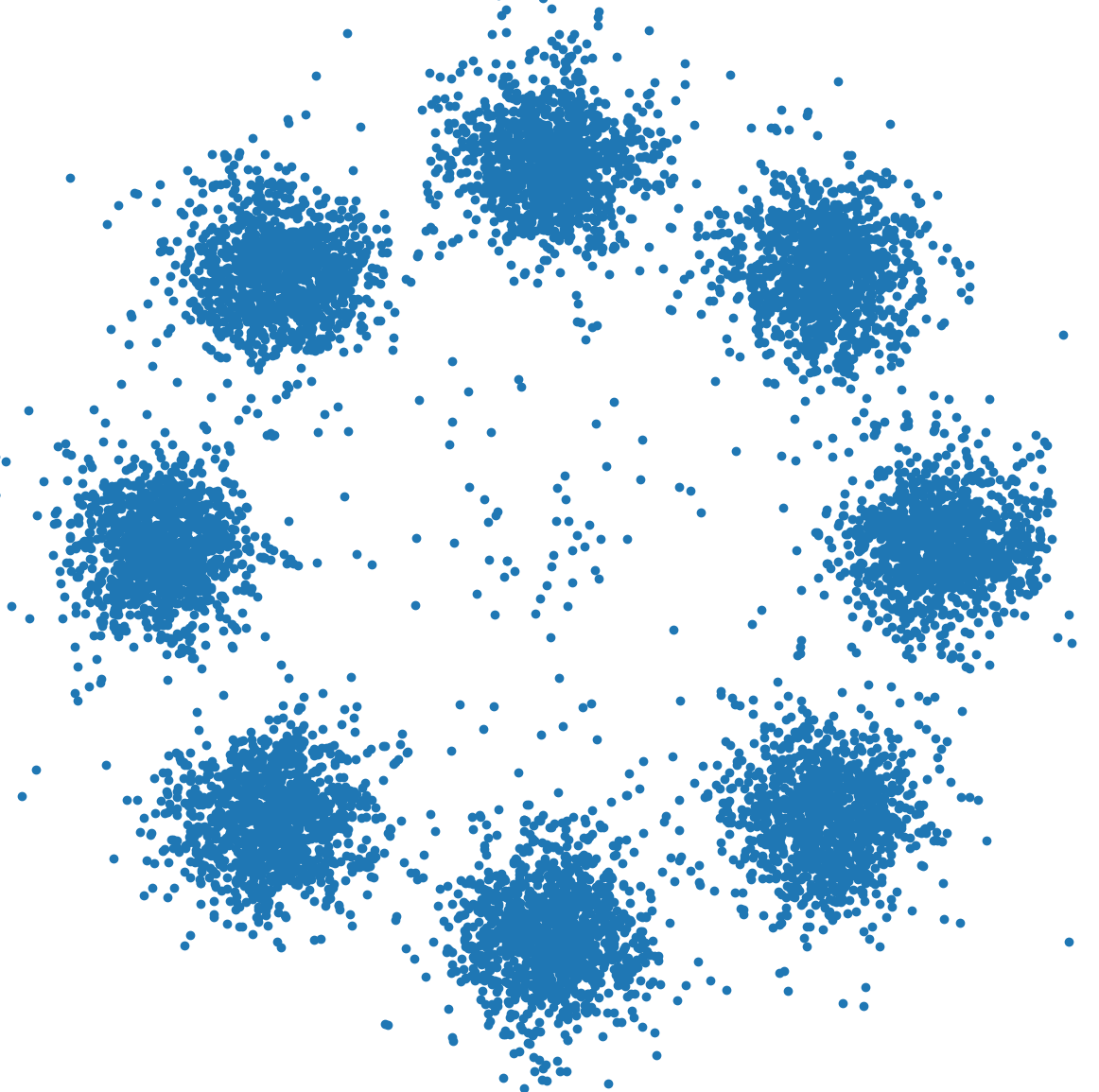}
%     \includegraphics[width=0.23\linewidth]{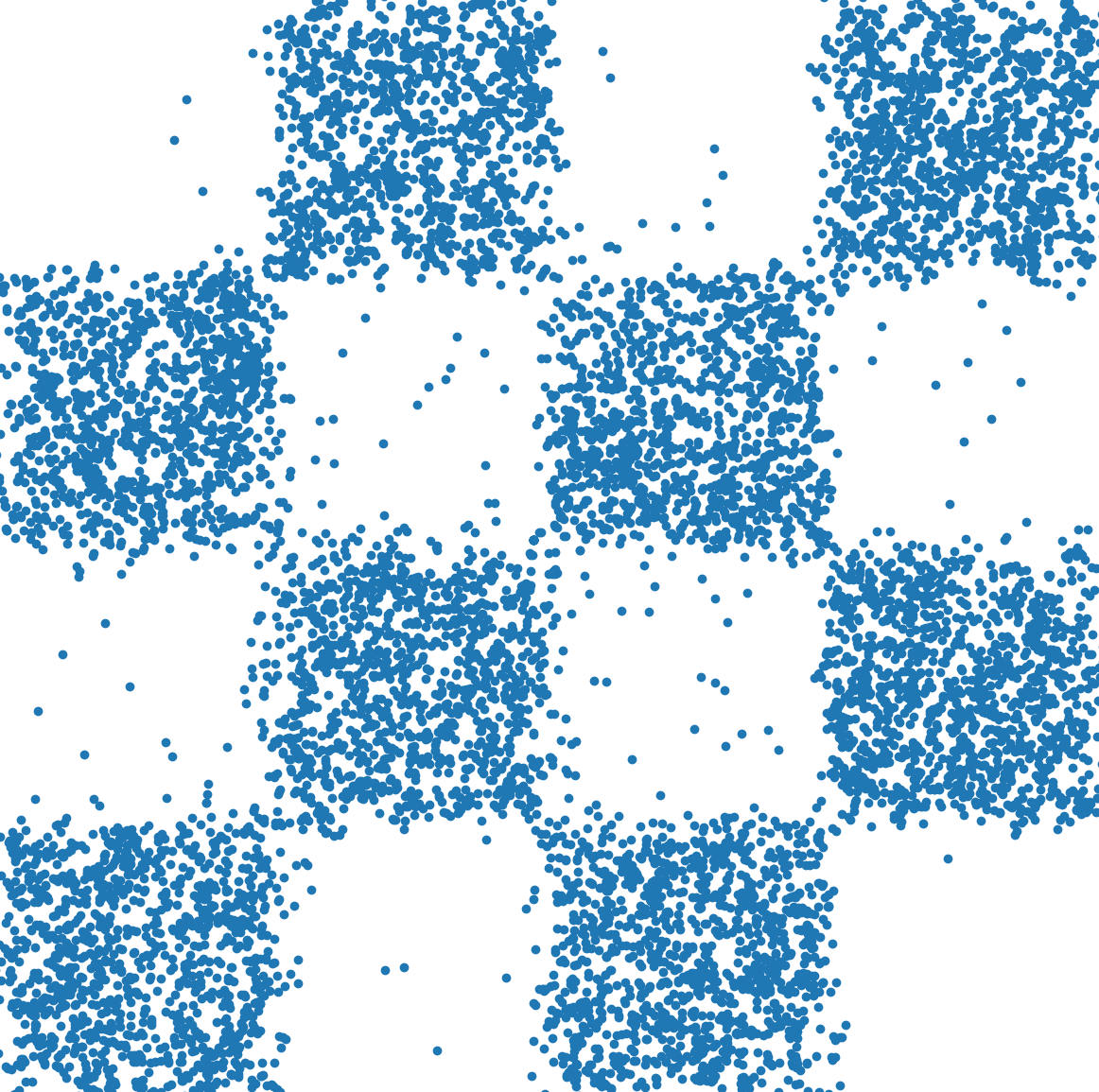}
%     \includegraphics[width=0.23\linewidth]{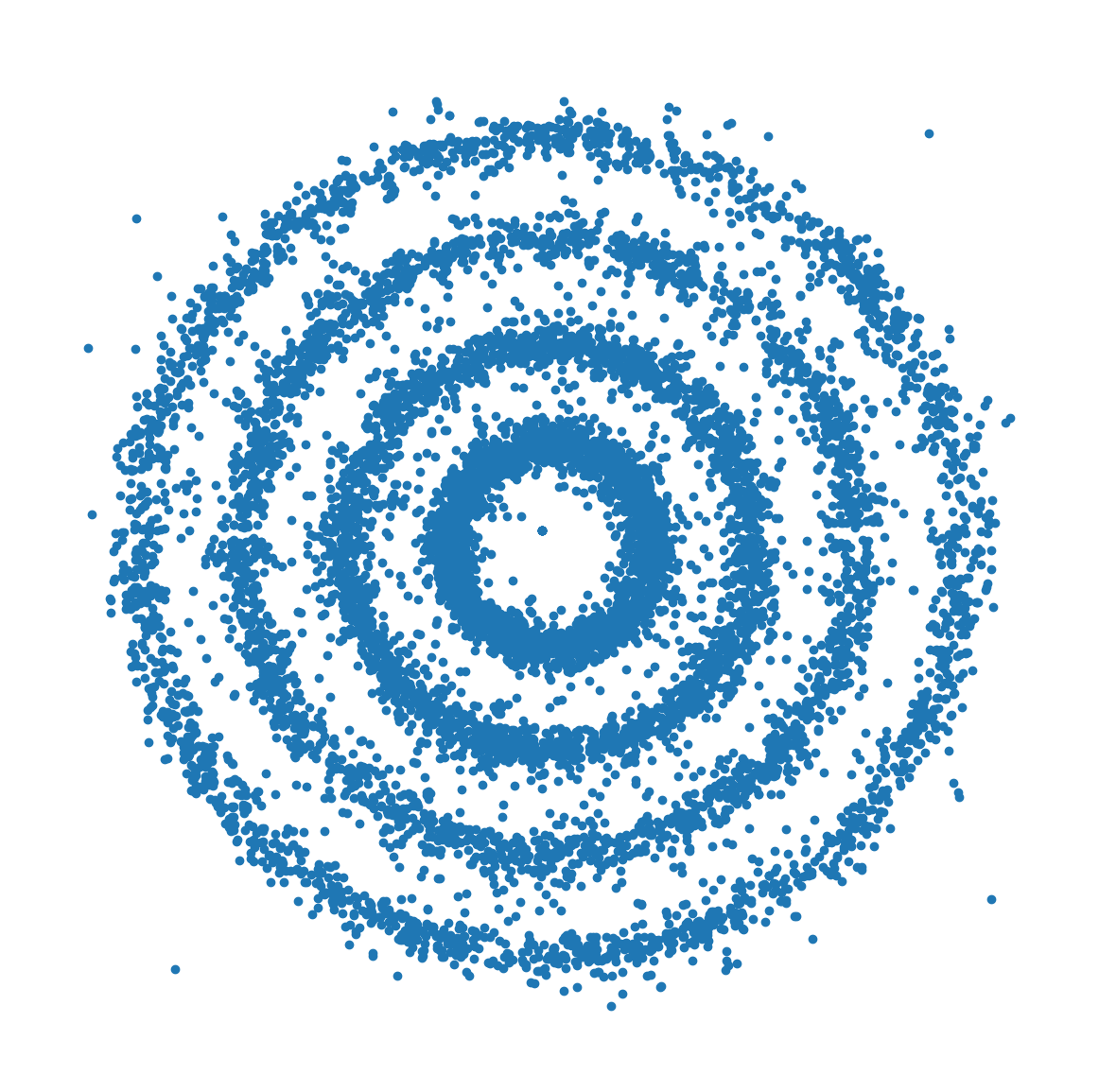}\\
%     \includegraphics[width=0.23\linewidth]{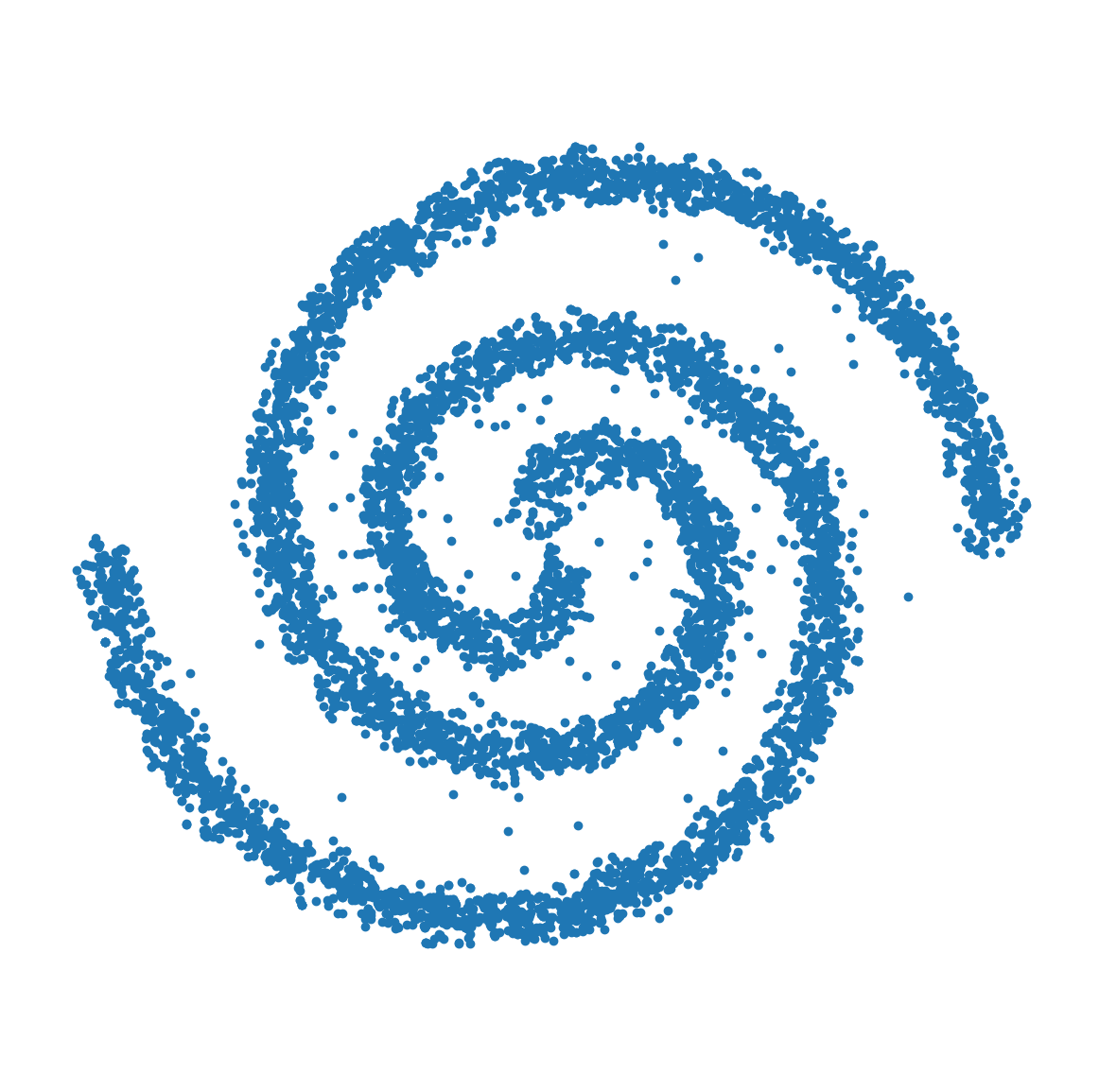}
%     \includegraphics[width=0.23\linewidth]{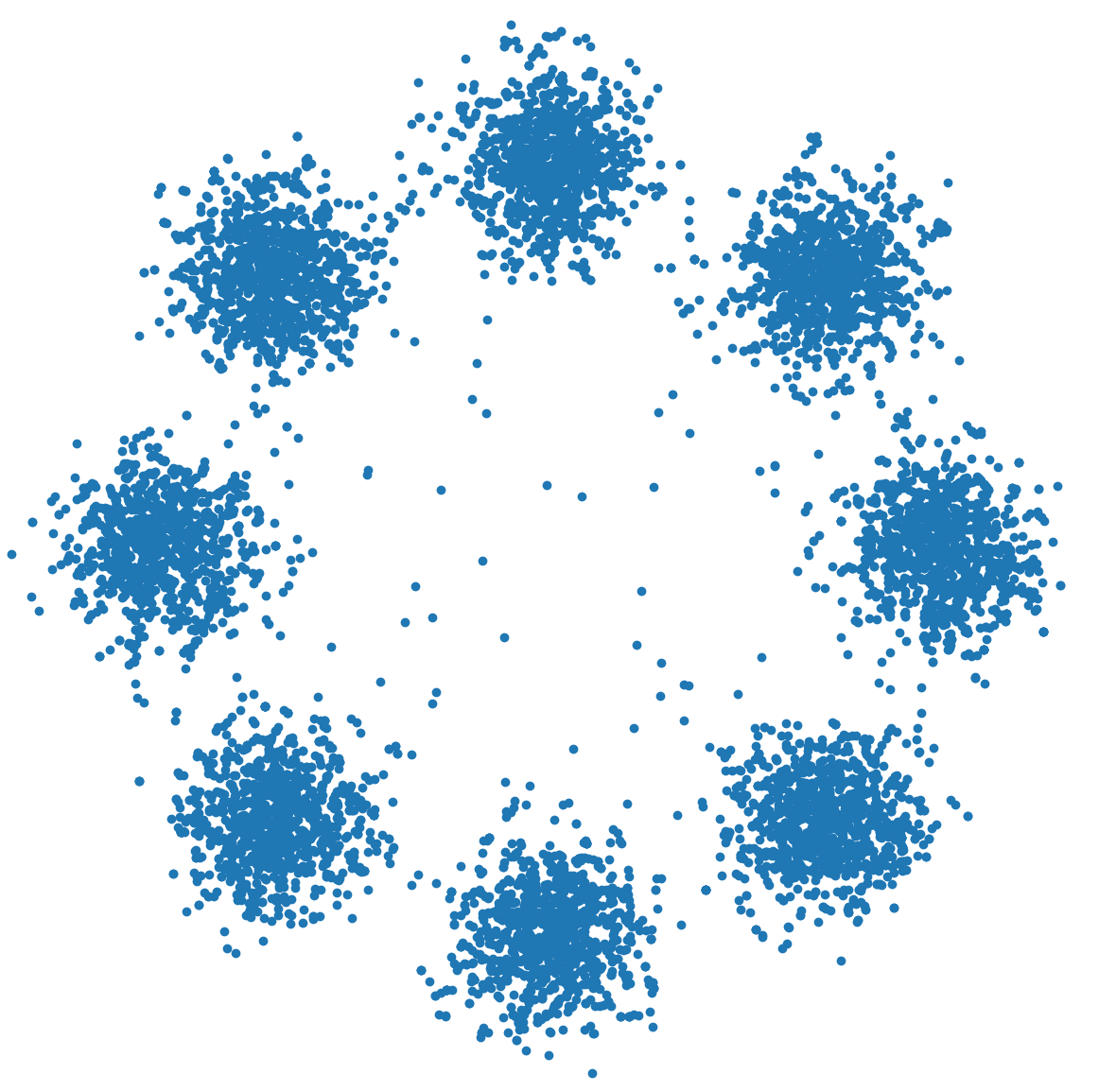}
%     \includegraphics[width=0.23\linewidth]{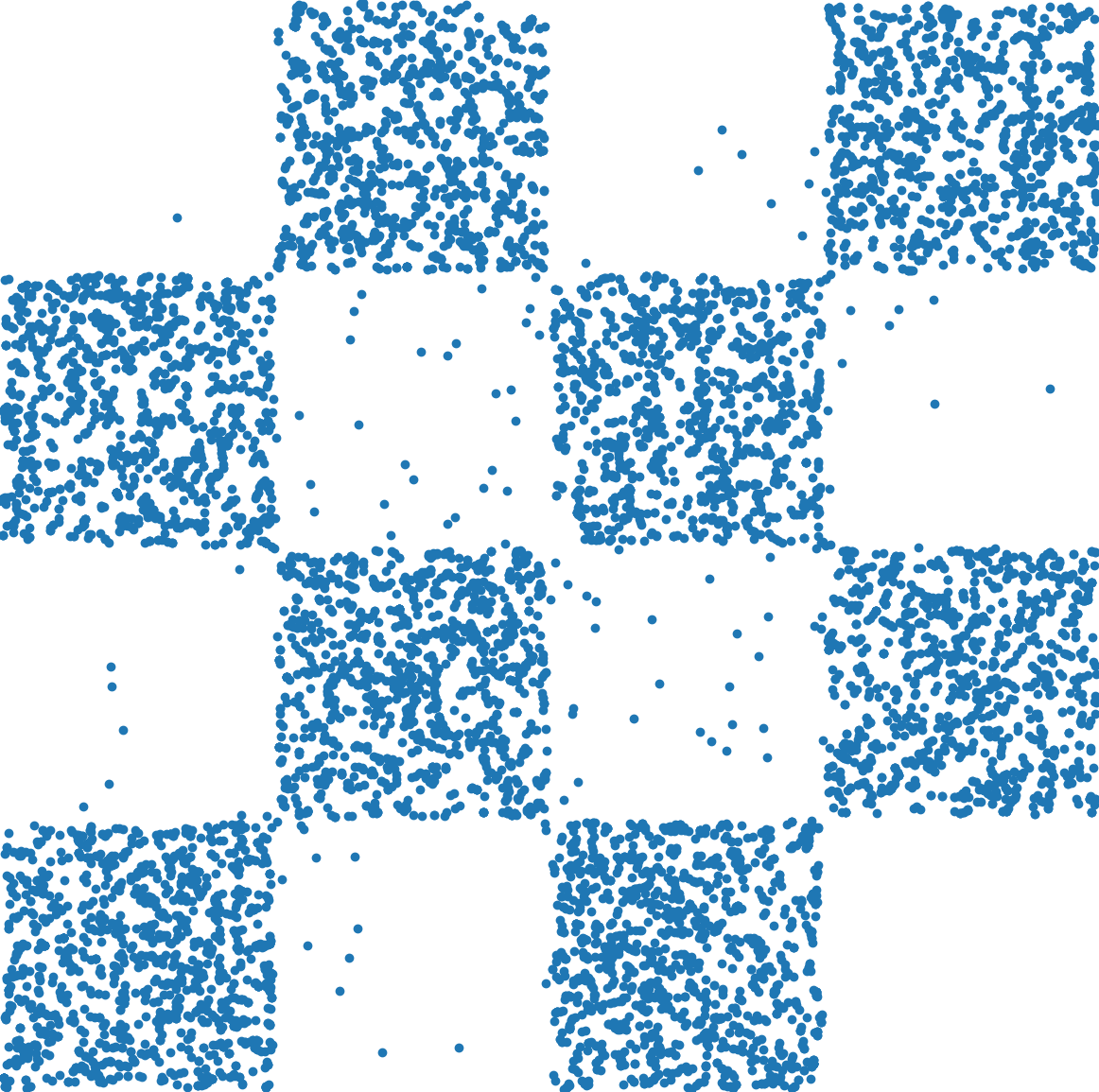}
%     \includegraphics[width=0.23\linewidth]{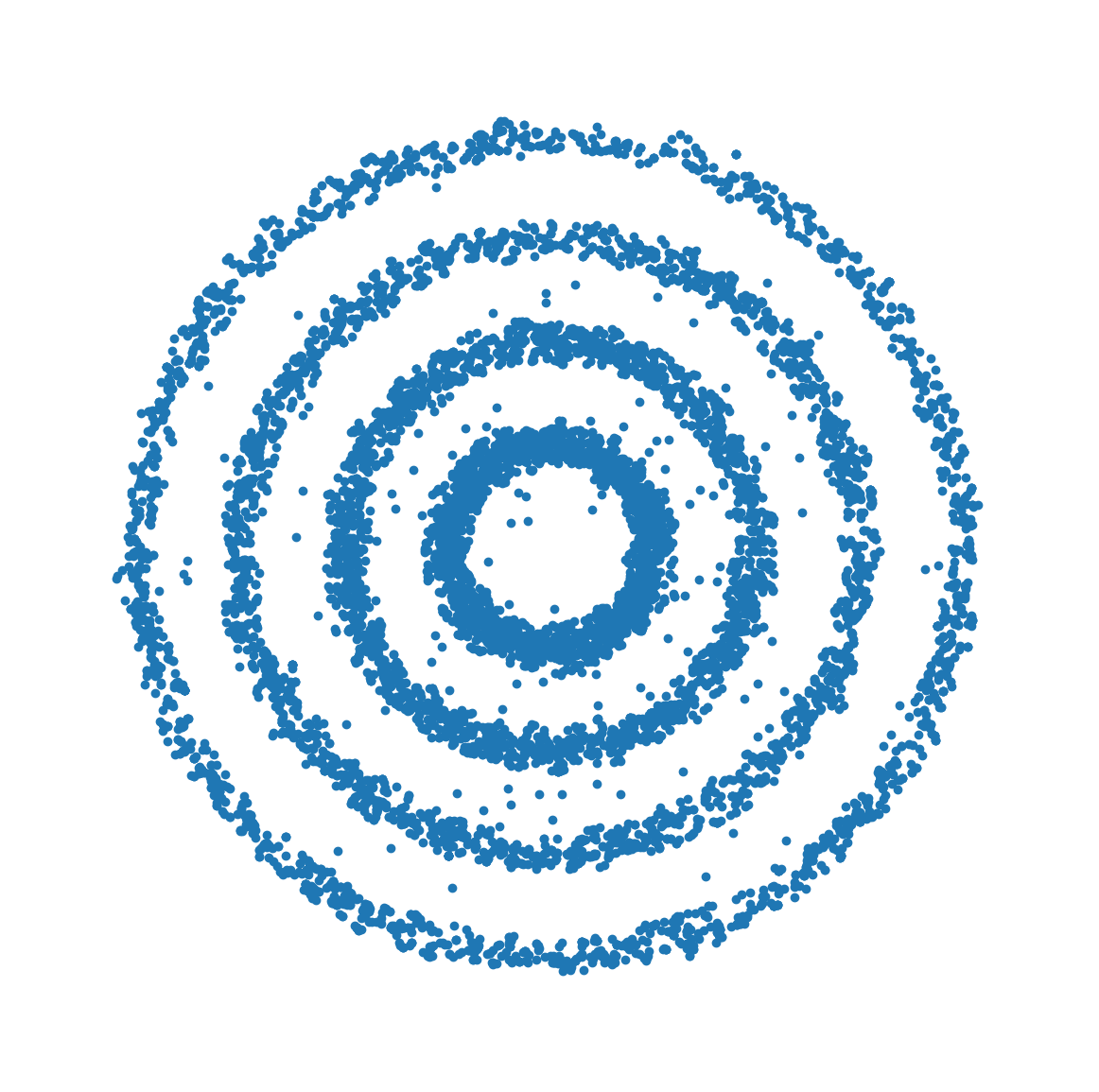}\\
%     \caption{Sample generation results. \textbf{Top:} Data samples \textbf{Middle:} Samples by MDS-based preimage \textbf{Bottom:} Samples by weighted Fr\'{e}chet mean.}
%     \label{fig:samples}
%     \vspace{-1em}
% \end{wrapfigure}

Solving the preimage in kernel-based methods is known to be ill-posed \citep{Mika1999kpca} because the mapping $\psi(\cdot)$ is not necessarily surjective, i.e., a unique preimage $\mathbf{x}^* = \psi^{-1}(\Psi^*), \Psi^* \in \mathcal{H}$ may not exist. Often, an approximate preimage $\psi^{\dagger}(\mathbf{X}, \Psi^*) \approx \psi^{-1}(\Psi^*)$ is constructed instead based on relational properties among the training data in the RKHS. We consider two options in our framework \textbf{(1)} MDS-based method \citep{kwok2004pre,honeine2011preimage}, 
\begin{equation}
\psi_{\textrm{MDS}}^{\dagger}(\mathbf{X}, \Psi^*) = \tfrac{1}{2}(\mathbf{X}'\mathbf{X}'^\top)^{-1}\mathbf{X}'(\textrm{diag}(\mathbf{X}'^\top \mathbf{X}') - \mathbf{d}^\top), \textrm{where}\;\forall i \in [\gamma], \mathbf{d}_i = \Vert l(\mathbf{x}'_i, \cdot) - \Psi^* \Vert_{\mathcal{G}}
\end{equation}
which optimally preserves the distances in RKHS to the preimages in the input space, and \textbf{(2)} weighted Fr\'{e}chet mean \citep{friedman2001elements}, which in Euclidean space takes the form
\begin{equation}
\label{eq:wfm_euc}
\psi_{\textrm{wFM}}^{\dagger}(\mathbf{X}, \Psi^*) = \psi_{\textrm{wFM}}^{\dagger}(\mathbf{X}'; \mathbf{s}) = \mathbf{X}'\mathbf{s}/\Vert \mathbf{s} \Vert_1, \textrm{where}\;\forall i \in [\gamma],\mathbf{s}_i = \langle l(\mathbf{x}'_i, \cdot), \Psi^* \rangle
\end{equation}

\begin{wraptable}{r}{0.5\textwidth}
    \vspace{-2em}
    \begin{minipage}{0.5\textwidth}
        \begin{algorithm}[H]
        \footnotesize
        \caption{Sample Generation from kPF}
        \label{alg:gen_algo}
        \begin{algorithmic}[1]
        %   \begin{flushleft}
        
          \STATE {\bfseries Input:} 
          %\begin{compactitem}
            Training data $\mathbf{X} = \{\mathbf{x}_1, \dots, \mathbf{x}_n\}$, Optional autoencoder $(E, D)$, input/output kernels ($k, l$), neighborhood size $\gamma$\\
          \STATE {\bfseries Training}
          \STATE \quad $\mathbf{X} = \left(E\left(\mathbf{x}_1\right), \dots, E\left(\mathbf{x}_n\right)\right)$ if $E$ is provided
          \STATE \quad Sample $\{\mathbf{z}_i\}_{i \in [n]} \sim P_Z$ independently \\
          \STATE \quad Construct $L, K \in R^{n \times n}$ s.t.\\
                 \qquad $L_{ij} = l(\mathbf{x}_i, \mathbf{x}_j), K_{ij} = k\left(\mathbf{z}_i, \mathbf{z}_j\right)$\\
          \STATE \quad $K_{\textrm{inv}} = (K + \lambda n I)^{-1}$ or $K^{\dagger}$\\
          \STATE {\bfseries Inference}
          \STATE \quad Generate new prior sample $\mathbf{z}^* \sim P_Z$\\
          \STATE \quad$\mathbf{s} = L\cdot K_{\textrm{inv}}[k(\mathbf{z}_1, \mathbf{z}^*) \dots k(\mathbf{z}_n, \mathbf{z}^*)]^\top$\\
          \STATE \quad $ind = \textrm{argsort}(\mathbf{s})[-\gamma:]$\\
        
          \STATE \quad$\mathbf{x}^* = \psi_{\textrm{wFM}}^{\dagger}\left(\mathbf{X}[ind]; \mathbf{s}[ind] \right)$.
          \STATE {\bfseries Output} $D(\mathbf{x}^*)$ if $D$ is provided else $\mathbf{x}^*$
        \end{algorithmic}
        \end{algorithm}
    \end{minipage}
    \vspace{-1em}
\end{wraptable}
where $\mathbf{X}'$ a neighborhood of $\gamma$ training samples based on pairwise distance or similarity in RKHS, following \citep{kwok2004pre}. The weighted Fr\'{e}chet mean preimage uses the inner product weights $\langle \Psi^*, \psi(\mathbf{x}_i) \rangle$ as measures of similarities to interpolate training samples. On the toy data (as in Fig. \ref{fig:density}), weighted Fr\'{e}chet mean produces fewer samples that deviate from the true distribution and is easier to compute. Based on this observation, we use the weighted Fr\'{e}chet mean as the preimage module for all experiments that requires samples, while acknowledging that other preimage methods can also be substituted in.

% \begin{wrapfigure}{r}{0.5\textwidth}

% \end{wrapfigure}

With all the ingredients in hand, we now present an algorithm for sample generation using the kPF operator in Alg. \ref{alg:gen_algo}. The idea is simple yet powerful: at training time, we construct the empirical kPF operator $\hat{\mathcal{P}}_{\mathcal{E}}$ using the training data $\{\mathbf{x}_i\}_{i \in [s]}$ and samples of the known prior $\{\mathbf{z}_i\}_{\i \in [n]}$. At test time, we will transfer new points sampled from $P_Z$ to feature maps in $\mathcal{H}$, and construct their preimages as the generated output samples.

\subsection{Image generation} Image generation is a common application for generative models \citep{goodfellow2014gan,dinh2017RealNVP}. 
While our proposal is not image specific, constructing sample preimages in a high dimensional space with limited training samples can be challenging, 
since the space of images is usually not dense in a reasonably sized neighborhood. However, empirically images often lie near a low dimensional manifold in the ambient space \citep{Seung2000manifold}, and one may utilize an autoencoder (AE) $(E, D)$ to embed the images onto a latent space that represents coordinates on a learned manifold. If the learned manifold lies close to the true manifold, we can learn densities on the manifold directly \citep{dai2018diagnosing}. 

Therefore, for image generation tasks, the training data is first projected onto the latent space of a pretrained AE. Then, the operator will be constructed using the projected latent representations, and samples will be mapped back to image space with the decoder of AE. Our setup can be viewed analogously to other generative methods based on so called ``\textit{ex-post}'' density estimation of latent variables \citep{Ghosh2020From}. We also restrict the AE latent space to a hypersphere $\mathbf{S}^{n- 1}$ to ensure that \begin{inparaenum}[\bfseries (a)]
\item $k(\cdot, \cdot)$ and $l(\cdot, \cdot)$ are bounded and \item the space is geodesically convex and complete, which is required by the preimage computation. To compute the weighted Fr\'{e}chet mean on a hypersphere, we adopt the recursive algorithm in \citet{chakraborty2015recursive} (see appendix \ref{appdx:wfm_sphere} for details).
\end{inparaenum}

\begin{figure}[t]
    \centering
    \begin{tabular}{c|c}
        \setlength{\tabcolsep}{1pt}
         \begin{tabular}{m{1cm} m{1cm} m{1cm} m{1cm} m{1cm}}
            Data &
            \includegraphics[width=\linewidth]{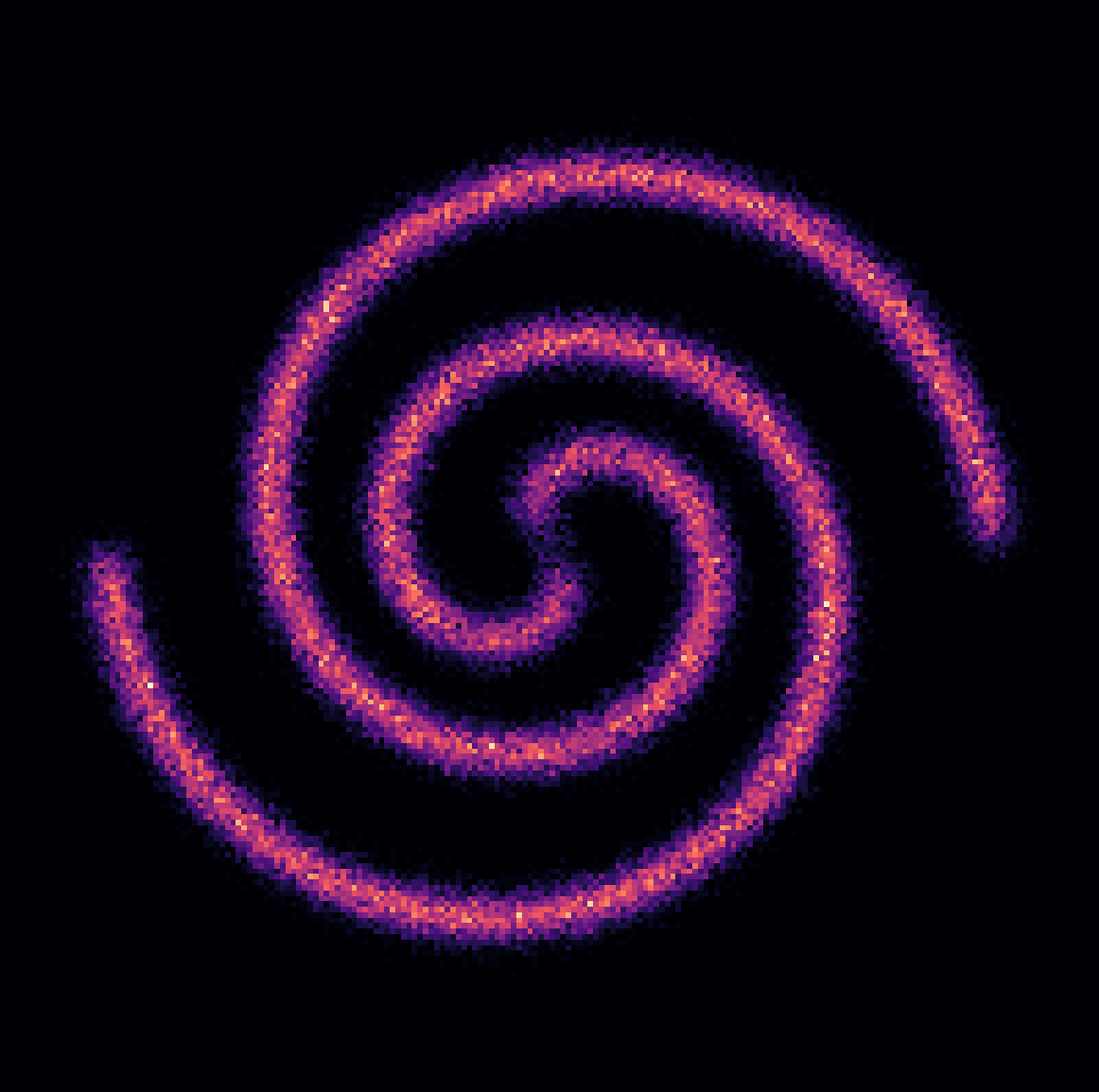}&
            \includegraphics[width=\linewidth]{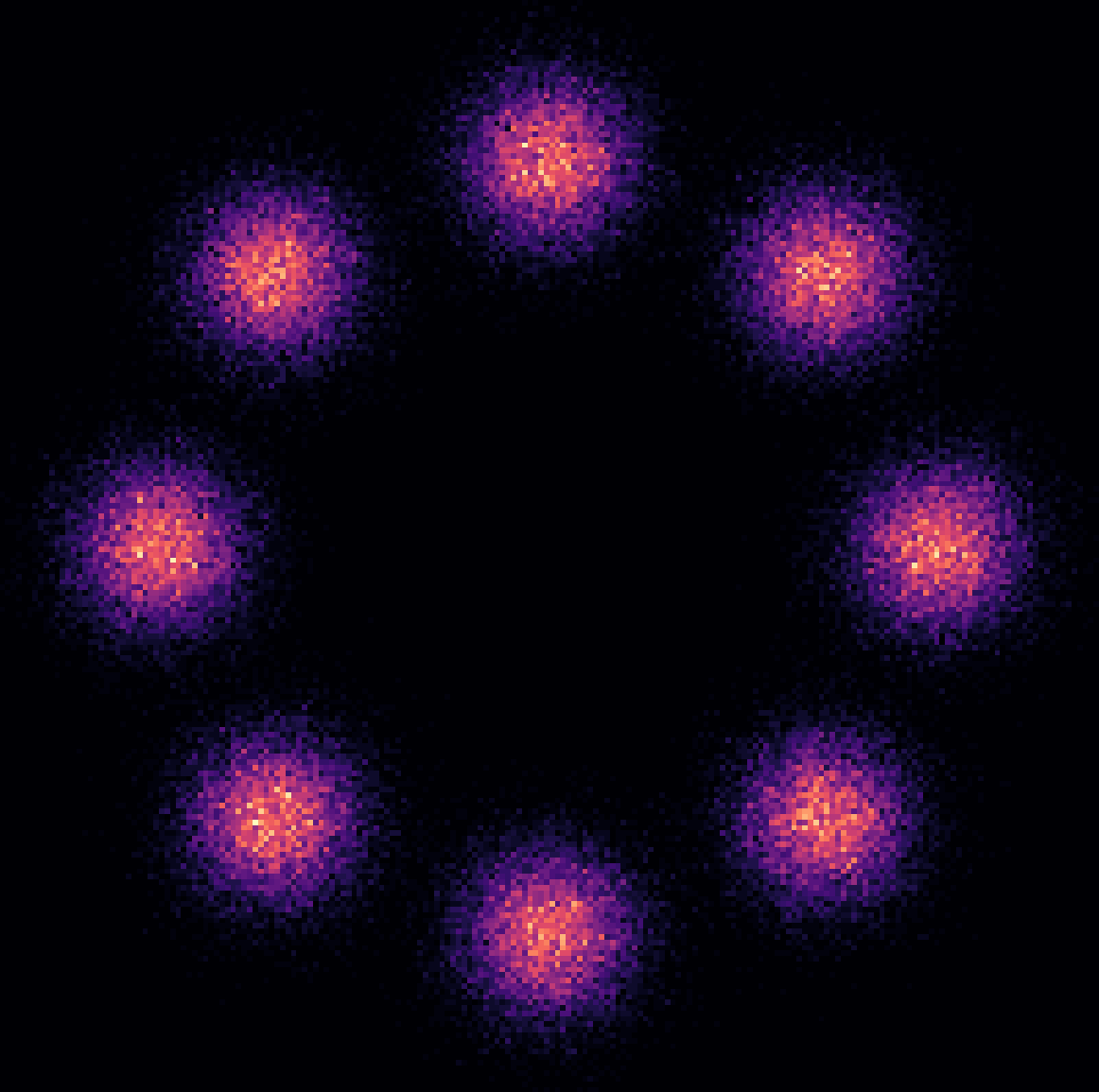}&
            \includegraphics[width=\linewidth]{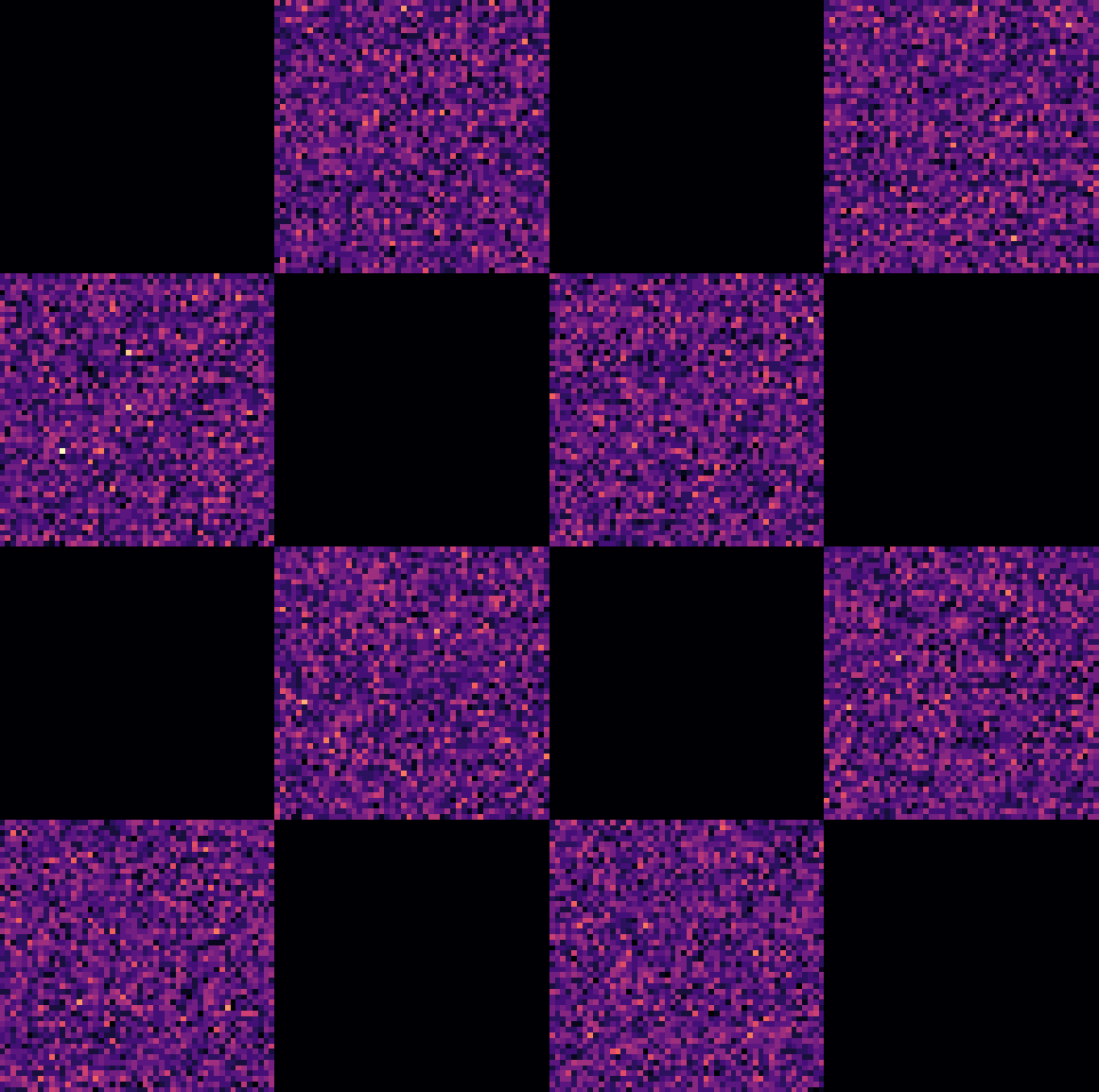}&
            \includegraphics[width=\linewidth]{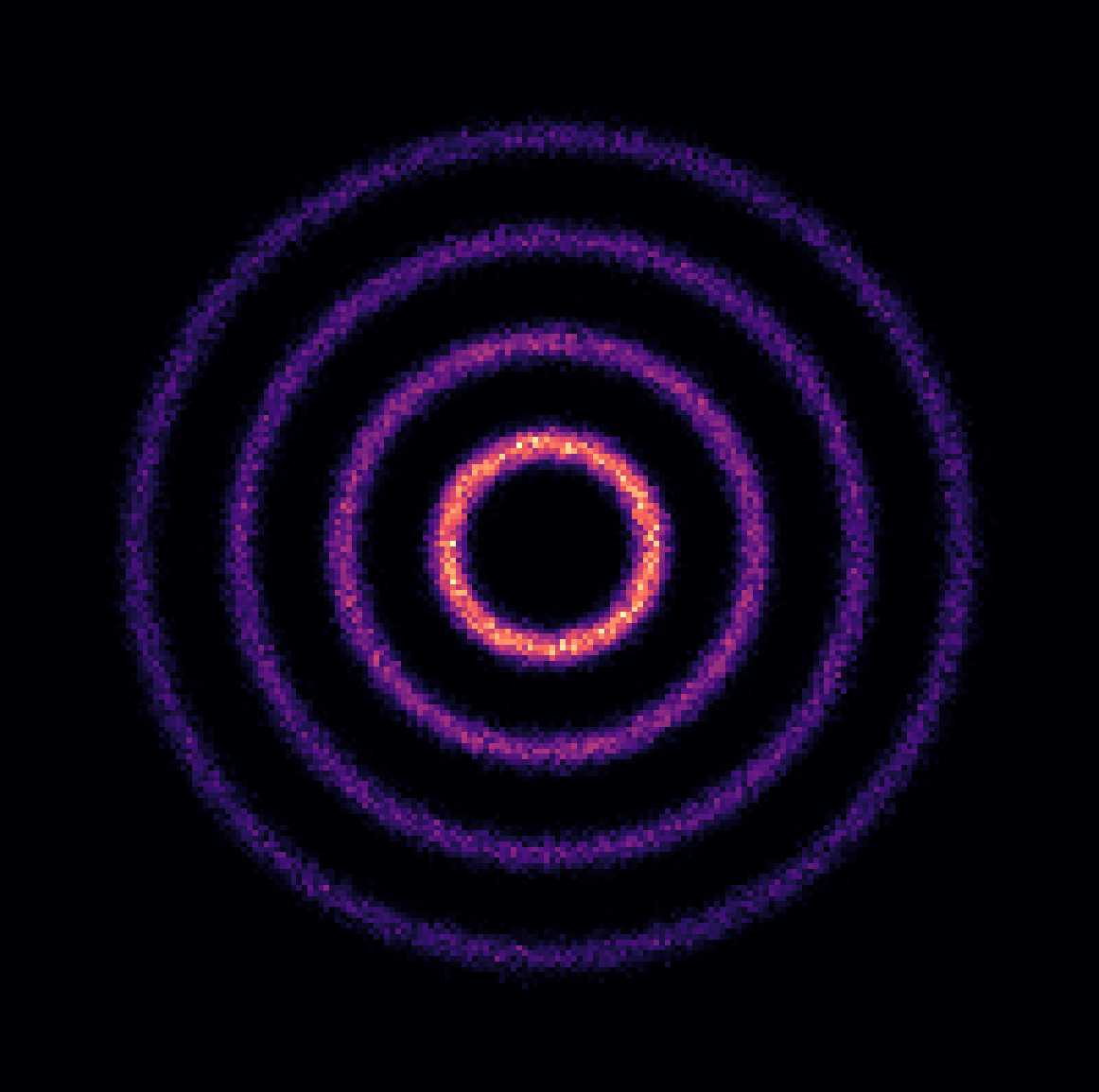}\\
            GMM &
            \includegraphics[width=\linewidth]{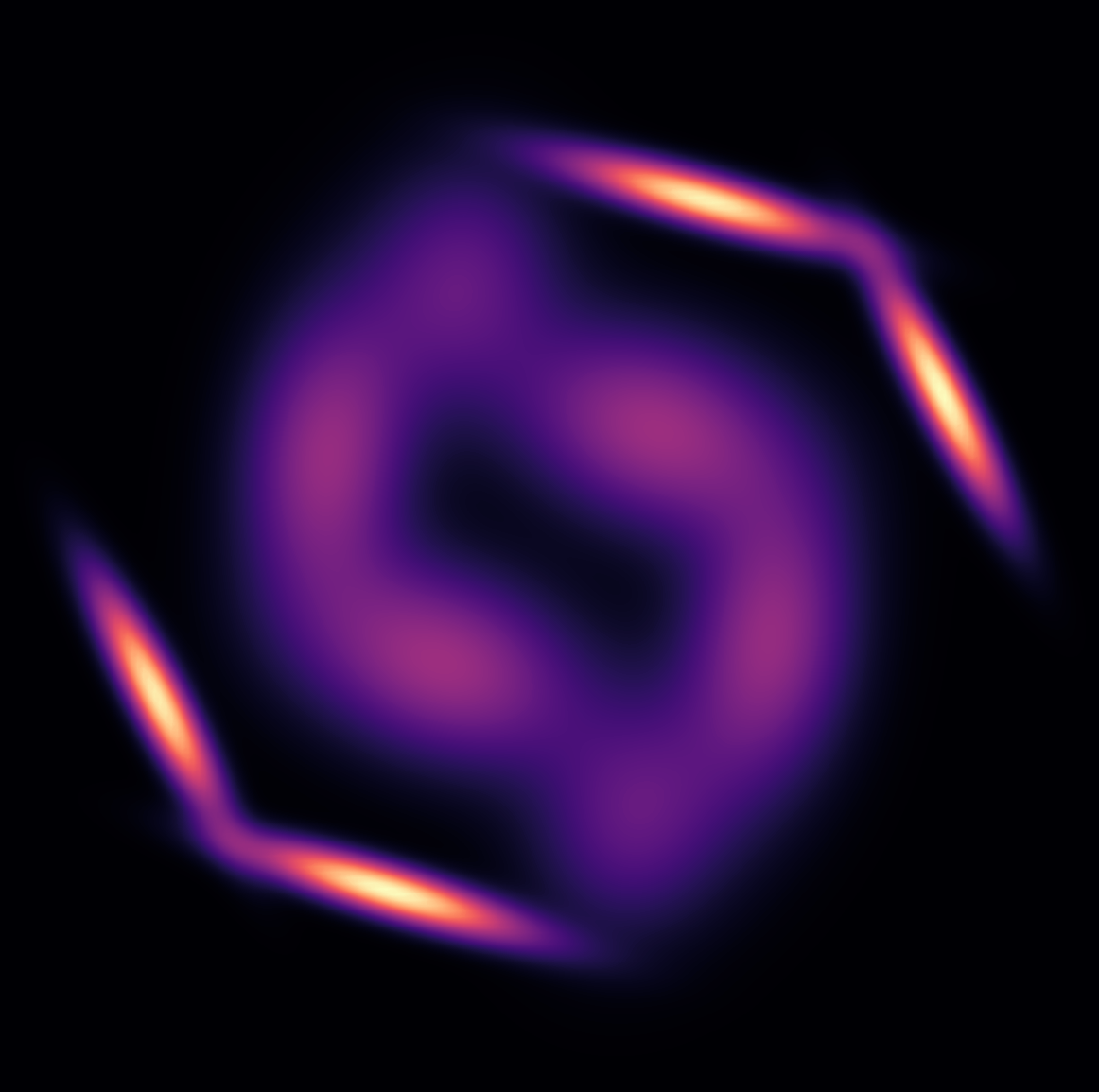}&
            \includegraphics[width=\linewidth]{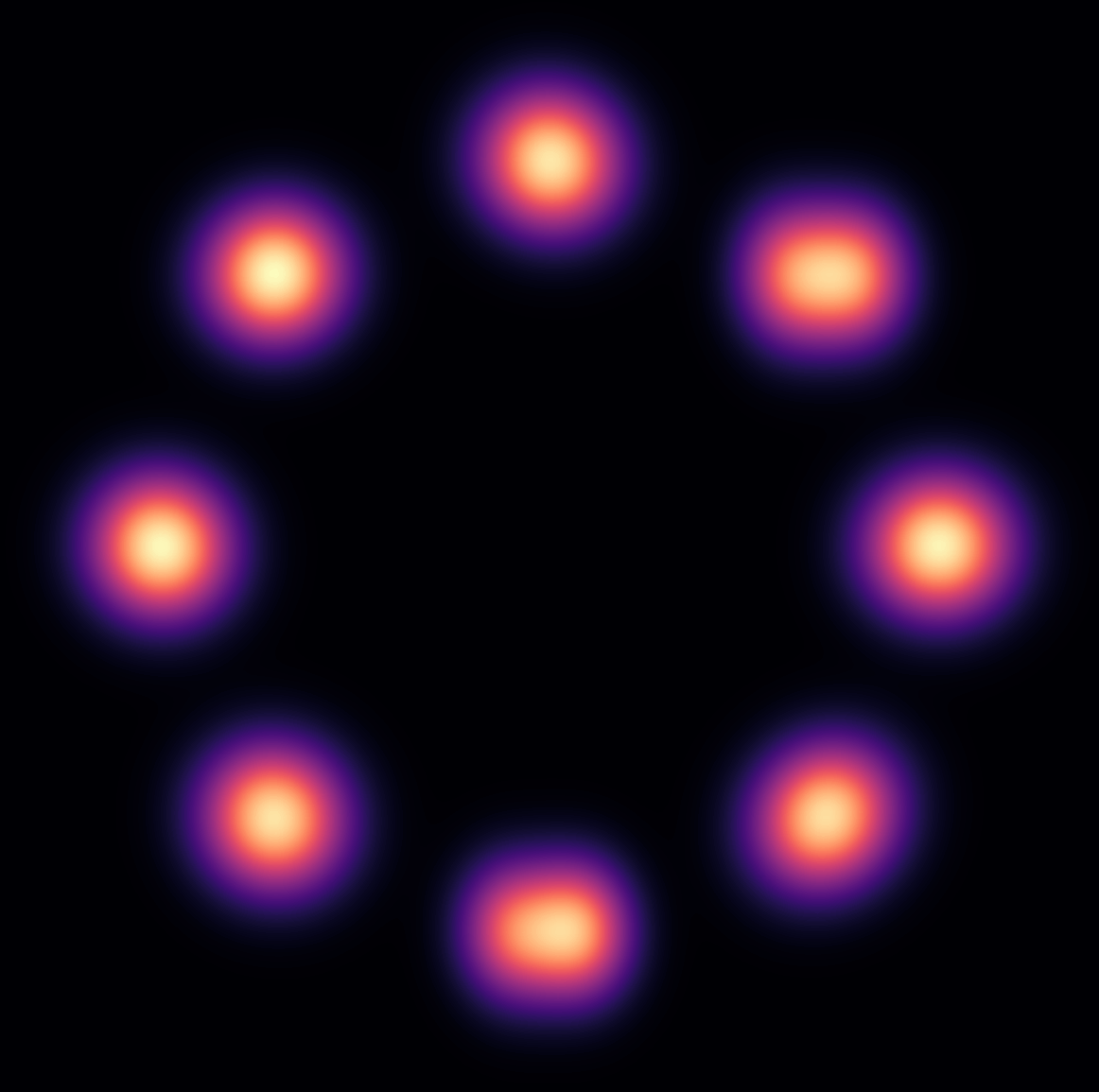}&
            \includegraphics[width=\linewidth]{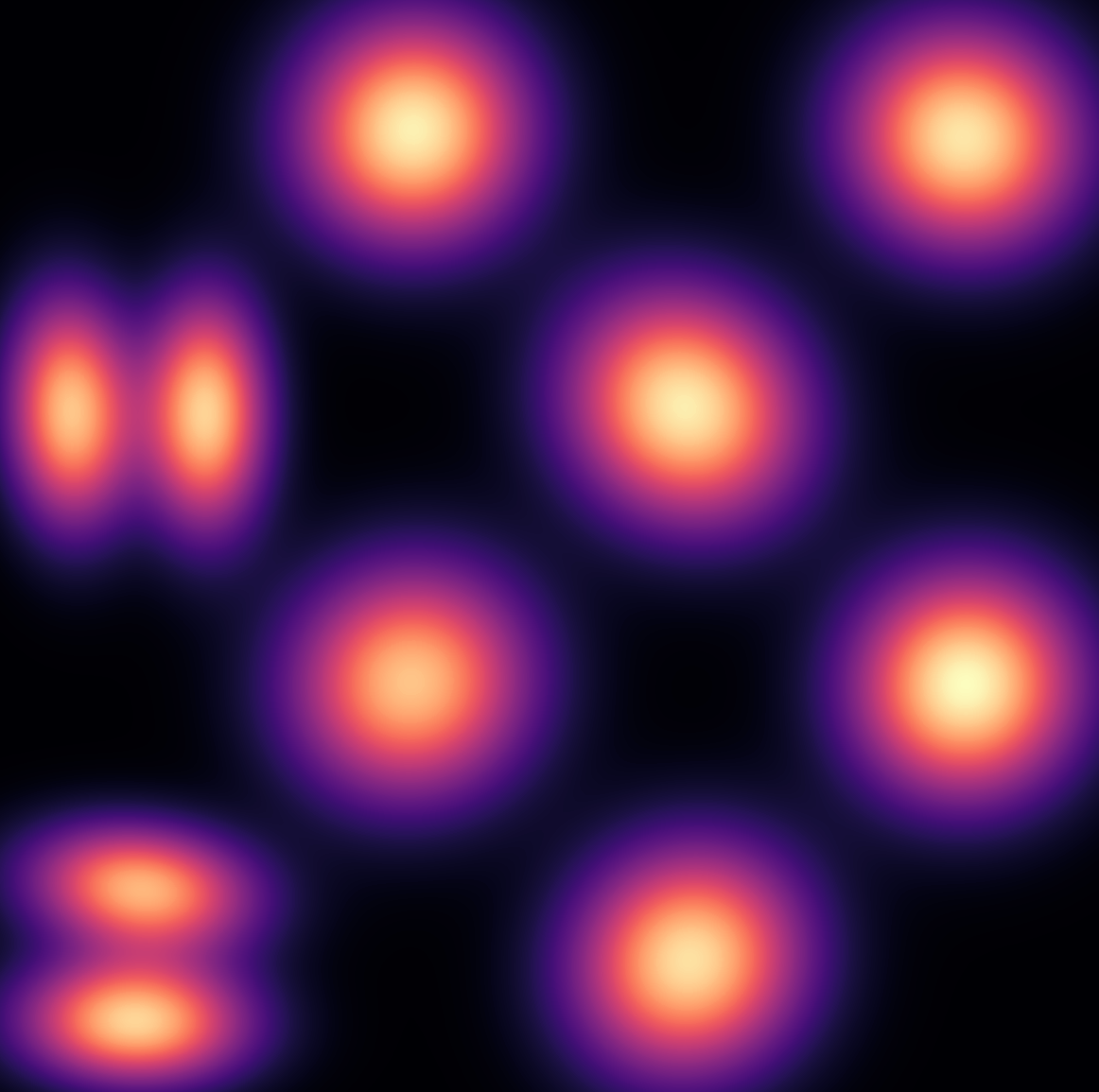}&
            \includegraphics[width=\linewidth]{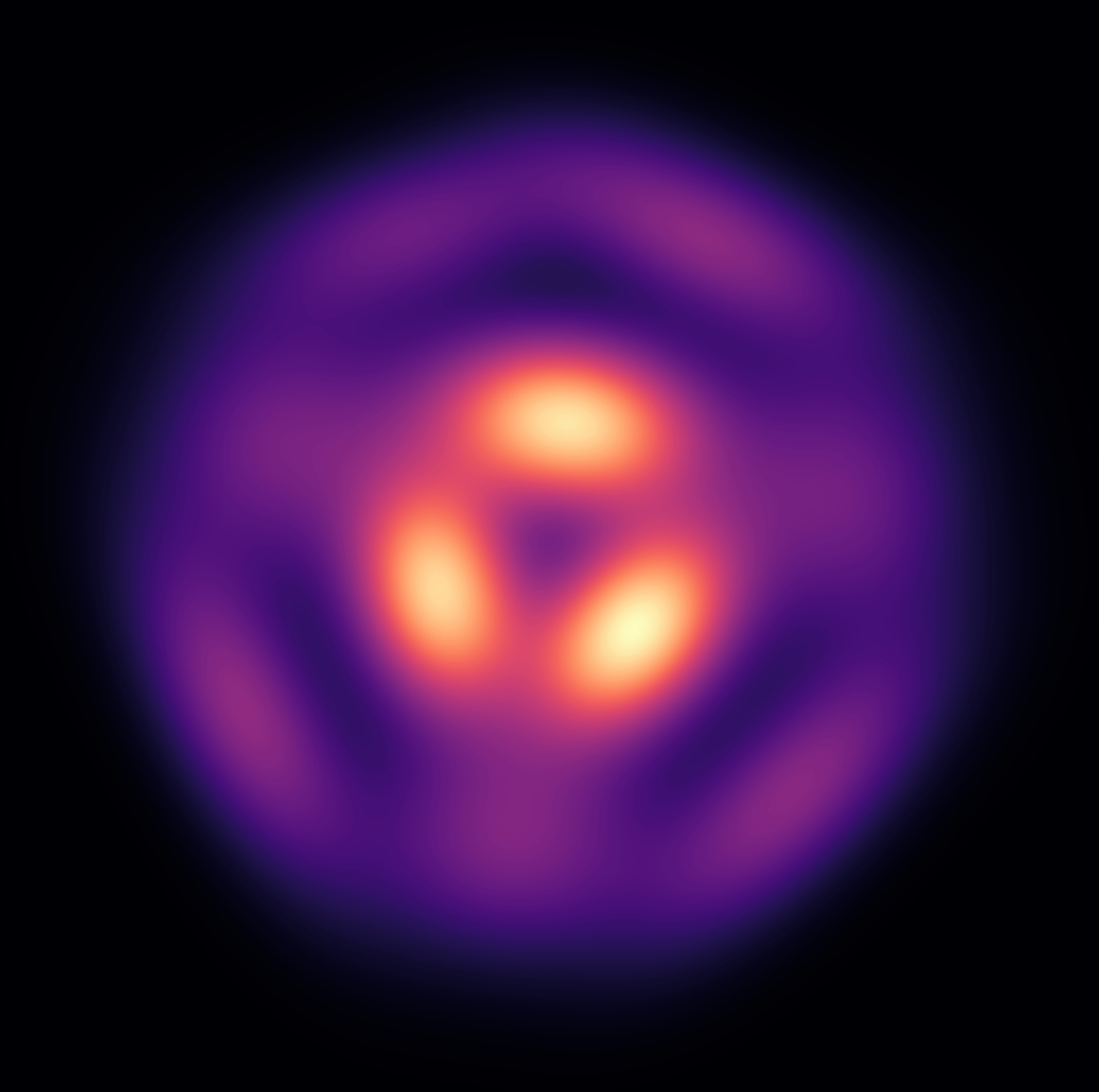}\\
            Glow &
            \includegraphics[width=\linewidth]{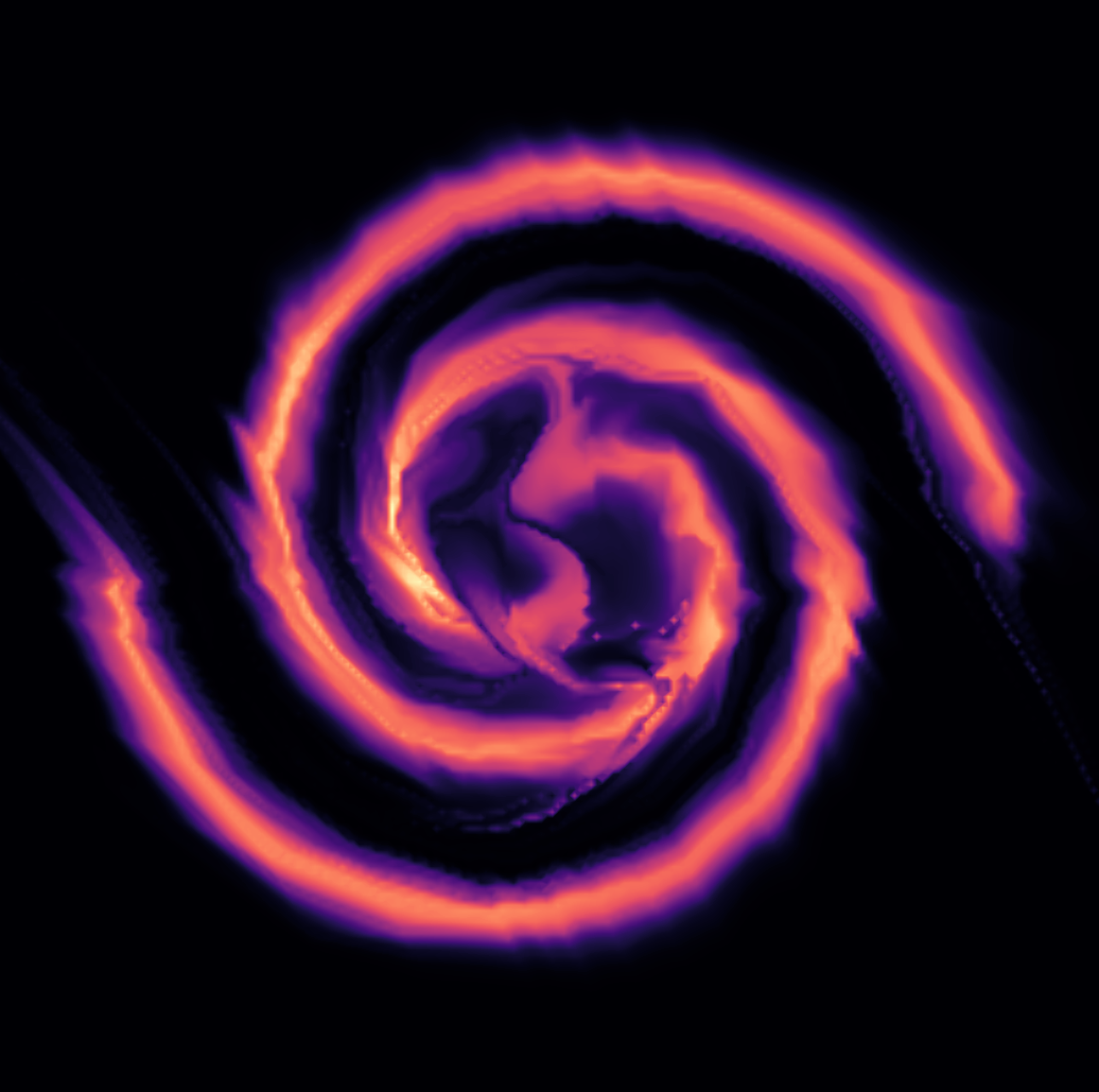}&
            \includegraphics[width=\linewidth]{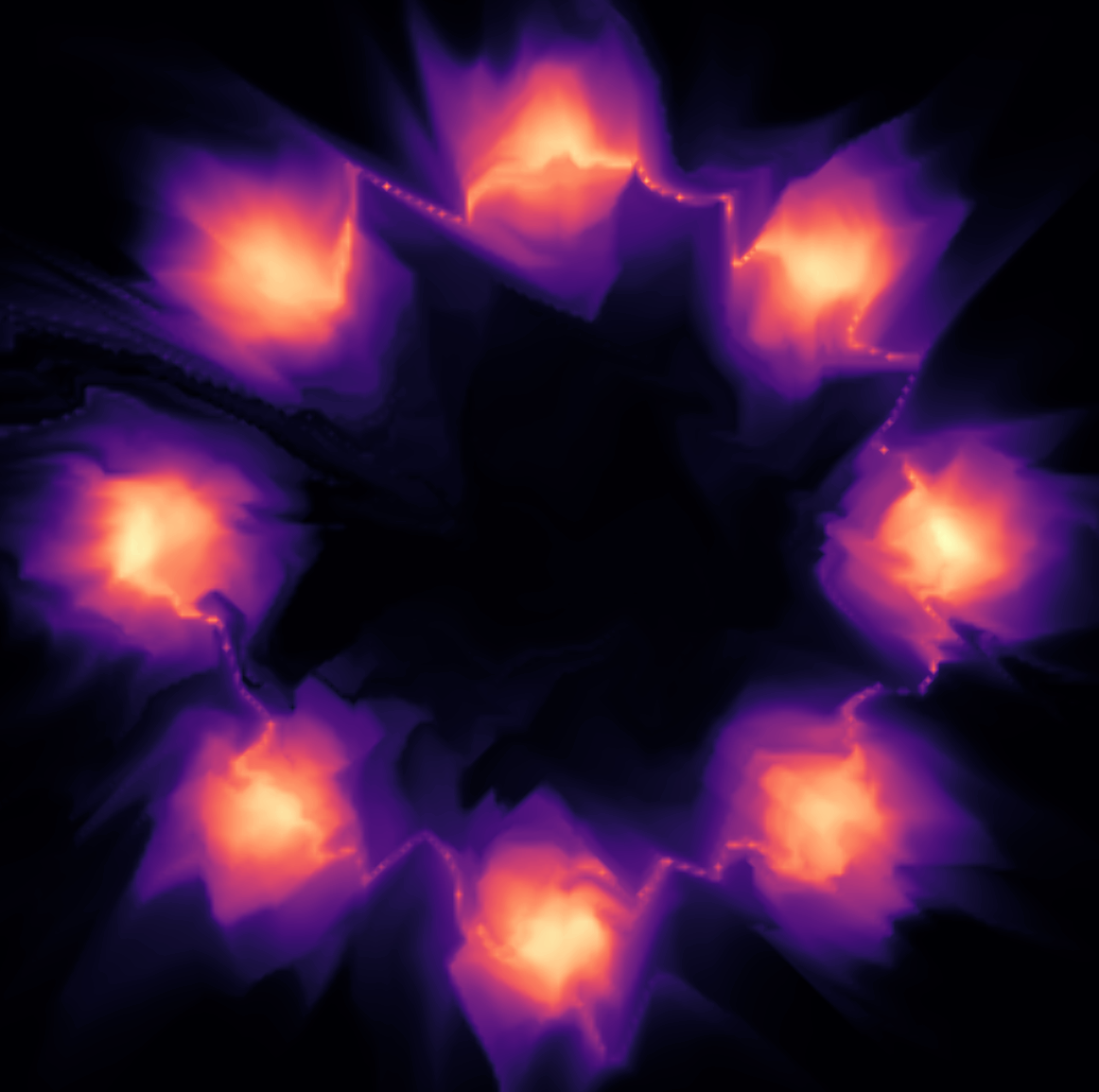}&
            \includegraphics[width=\linewidth]{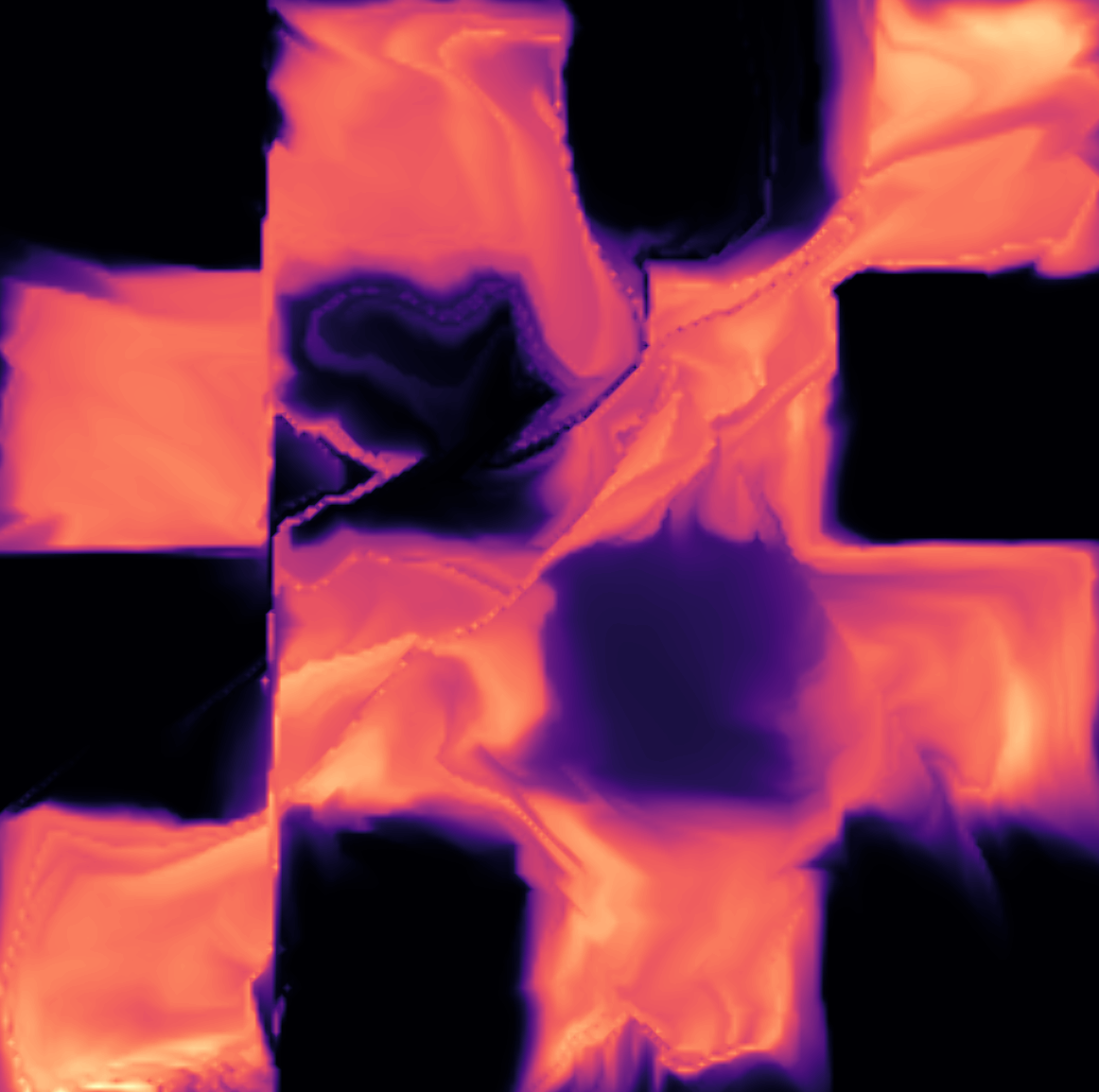}&
            \includegraphics[width=\linewidth]{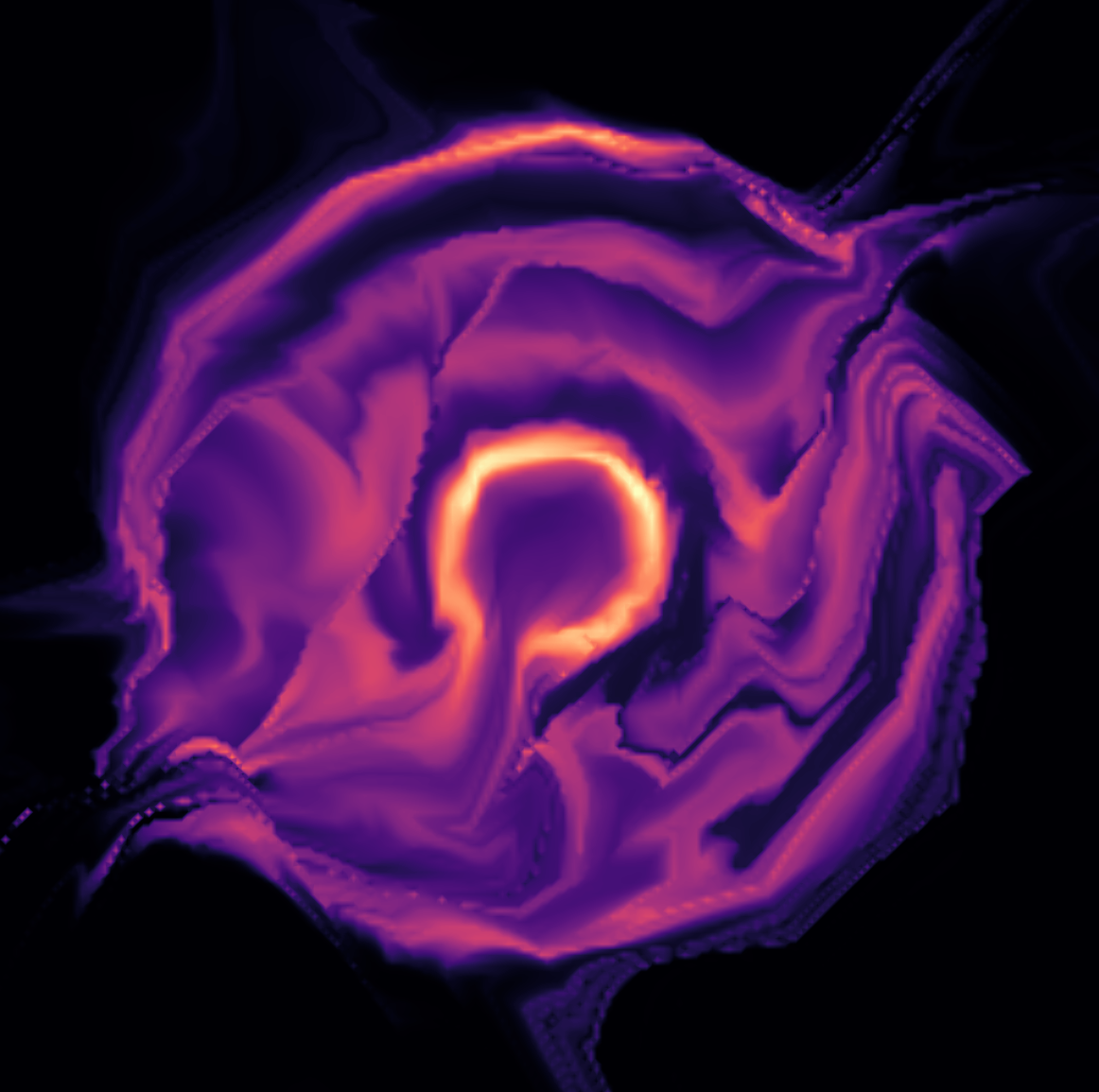}\\
            kPF &
            \includegraphics[width=\linewidth]{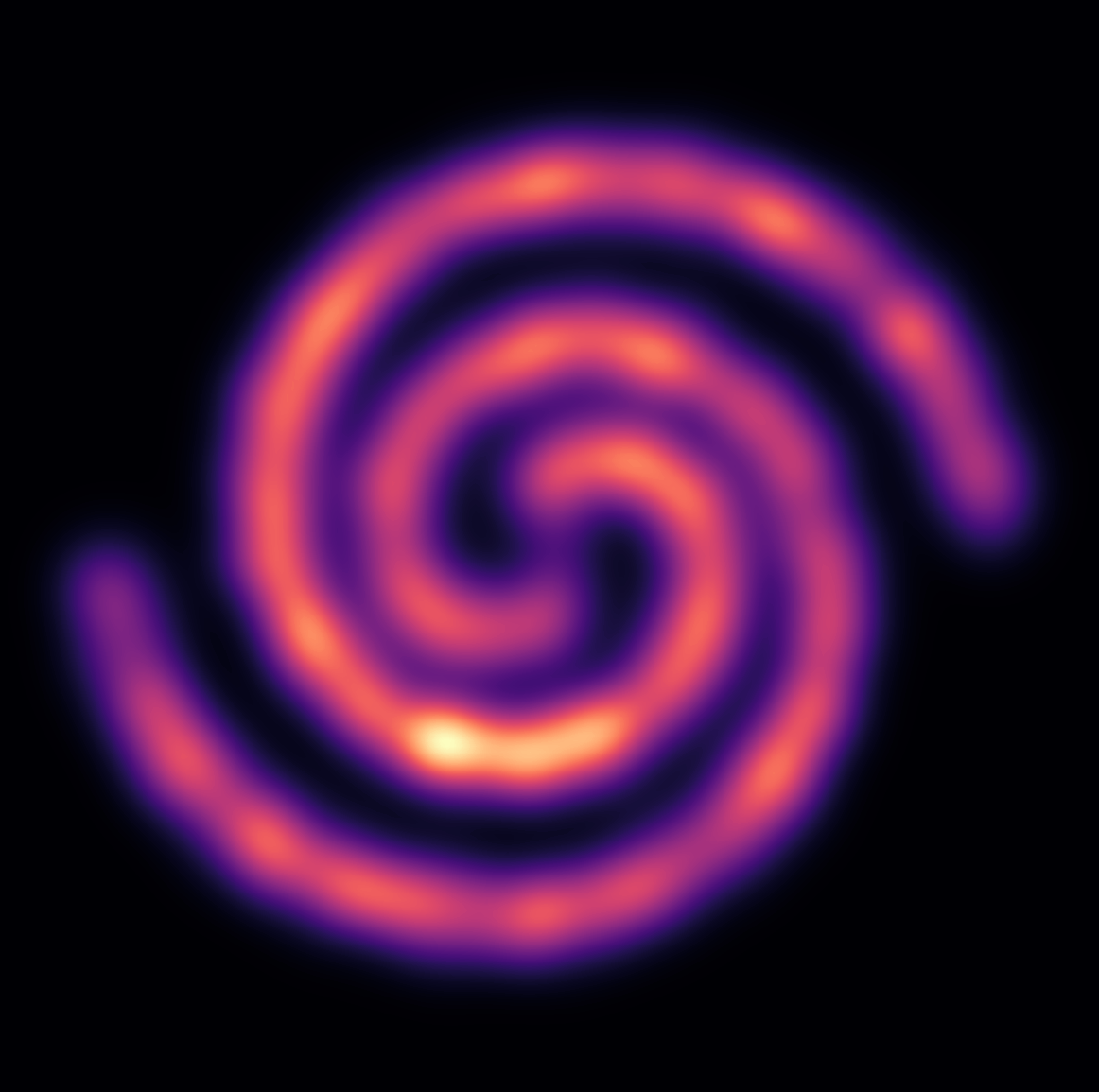}&
            \includegraphics[width=\linewidth]{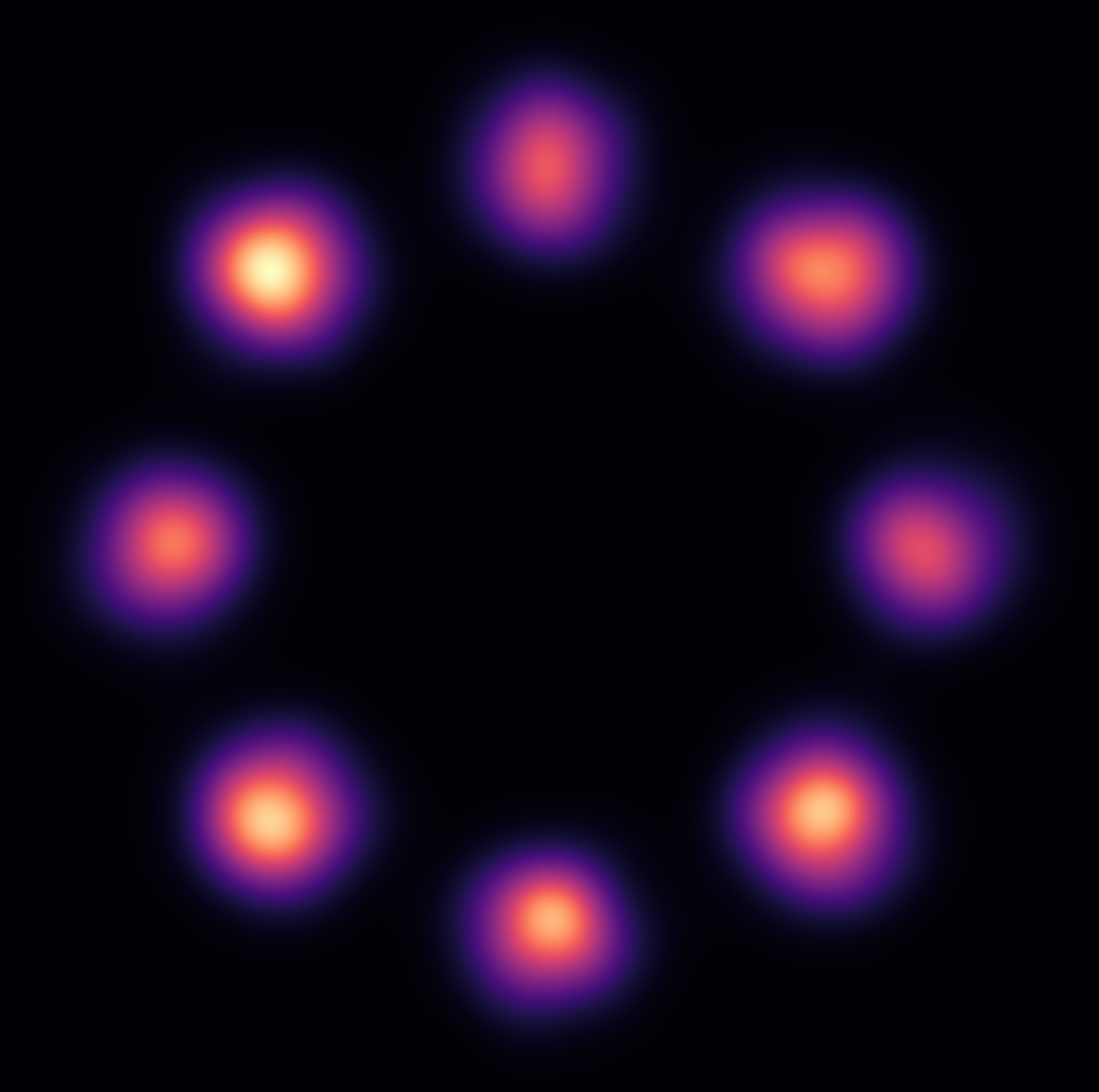}&
            \includegraphics[width=\linewidth]{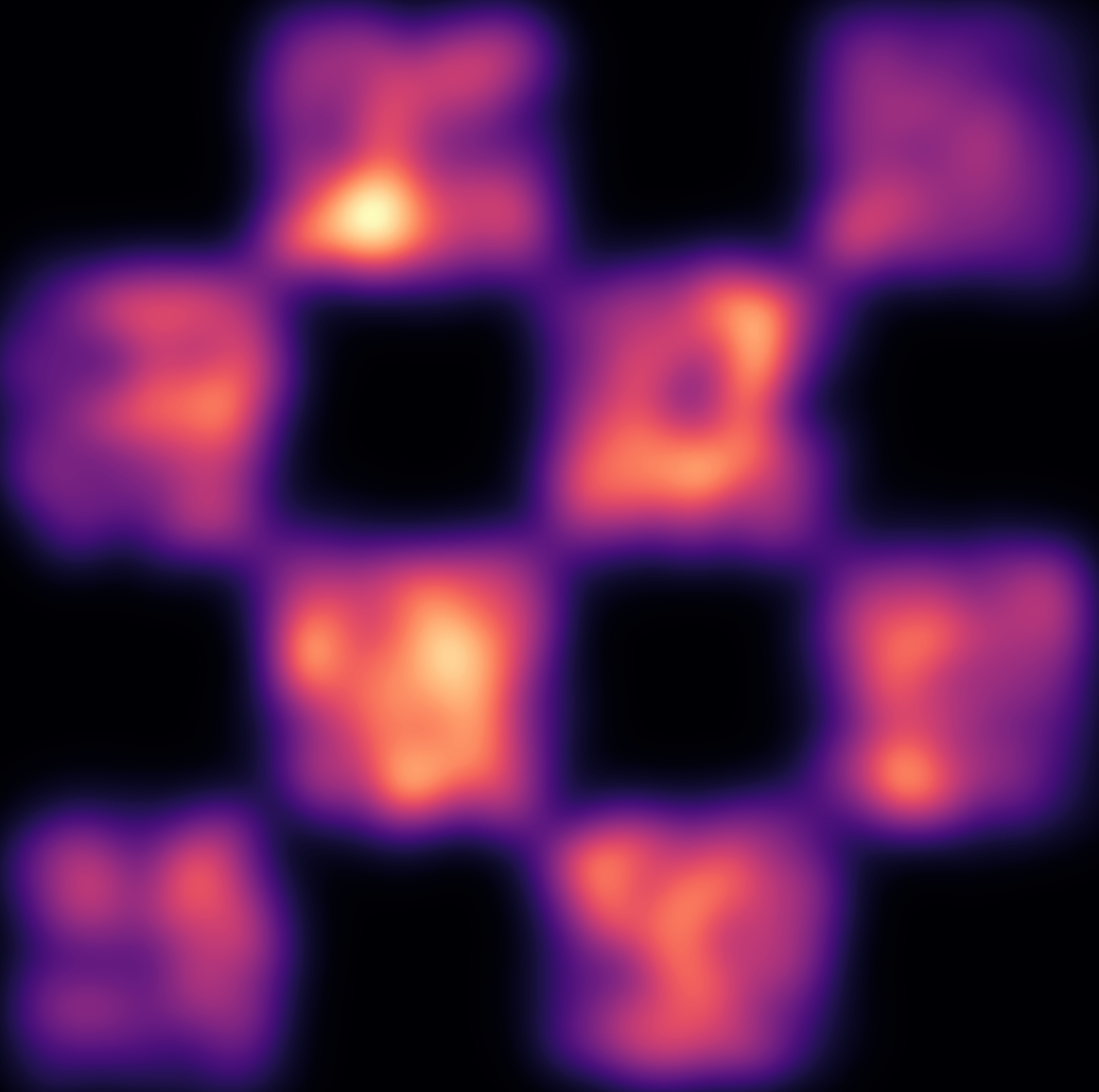}&
            \includegraphics[width=\linewidth]{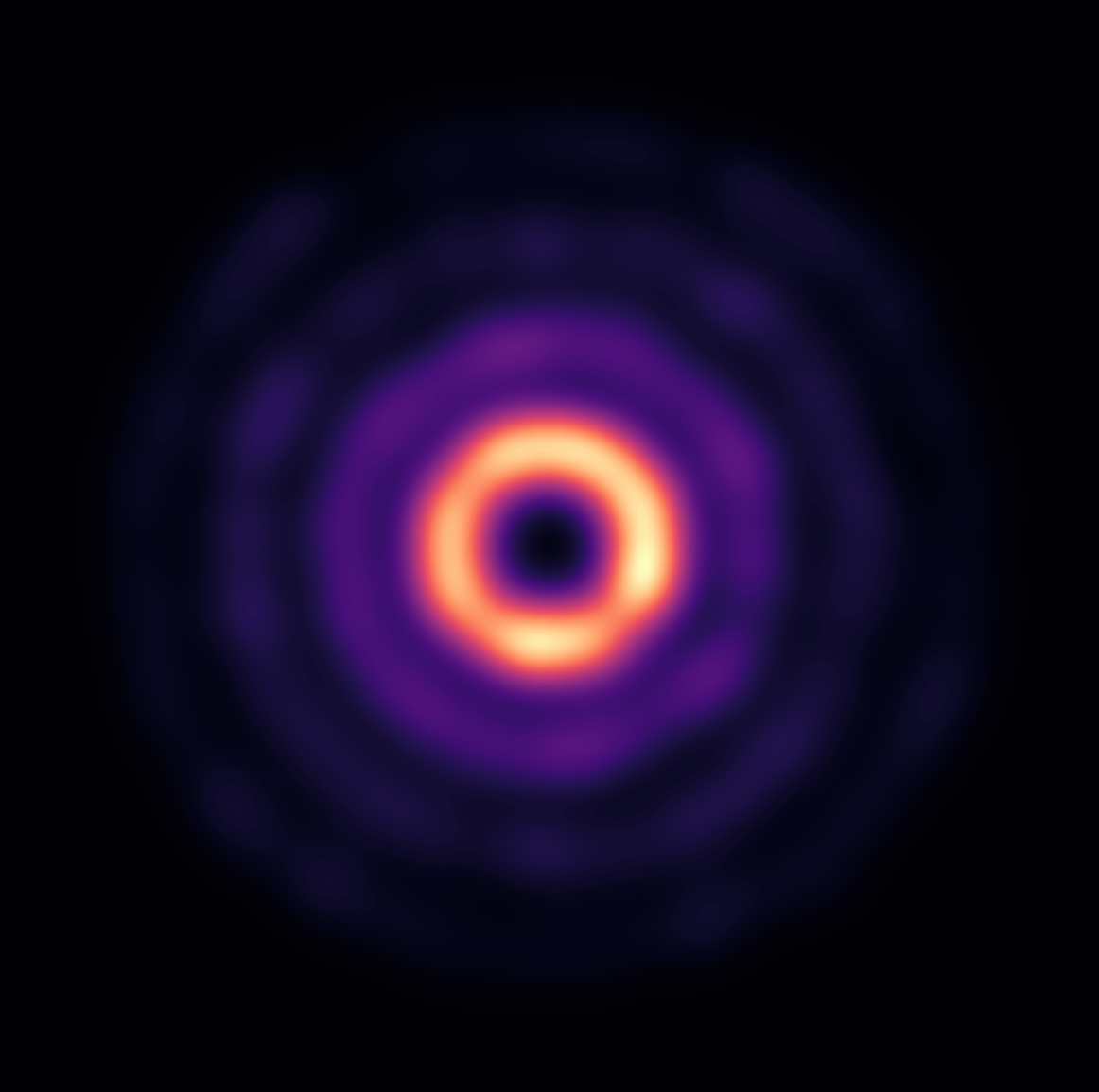}\\
            % \caption{Density estimation on 2D toy data. \textbf{Top to bottom: (1)} Training data samples, and learned densities of \textbf{(2)} GMM \textbf{(3)} Glow \textbf{(4)} Proposed kPF operator. Details of this experiment can be found in the supplement.}
            % \vspace{-4em}
            % \label{fig:density}
            % \vspace{-7em}
        \end{tabular}
         & 
         \setlength{\tabcolsep}{1pt}
         \begin{tabular}{m{1cm} m{1.2cm} m{1.2cm} m{1.2cm} m{1.2cm}}
            Data&
            \includegraphics[width=\linewidth]{imgs/data_2spirals.png}&
            \includegraphics[width=\linewidth]{imgs/data_8gaussians.png}&
            \includegraphics[width=\linewidth]{imgs/data_checkerboard.png}&
            \includegraphics[width=\linewidth]{imgs/data_rings.png}\\
            MDS &
            \includegraphics[width=\linewidth]{imgs/samples_2spirals_mds.png}&
            \includegraphics[width=\linewidth]{imgs/samples_8gaussians_mds.png}&
            \includegraphics[width=\linewidth]{imgs/samples_checkerboard_mds.png}&
            \includegraphics[width=\linewidth]{imgs/samples_rings_mds.png}\\
            wFM &
            \includegraphics[width=\linewidth]{imgs/samples_2spirals_wm.png}&
            \includegraphics[width=\linewidth]{imgs/samples_8gaussians_wm.png}&
            \includegraphics[width=\linewidth]{imgs/samples_checkerboard_wm.png}&
            \includegraphics[width=\linewidth]{imgs/samples_rings_wm.png}\\
         \end{tabular}
    \end{tabular}
    \vspace{-8pt}
    \caption{\textit{Left figure:} Density estimation on 2D toy data. \textbf{Top to bottom: (1)} Training data samples, and learned densities of \textbf{(2)} GMM \textbf{(3)} Glow \textbf{(4)} Proposed kPF operator. More details in appendix. \textit{Right figure:} Sample generation results. \textbf{Top:} Data samples \textbf{Middle:} MDS-based preimage samples \textbf{Bottom:} Weighted Fr\'{e}chet mean samples.}
    \label{fig:density}
\end{figure}

\section{Experimental Results}
\begin{figure*}[b]
    \vspace{0.3em}
    \centering
    % \textcolor{cyan!50!white}{\fboxrule=1pt\fbox{\includegraphics[width=.225\textwidth, valign=t]{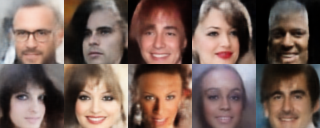}}}
    % \textcolor{cyan!50!white}{\fboxrule=1pt\fbox{\includegraphics[width=.225\textwidth, valign=t]{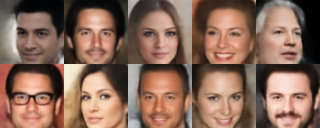}}}
    % \textcolor{cyan!50!white}{\fboxrule=1pt\fbox{\includegraphics[width=.225\textwidth, valign=t]{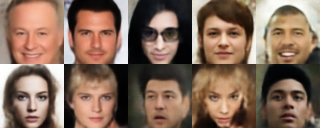}}}
    % \textcolor{cyan!50!white}{\fboxrule=1pt\fbox{\includegraphics[width=.225\textwidth, valign=t]{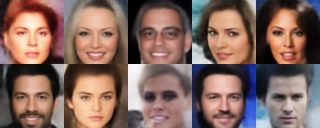}}}
    \setlength{\tabcolsep}{1.5pt}
    \renewcommand{\arraystretch}{1.2}
    \begin{tabular}{cccc}
    \includegraphics[width=.24\textwidth, valign=t]{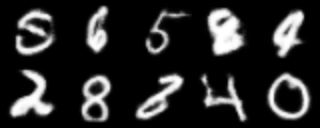} &
    \includegraphics[width=.24\textwidth, valign=t]{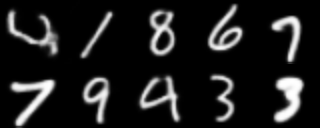} &
    \includegraphics[width=.24\textwidth, valign=t]{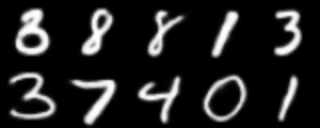} &
    \includegraphics[width=.24\textwidth, valign=t]{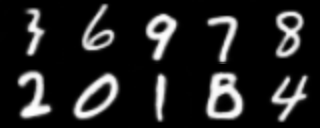}\\
    \includegraphics[width=.24\textwidth, valign=t]{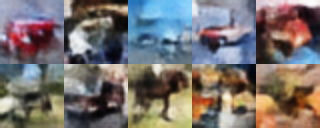} &
    \includegraphics[width=.24\textwidth, valign=t]{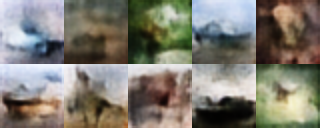} &
    \includegraphics[width=.24\textwidth, valign=t]{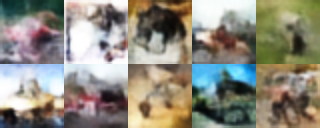} &
    \includegraphics[width=.24\textwidth, valign=t]{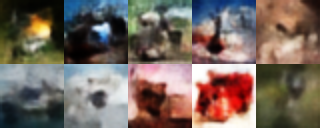}\\
    \includegraphics[width=.24\textwidth, valign=t]{imgs/celeba_2svae_collage.png} &
    \includegraphics[width=.24\textwidth, valign=t]{imgs/celeba_glow_collage.png} &
    \includegraphics[width=.24\textwidth, valign=t]{imgs/celeba_gmm_collage.png} &
    \includegraphics[width=.24\textwidth, valign=t]{imgs/celeba_kpf_collage.png}\\
     $\textrm{Two-stage VAE}$ & $\textrm{SRAE}_\textrm{Glow}$ & $\textrm{SRAE}_\textrm{GMM}$ & $\textrm{SRAE}_\textrm{NTK-kPF}$
    \end{tabular}
    \vspace{-8pt}
    \caption{Comparison of different sampling techniques using AE trained on CelebA 64x64. Left to right: samples of (1) Two-Stage VAE \citep{dai2018diagnosing} (2) $\textrm{SRAE}_{\rm Glow}$ \citep{Kingma2018Glow} (3) $\textrm{SRAE}_{\rm GMM}$ (4) $\textrm{SRAE}_\textrm{ NTK-kPF}$ using 10k latent points.}
    \label{fig:celeba}
    \vspace{-1em}
\end{figure*}

\textbf{Goals.} In our experiments, we seek to answer three questions: 
\begin{inparaenum}[\bfseries (a)] \label{first}
\item With sufficient data, can our method generate new data with comparable performance with other state-of-the-art generative models?
\item If only limited data samples were given, can our method still estimate the density with reasonable accuracy?
\item What are the runtime benefits, if any?
\end{inparaenum}

\textbf{Datasets/setup.} To answer the first question, we evaluate our method on standard vision datasets, including MNIST, CIFAR10, and CelebA, where the number of data samples is much larger than the latent dimension. We compare our results with other VAE variants (Two-stage VAE \citep{dai2018diagnosing}, WAE \citep{arjovsky2017wasserstein}, CV-VAE \citep{Ghosh2020From}) and flow-based generative models (Glow \citep{Kingma2018Glow}, CAGlow \citep{Liu2019CAflow})
The second question is due to 
the broad use of kernel methods in small sample size settings. For this more challenging case, we randomly choose 100 training samples ($<1$\% of the full dataset) from CelebA and evaluate the quality of generation compared to other density approximation schemes. We also use a dataset of T1 Magnetic Resonance (MR) images from the Alzheimer's Disease Neuromaging Initiative (ADNI) study. 

% The purpose of the first set of experiments is to show that the proposed method yields competitive measures on the task of image generation compared to other non-adversarial generative methods while enjoying the benefit of {\it one step} density estimation.
% Our second setting is motivated by \cite{Arora2020Harnessing}, where kernel methods consistently outperform neural networks in a small data setting. Our goal is to demonstrate that unlike other density approximation methods based on deep neural architectures, our proposed method produces reasonable sample generation both quantitative and qualitatively when only few data were given.

% (on MNIST using kPF with random projection with RBF kernel, we can achieve $19.1$ FID score)

\textbf{Distribution transfer with many data samples.}
We evaluate the quality by calculating the Fr\'{e}chet Inception Distance (FID) \citep{Martin2017ttur} with 10K generated images from each model. Here, we use a pretained regularized autoencoder \citep{Ghosh2020From} with a latent space restricted to the hypersphere (denoted by SRAE) to obtain \textit{smooth} latent representations. We compare our kPF to competitive end-to-end deep generative baselines (i.e. flow and VAE variants) as well as other density estimation models over the same SRAE latent space. For the latent space models, we experimented with Glow \citep{Kingma2018Glow}, VAE, Gaussian mixture model (GMM), and two proposed kPF operators with Gaussian kernel (RBF-kPF) and NTK (NTK-kPF) as the input kernel. The use of NTK is motivated by promising results at the interface of kernel methods and neural networks \citep{jacot2018NTK, Arora2020Harnessing}. Implementation details are included in the appendix.

\begin{figure*}[t]
    \centering
    \includegraphics[scale=0.24]{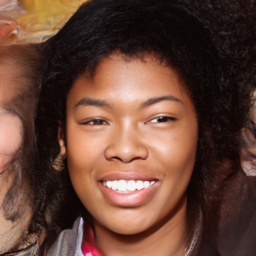}
    \includegraphics[scale=0.24]{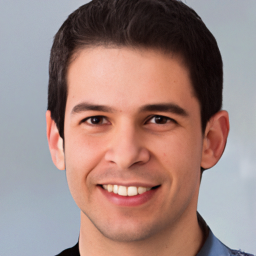}
    \includegraphics[scale=0.24]{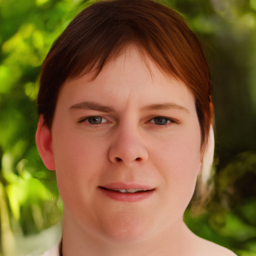}
    \includegraphics[scale=0.24]{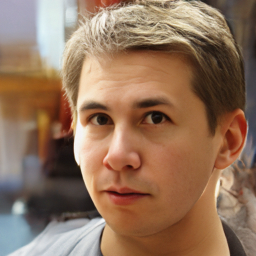}
    \includegraphics[scale=0.24]{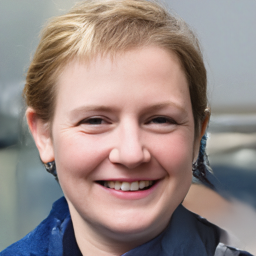}
    \includegraphics[scale=0.24]{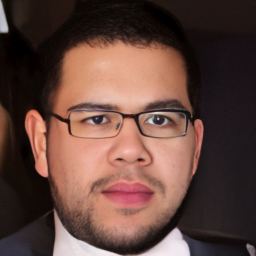}
    \vspace{-8pt}
    \caption{Representative samples from learned kPF on pre-trained NVAE latent space}
    \vspace{-1em}
    \label{fig:nvae}
\end{figure*}

Comparative results are shown in Table \ref{tab:fid_table}. We see that for images with structured feature spaces, e.g., MNIST and CelebA, our method matches other non-adversarial generative models, which provides evidence in support of the premise that the forward operator 
can be simplified. 
\begin{figure}[b]
    \centering
    \vspace{-1em}
    \begin{minipage}{0.55\textwidth}
               {
       \footnotesize
       \centering
            \begin{tabular}{cccc} 
            \specialrule{1pt}{1pt}{0pt} \rowcolor{azure!20}
            &  MNIST  &  CIFAR  &  CelebA \\
            \specialrule{1pt}{0pt}{1pt}
                $\textrm{Glow}^\ddag$    & 25.8         & -          & 103.7\\
                $\textrm{CAGlow}^\ddag$    & 26.3         & -          & 104.9\\
                Vanilla VAE         & 36.5 & 111.0      & 52.1\\
                $\textrm{CV-VAE}^\dagger$ & 33.8          & 94.8       & 48.9\\
                $\textrm{WAE}^\dagger$    & 20.4          & 117.4      & 53.7\\
                Two-stage VAE       & 16.5          & 110.3      & 44.7\\
            \hline 
            % $\textrm{SAE}_{\textit{rand}}$ & 55.6 & 187.8 & 86.5\\
                $\textrm{SRAE}_{\textrm{Glow}}$ & \textbf{15.5} &  85.9 & \textbf{35.0} \\
                $\textrm{SRAE}_{\textrm{VAE}} $ & 17.2 & 198.0 & 48.9\\
                $\textrm{SRAE}_{\textrm{GMM}} $ & 16.7 & 79.2 & 42.0\\
                $\textrm{SRAE}_{\textrm{RBF-kPF}} (\textit{ours})$ &  19.7 & 77.9 & 41.9\\
                $\textrm{SRAE}_{\textrm{NTK-kPF}} (\textit{ours})$ & 19.5 & \textbf{77.5} & 41.0 \\
            % $\textrm{SRAE}_{\textit{NTK-kPF Nystr\"{o}m}} (\textit{ours})$ &  &  & \\
                %$\textrm{SAE}_{\textit{NNGP-kPF 10k}} (\textit{ours})$ & $\textbf{19.1}$ & - & - \\
            \bottomrule
            \end{tabular}
        }
        
       \captionof{table}{ Comparative FID values. SRAE indicates an autoencoder with hyperspherical latent space and spectral regularization following \cite{Ghosh2020From}. Results reported from $\ddag$: \cite{Liu2019CAflow}. $\dagger$: \cite{Ghosh2020From}.}
        \label{tab:fid_table}
    \end{minipage}
    \hfill
    \begin{minipage}{0.42\textwidth}
        \includegraphics[width=\linewidth, trim=10 0 10 10, clip]{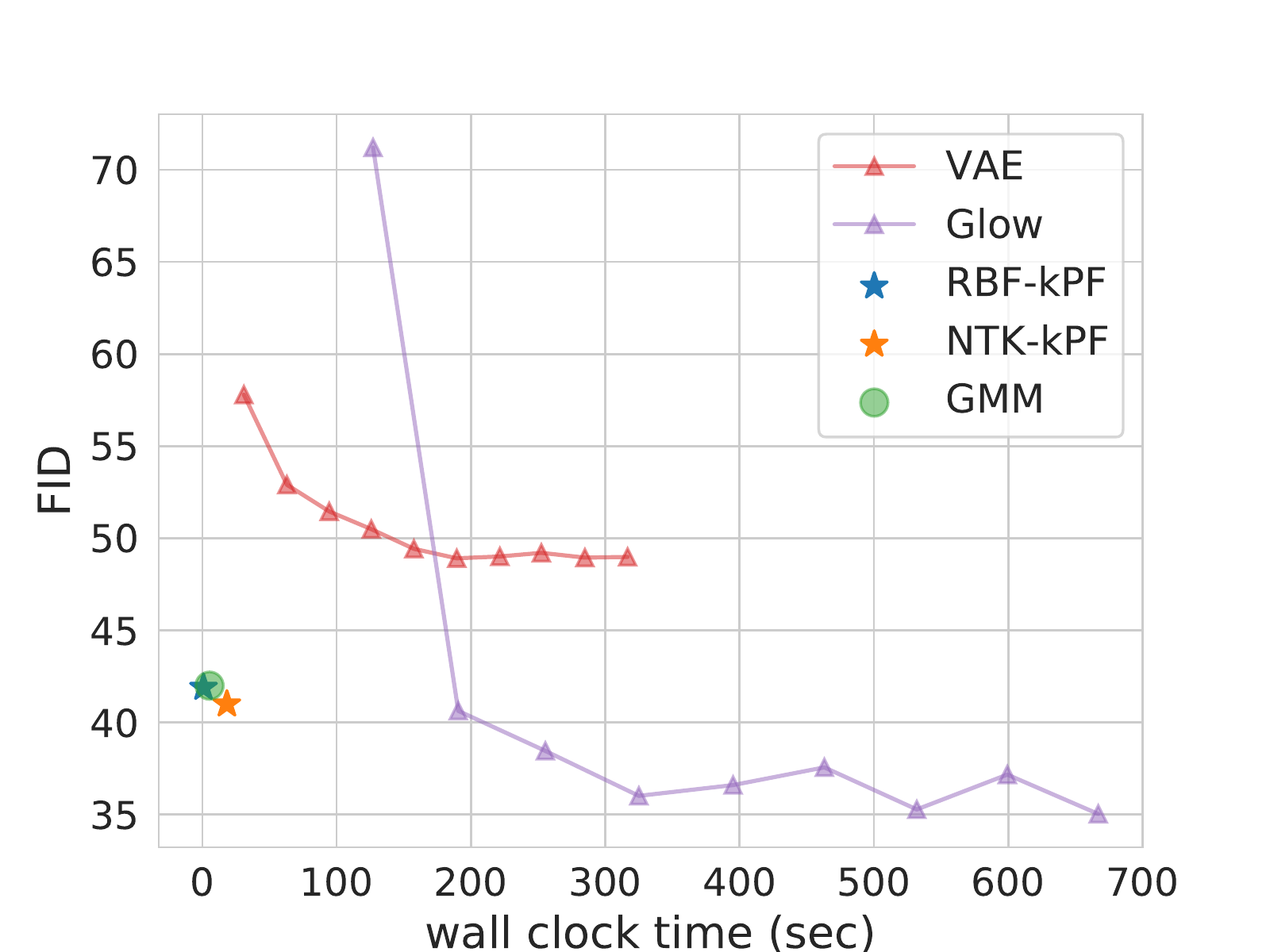}
          \vspace{-14pt}
        \captionof{figure}{FID  \textit{versus} training time for latent space models. All models are learned on the latent representations encoded by the same pre-trained SRAE.} %Our $10k$ and $\textrm{nystr\"om}$ model shows $>40$x and $\sim4$x speedup, respectively, compared to other models trained on latent space. }
        \label{fig:runtime}
    \end{minipage}
\end{figure}
%
% \begin{figure*}[th]
%     \centering
%     \vspace{1em}
%     \setlength{\fboxsep}{0pt}
%     \setlength{\fboxrule}{1.5pt}
%     \textcolor{red}{\fbox{\includegraphics[width=0.045\textwidth, height=0.22\textwidth]{imgs/mnist_nearest_generation.png}}}
%     \includegraphics[width=.20\textwidth, height=0.22\textwidth]{imgs/mnist_nearest_top5.png}
%     \hspace{0.05in}
%     \textcolor{red}{\fbox{\includegraphics[width=0.045\textwidth, height=0.22\textwidth]{imgs/cifar_nearest_generation.png}}}
%     \includegraphics[width=.20\textwidth, height=0.22\textwidth]{imgs/cifar_nearest_top5.png}
%     \hspace{0.05in}
%     \textcolor{red}{\fbox{\includegraphics[width=0.045\textwidth, height=0.22\textwidth]{imgs/celeba_nearest_generation.png}}}
%     \includegraphics[width=.20\textwidth, height=0.22\textwidth]{imgs/celeba_nearest_top5.png}
%     \vspace*{-0.5em}
%     \caption{Generations (in red box) and training samples corresponding to the top-5 latent representations used in geodesic interpolation. It can be observed that the samples with top kernel values indeed share high visual similarity.}
%     \label{fig: generation_topk}
% \end{figure*}
% \vspace{1em}
%
Further, we present qualitative results on all datasets (in Fig. \ref{fig:celeba}), where we compare our kPF operator based model with other density estimation techniques on the latent space. Observe that our model
generates comparable visual results as $\textrm{SRAE}_{\rm Glow}$.

% \begin{wrapfigure}{r}{0.45\textwidth}
%     \centering
%     \vspace{-1em}
%       \includegraphics[width=\linewidth, trim=10 10 10 10, clip]{imgs/training_time_comparison.png}
%       \vspace{-14pt}
%     \caption{ Comparison of additional training time for density estimation on latent space. All models were fitted on the latent representations of CelebA images encoded by an SRAE.} %Our $10k$ and $\textrm{nystr\"om}$ model shows $>40$x and $\sim4$x speedup, respectively, compared to other models trained on latent space. }
%     \label{fig:runtime}
%     \vspace{-2em}
% \end{wrapfigure}

Since kPF learns the distribution on a pre-trained AE latent space for image generation, using a more powerful AE can 
offer improvements in generation quality. In Fig. \ref{fig:nvae}, we present representative images by learning 
our kPF on NVAE \cite{vahdat2020NVAE} latent space, pre-trained on the FFHQ dataset. NVAE builds a hierarchical prior and achieves state-of-the-art generation quality among VAEs. We see that kPF can indeed generate high-quality and diverse samples with the help of NVAE encoder/decoder. In fact, any AE/VAE may be substituted in, assuming that the latent space is \textit{smooth}.\\*[3pt]
\noindent {\bf Summary:} When a sufficient number of samples are available, our algorithm performs as well as the alternatives, which is attractive given the efficient training.  In Fig. \ref{fig:runtime}, we present comparative result of FIDs with respect to the training time. Since kPF can be computed in closed-form, it achieves significant training efficiency gain compared to other deep generative methods while delivering competitive generative quality.

\textbf{Distribution transfer with limited data samples.} Next, we present our evaluations when only a limited number of samples are available. Here, each of the density estimators was trained on latent representations of the same set of 100 randomly sampled CelebA images, and 10K images were generated to evaluate FID (see Table \ref{tab:fid_table_limited}).
Our method outperforms Glow and VAE, while offering competitive performance with GMM. Surprisingly, GMM remains a strong baseline for both tasks, which agrees with results in \citet{Ghosh2020From}. However, note that GMM is restricted by its parametric form and is less flexible than our method (as shown in Fig \ref{fig:density}).

\setlength{\intextsep}{10pt}%
\setlength{\columnsep}{7pt}
\begin{wraptable}{r}{0.51\textwidth}
   \centering
%   \vspace{-1em}
   {\small
    \begin{tabular}{ccccc} 
      \specialrule{1pt}{1pt}{0pt} \rowcolor{azure!20}
        VAE &  Glow  &  GMM  &  RBF-kPF & NTK-kPF \\
     \specialrule{2pt}{0pt}{1pt}
        59.3 & 77.0 & 39.6 & 40.6 & 40.9\\
        \bottomrule
        \end{tabular}
    }
    \vspace{-1em}
   \caption{FID values for few samples setting density approximation on CelebA. }
   \vspace{-1em}
    \label{tab:fid_table_limited}
\end{wraptable}

Learning from few samples is  common in biomedical  applications where acquisition is costly. Motivated 
by interest in making synthetic but 
statistic preserving data (rather than the 
real patient records) 
publicly available to researchers 
(see NIH N3C Data Overview), 
we present results on generating high-resolution
$(160\times 196\times 160)$ brain images:  $183$ samples from group AD
(diagnosed as Alzheimer's disease) and $291$ samples from group CN (control normals).
For $n = 474 \ll d = 5017600$, using our kernel operator, we can generate high-quality samples 
that are in-distribution. We present comparative results with VAEs.
The generated samples in Fig. \ref{fig: mr_gen} clearly show that our method generates sharper images.
%which can indicate the generated distribution is more aligned to the data distribution.
%
To check if the results are also scientifically meaningful, 
we test consistency between group difference testing (i.e., cases versus controls differential tests on each voxel) on the real images (groups were AD and CN) and   
the same test was performed on the generated samples (AD and CN groups), 
%indeed represent the original data distribution
using a FWER corrected two-sample $t$-test \cite{ashburner2000voxel}. 
%between samples generated using images from either group CN or group AD.
The results (see Fig \ref{fig: mr_gen}) show that while there is a deterioration in regions
identified to be affected by disease (different across groups), 
many statistically-significant regions from  tests on the real images are preserved in  voxel-wise tests
on the generated images.\\*[3pt]
\noindent {\bf Summary:} We achieve improvements in the small sample size setting compared to other generative methods. This is useful in many data-poor settings. For larger datasets, our method still compares competitively with alternatives, but with a smaller resource footprint.

\begin{figure}[b!]
    \centering
    \begin{minipage}{.63\linewidth}
      \centering
      \includegraphics[width=0.98\textwidth]{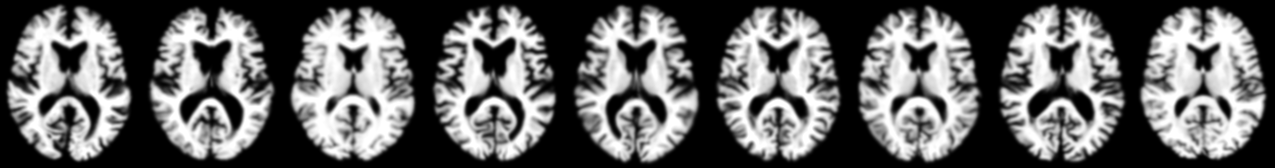}
      \includegraphics[width=0.98\textwidth]{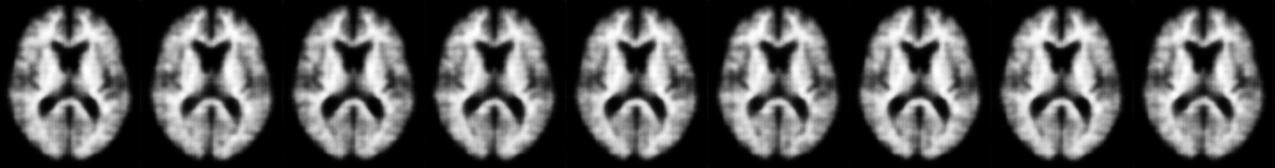}
      \includegraphics[width=0.98\textwidth]{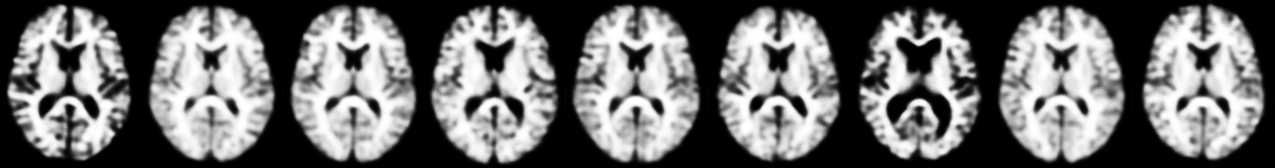}
    \end{minipage}%
    \begin{minipage}{0.25\linewidth}
        \centering
        \includegraphics[width=0.95\textwidth]{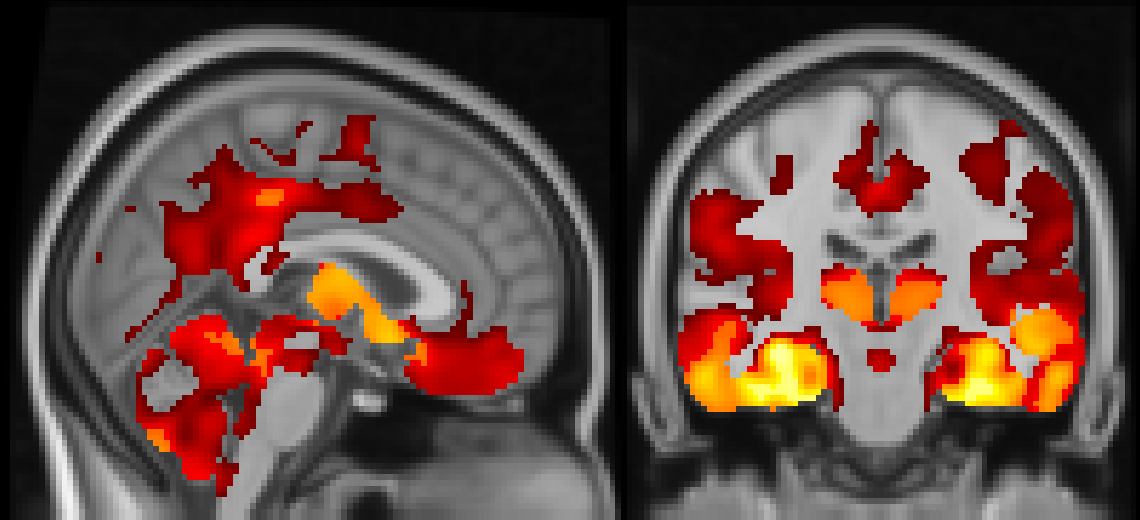}
        \includegraphics[width=0.95\textwidth]{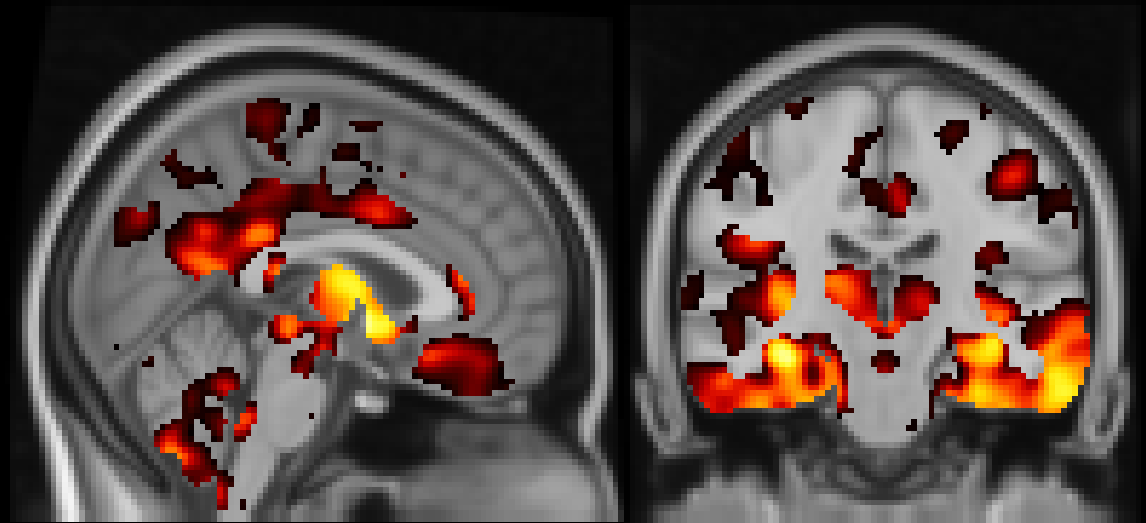}
        \includegraphics[width=\textwidth]{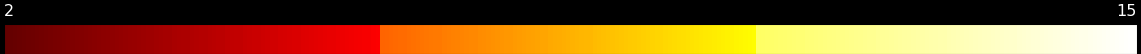}
    \end{minipage}
    \vspace{-8pt}
    \caption{\footnotesize {\bf Left.} {\it Top: data}, generated samples of {\it Middle: VAE , Bottom: kPF with SRAE}.
    {\bf Right.} Statistically significant regions {\it Top: voxel-wise tests on real data, Bottom: voxel-wise tests on generated samples} are shown in negative log $p$-value thresholded at $\alpha=0.01$.}
    \label{fig: mr_gen}
    \vspace{-0.8em}
\end{figure}

\section{Limitations}
\label{sec:limitations}
%When a dataset allows
%a low-dimensional structured latent %space representation,
%a kernel Perron-Frobenius operator %is effective for simplifying
%forward operators in deep %generative models. 
%However, some direction we did not explore in our work is how to
Our proposed simplifications can be variously
useful, but deriving the density of the posterior given a mean embedding or providing an exact preimage for the generated sample in RKHS is unresolved at this time. 
While density estimation from kPF has been partially addressed in \citet{schuster2020kernelconditional}, finding the pre-image is often ill-posed. The weighted Fr\'{e}chet mean preimage only provides an approximate solution and evaluated empirically,and due to the interpolation-based sampling strategy, samples cannot be obtained beyond the convex hull of training examples. Making $Z$ and $X$ independent RVs also limits its use for certain downstream task such as representation learning or semantic clustering. Finally, like other kernel-based methods, the sup-quadratic memory/compute cost can be a bottleneck on large datasets and kernel approximation (e.g. \citep{rahimi2008random}) may have to be applied; we discuss this in appendix \ref{sec:nystrom}.

% \paragraph{Negative Societal Impact.} While the paper is focused on efficiency, we acknowledge that improvements in deep generative models can be used nefariously, including misinformation propagation. 

\section{Conclusions}
We show that using recent developments in regularized autoencoders, a linear kernel transfer operator can potentially be an efficient substitute for the forward operator in some generative models, if 
some compromise in capabilities/performance is acceptable. Our proposal, despite its simplicity, shows comparable empirical results to other generative models, while offering efficiency benefits. Results on brain images also show promise for applications to high-resolution 3D imaging data generation, which is being pursued actively in the community.

\bibliography{ref}
\bibliographystyle{iclr2022/iclr2022_conference.bst}

\clearpage
\appendix
% \section{Appendix}

\section{Choice of Kernel is relevant yet flexible}\label{sec:choice_of_kernel}

In some cases, one would focus on identifying (finite) eigenfuntions and modes of the underlying operator \citep{williams2015extDMD,brunton2016sindy}. But rather than finding certain modes that best characterize the dynamics, we care most about minimizing the error of the transferred density and therefore whether the span of functions is rich/expressive enough. In particular, condition (iii) in Proposition \ref{prop:klus} requires an input RKHS spanned by sufficiently rich bases \citep{fukumizu2013kernel}. For this reason, the choice of kernel here 
cannot be completely ignored since it determines the family of functions contained in the induced RKHS.

To explore the appropriate kernel setup for our application, we empirically evaluate the effect of using several different kernels via a simple experiment on MNIST. We first train an autoencoder to embed MNIST digits on to a hypersphere $S^2$, then generate samples from kPF by the procedure described by Alg. \ref{alg:gen_algo} using the respective kernel function as the input kernel $k$. Subplot (b) and (c) in Fig. \ref{fig:mnist_gen} show the generated samples using Radial Basis Function (RBF) kernel and arc-cosine kernel, respectively. Observe that the choice of kernel has an influence on the sample population, and a kernel function with superior empirical behavior is desirable. 

Motivated by this observation, we evaluated the Neural Tangent Kernel (NTK) \citep{jacot2018NTK}, a well-studied neural kernel in recent works. We use it for a 
few reasons, 
\begin{inparaenum}[\bfseries (a)] \item NTK, in theory, corresponds to a trained infinitely-wide neural network, which spans a rich set of functions that satisfies the assumption. 
  %Therefore, NTK is spanned by a rich set of nonlinear functions
\item For well-conditioned inputs (i.e., no duplicates) on hypersphere,
  the positive-definiteness of NTK is proved in \citep{jacot2018NTK}. Therefore, invertibility of the Gram matrix $K_{\var{z}\var{z}} = \Phi^T\Phi$ is \textit{almost guaranteed} if the prior distribution $p_{\var{z}}$ is restricted on a hypersphere
\item
  NTK can be non-asymptotically approximated \citep{arora2019onexact}.
\item Unlike other parametric kernels such as RBF kernels, NTK is less sensitive to hyperparameters, as long as the number of units used is large enough \citep{arora2019onexact}.
\end{inparaenum} Subplot (d) of Fig. \ref{fig:mnist_gen} shows that kPF learned with NTK as input kernel is able to generate samples that are more consistent with the data distribution. However, we should note that NTK is merely a convenient choice of kernel that requires less tuning, and is not otherwise central to our work. In fact, as shown in our experiment in Tab. \ref{tab:fid_table}, a well-tuned RBF kernel may also achieve a similar performance. Indeed, in practice, any suitable choice of kernel may be conveniently adopted into the proposed framework without major modifications.

\begin{figure*}[h]
%\vspace*{-2em}
    \centering
    \textcolor{red!50!white}{\fboxrule=1pt\fbox{
    \includegraphics[width=0.22\linewidth]{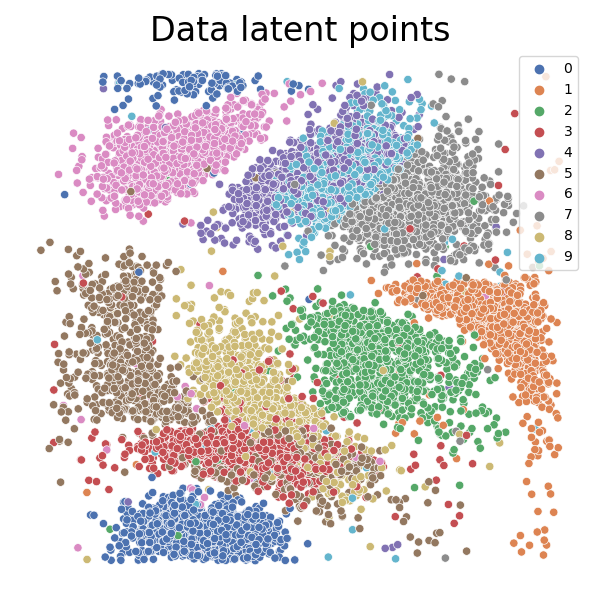}}}
    \textcolor{blue!50!white}{\fboxrule=1pt\fbox{
    \includegraphics[width=0.22\linewidth]{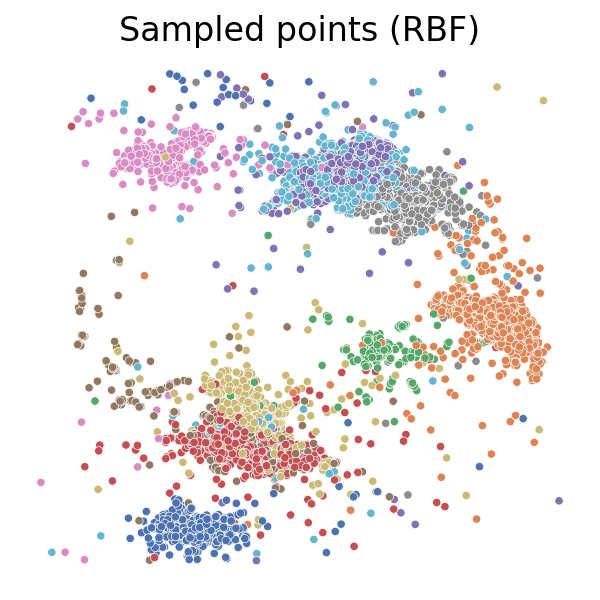}
    \includegraphics[width=0.22\linewidth]{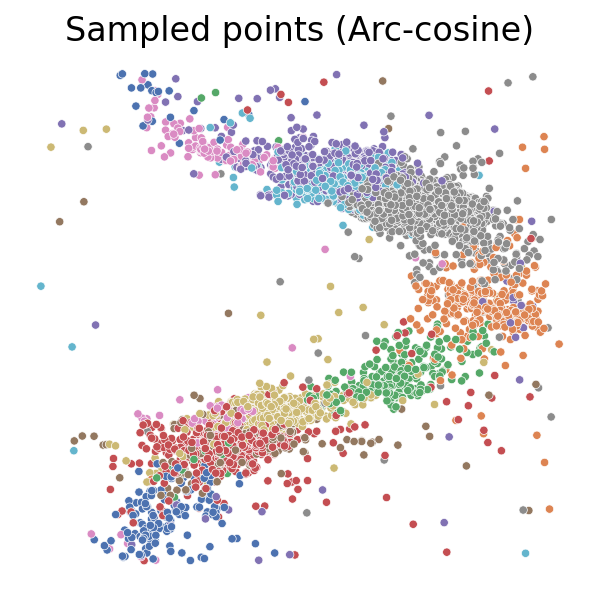}
    \includegraphics[width=0.22\linewidth]{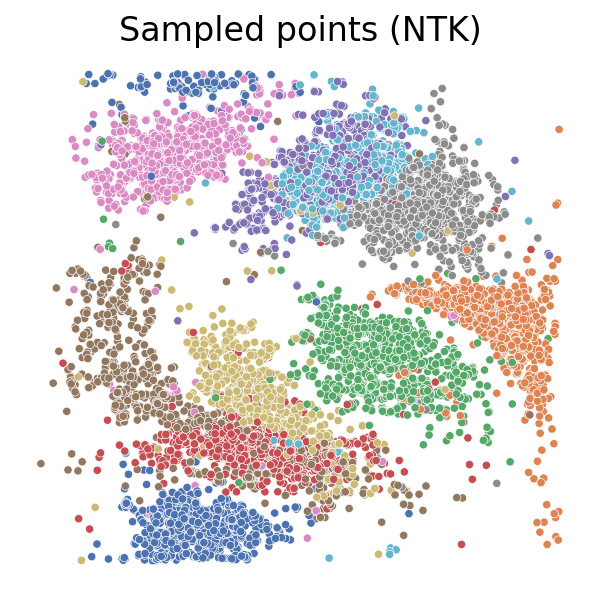}}}
    % \vspace*{-1em}
    \caption{\footnotesize 10k samples from MNIST dataset ({\it left to right}) (a) projected on $\mathbf{S}^2$ shown in $(\theta, \phi)$ using auto-encoder, and 10K generated samples from kPF with input kernel of type (b) RBF (c) arccos (d) NTK. Color of sampled points represents the class of their nearest training set neighbor in the output RKHS.}  \label{fig:mnist_gen}
\end{figure*}

\clearpage
\section{Density Estimation with kPF Operator}

The displayed transferred density with the kPF operator on toy data in Fig. 1 is \textit{approximated} using the empirical kernel conditional density operator (CDO) \citep{schuster2020kcdo}, since there is currently no known methods that can exactly reconstruct density from the transferred mean embeddings. The marginalized transferred density $p_{\mathcal{P}_\mathcal{E}}$ has the following form
\begin{equation}
p_{\mathcal{P}_\mathcal{E}} = C_\rho^{-1}\mathcal{P}_\mathcal{E}\mu_{\var{z}} = C_\rho^{-1}C_{\var{x}\var{z}}C_{\var{z}\var{z}}^{-1}\mu_{\var{z}},  
\end{equation}

where $C_\rho = E_{y \sim \rho}[l(y, \cdot) \otimes l(y, \cdot)]$ is the covariance operator of a independent reference density $\rho$ in $\spc{G}$. The above density function is also an element of RKHS $\mathcal{G}$, and therefore we can evaluate the density at $x^*$ by using the reproducing property $p_{\mathcal{P}_\mathcal{E}}(x^*) = \langle p_{\mathcal{P}_\mathcal{E}}, l(x^*, \cdot) \rangle$. The results in \citet{schuster2020kcdo} show that the empirical estimate of $p_{\mathcal{P}_\mathcal{E}}$ may be constructed from $m$ samples of the reference density $\{y_i\}_{i \in [m]}$ and $n$ training samples $\{x_i\}_{i \in [n]}$ and $\{z_i\}_{i \in [n]}$ as 
\begin{equation}
\label{eq:empirical_kcdo}
    \hat{p}_{\mathcal{P}_\mathcal{E}} = (\hat{C}_\rho + \alpha' I)^{-1}\hat{C}_{\var{x}\var{z}}(\hat{C}_{\var{z}\var{z}} + \alpha I)^{-1}\hat{\mu}_{\var{z}} = \sum_{i = 1}^{m} \beta_i l(y_i, \cdot)
\end{equation}

where 
\begin{equation}\beta = m^{-2}(L_{\var{y}} + \alpha'I)^{-2}L_{\var{y}\var{x}}(K_{\var{z}} + \alpha I)^{-1}\Phi^\top \hat{\mu}_{\var{z}}
\end{equation} and 
\begin{equation}
L_{\var{y}}= [l(y_i, y_j)]_{ij} \in \mathbb{R}^{m \times m} , L_{\var{y}\var{x}}= [l(y_i, x_j)]_{ij} \in \mathbb{R}^{m \times n}, K_{\var{z}}= [k(z_i, z_j)]_{ij} \in \mathbb{R}^{n \times n}
\end{equation}

In Fig. \ref{fig:inaccurate-density}, we use a uniform density in the square $(\pm 4, \pm 4)$ as the reference density $\rho$ and constructed $\hat{p}_{\mathcal{P}_\mathcal{E}}$ using $m = 10000$ samples $\{y_i\}_{i \in [m]}$ from $\rho$. Due to the form of the empirical kernel CDO, where the estimated density function $\hat{p}_{\mathcal{P}_\mathcal{E}}$ is a linear combination of $\{l(y_i, \cdot)\}_{i \in [m]}$ (as in Eq. \ref{eq:empirical_kcdo}), the approximation can be inaccurate if reference samples are relatively sparse and the densities are 
`sharp'. In those cases, to obtain a better density estimate, we may either increase the number of reference samples used to construct the empirical CDO (which can be computationally difficult due to the need to compute $(L_{\var{y}} + \alpha'I)^{-2}$), or, with some prior knowledge to the true distribution, choose a reference density which is localized around the ground truth density.

\begin{figure}[h]
    \centering
    \includegraphics[scale=0.1]{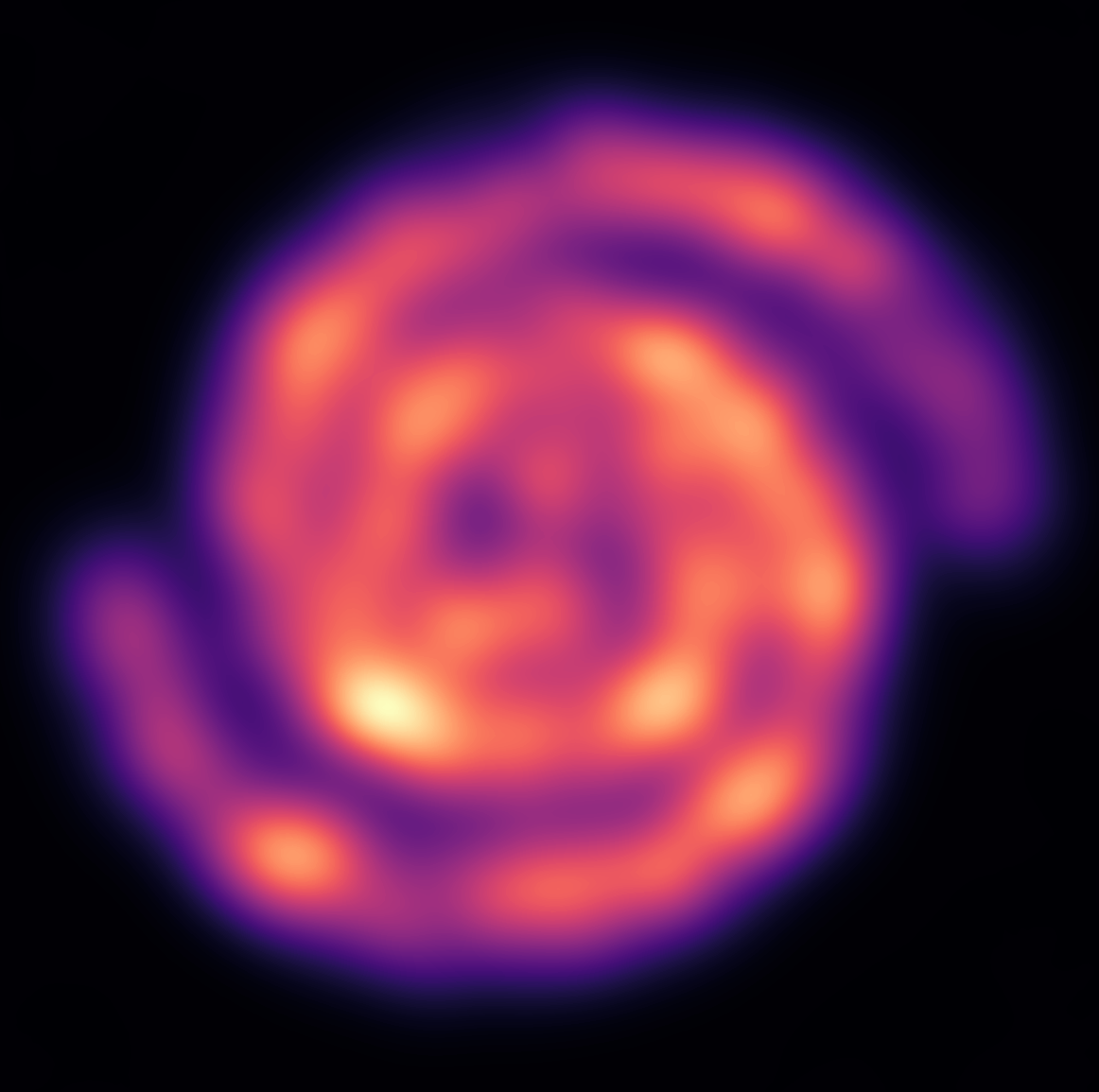}
    \includegraphics[scale=0.1]{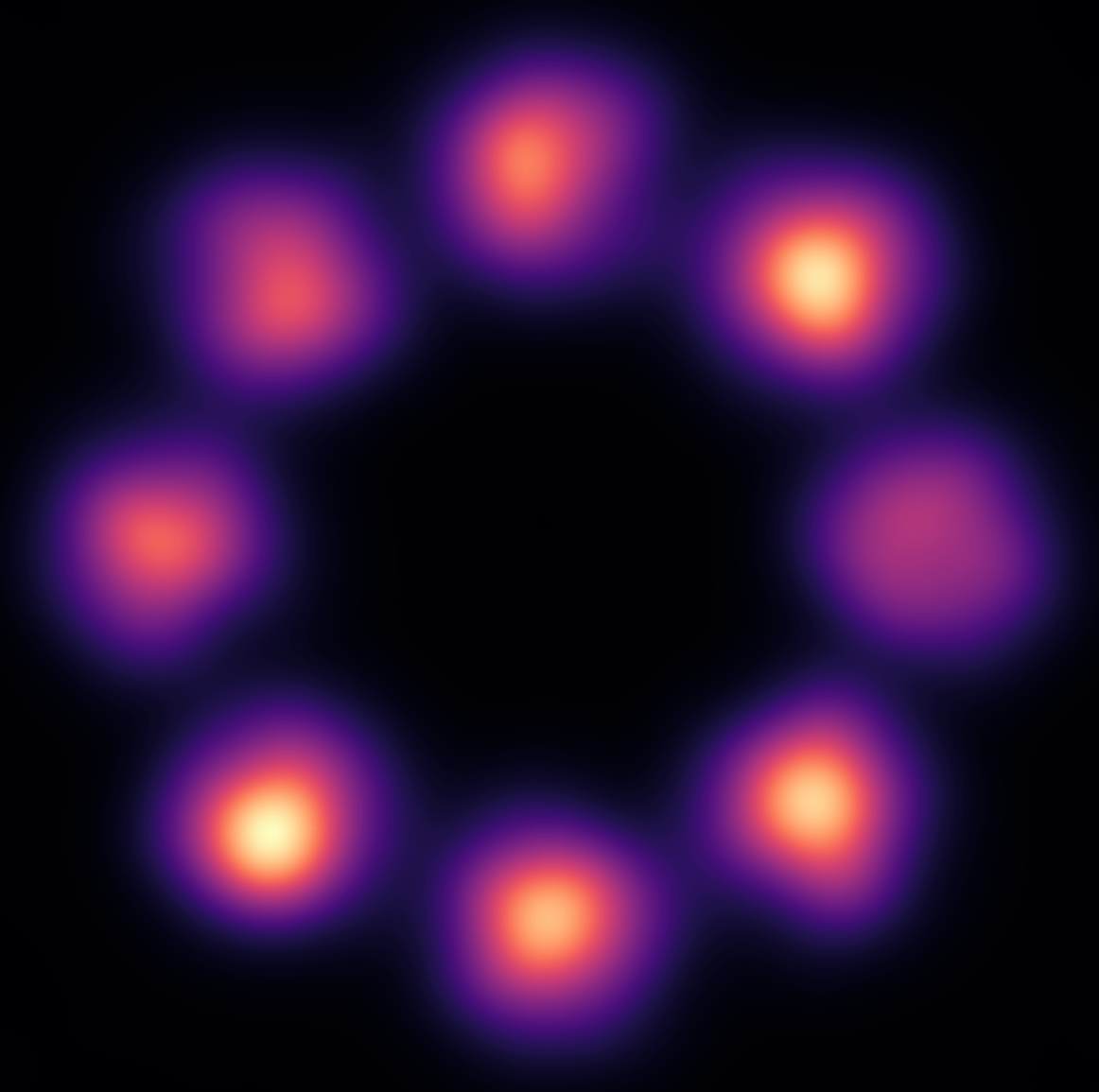}
    \includegraphics[scale=0.1]{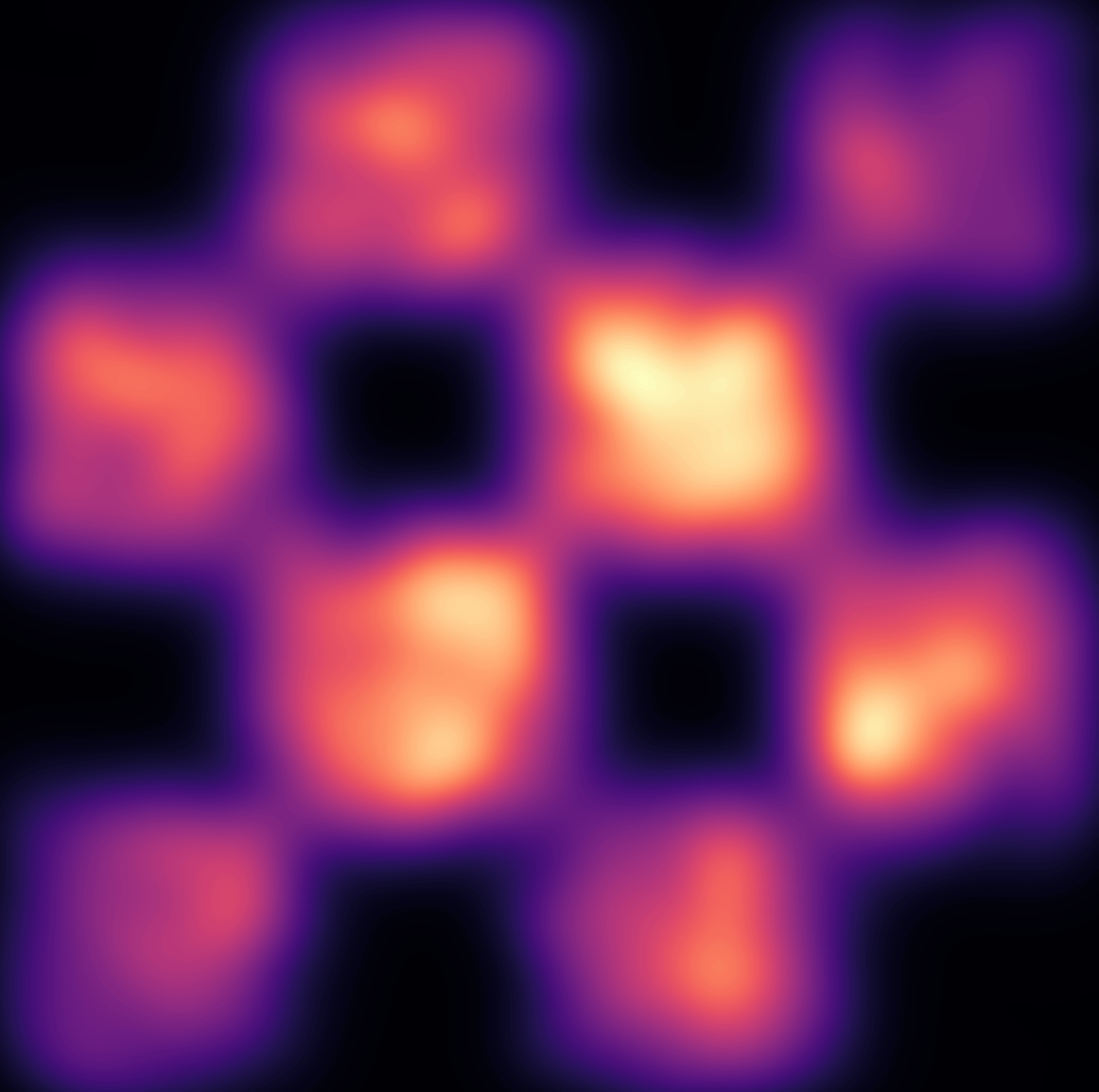}
    \includegraphics[scale=0.1]{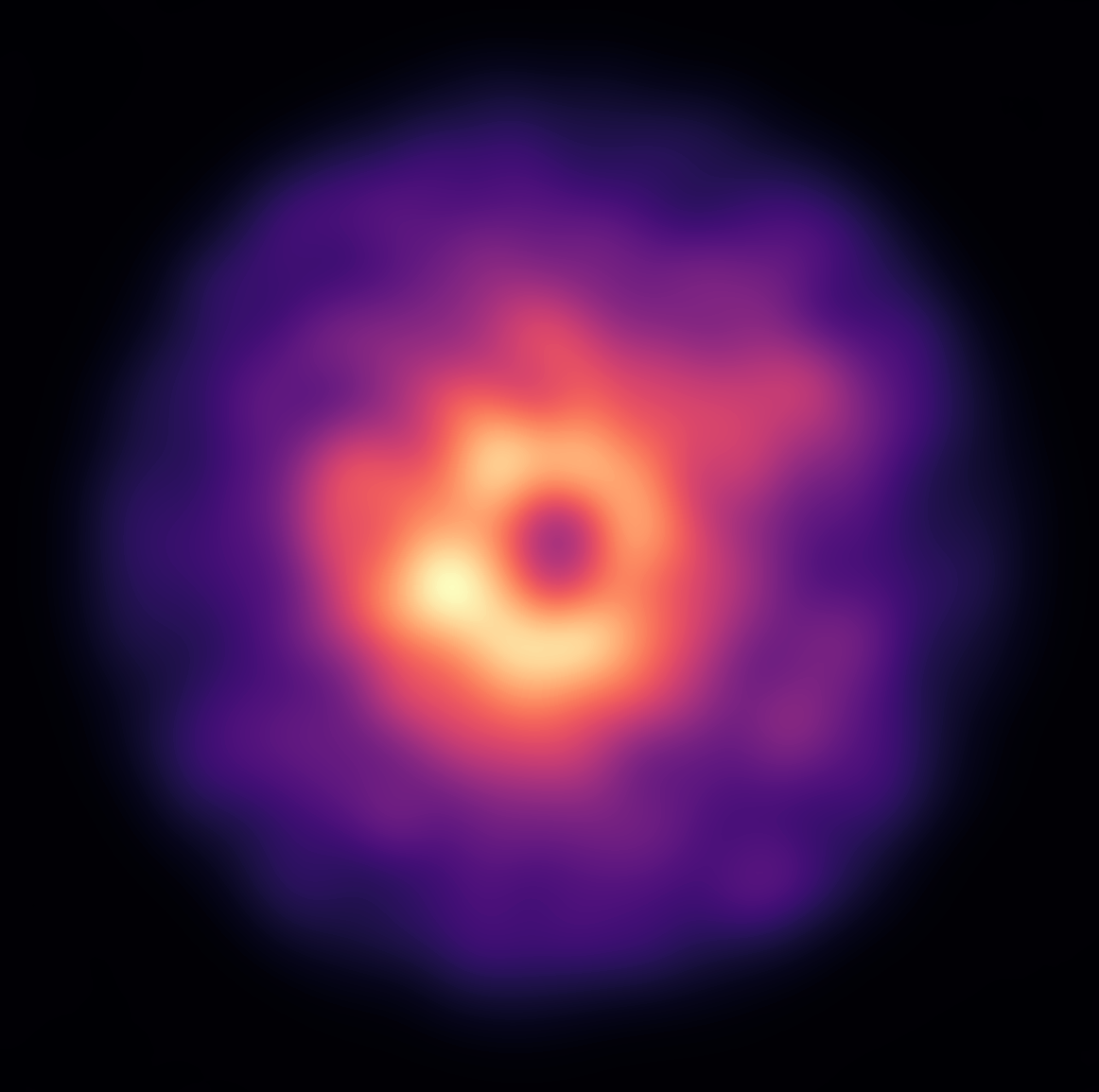}
    \caption{Inaccurate density estimation result from kernel CDO using 10k samples from uniform reference density $\rho$}
    \label{fig:inaccurate-density}
\end{figure}

Therefore, to show a more faithful density estimate of the transferred distribution for visualization purpose, we use a composite of the uniform density and the true density with weight $4:1$ as the reference density $\rho$ in Fig. \ref{fig:density}. In this case, approximately 20\% of the reference samples concentrates around the high-density areas of the true density, which helps to form a better basis for $\hat{p}_{\mathcal{P}_\mathcal{E}}$. Note that the choice of $\rho$ does not affect the transferred density embedding $\hat{\mathcal{P}}_\mathcal{E}\hat{\mu}_{\var{z}}$ since it is independent of $p_{\var{x}}$ and $p_{\var{z}}$. After this modification, the reconstructed density more accurately reflects the true density compared to GMM and GLOW, indicating the transferred distribution by kPF in RKHS matches better to the ground truth distribution. This is also reflected in the generated samples in Fig. \ref{fig:samples}, where samples generated by the kPF operator are clearly more aligned with the ground truth distribution.

\begin{figure}[h]
    %\begin{minipage}{0.55\textwidth}
        \centering
        \begin{tabular}{ m{1cm} m{2cm} m{2cm} m{2cm}  m{2cm}  }
            \textrm{GT} &
            \includegraphics[width=\linewidth]{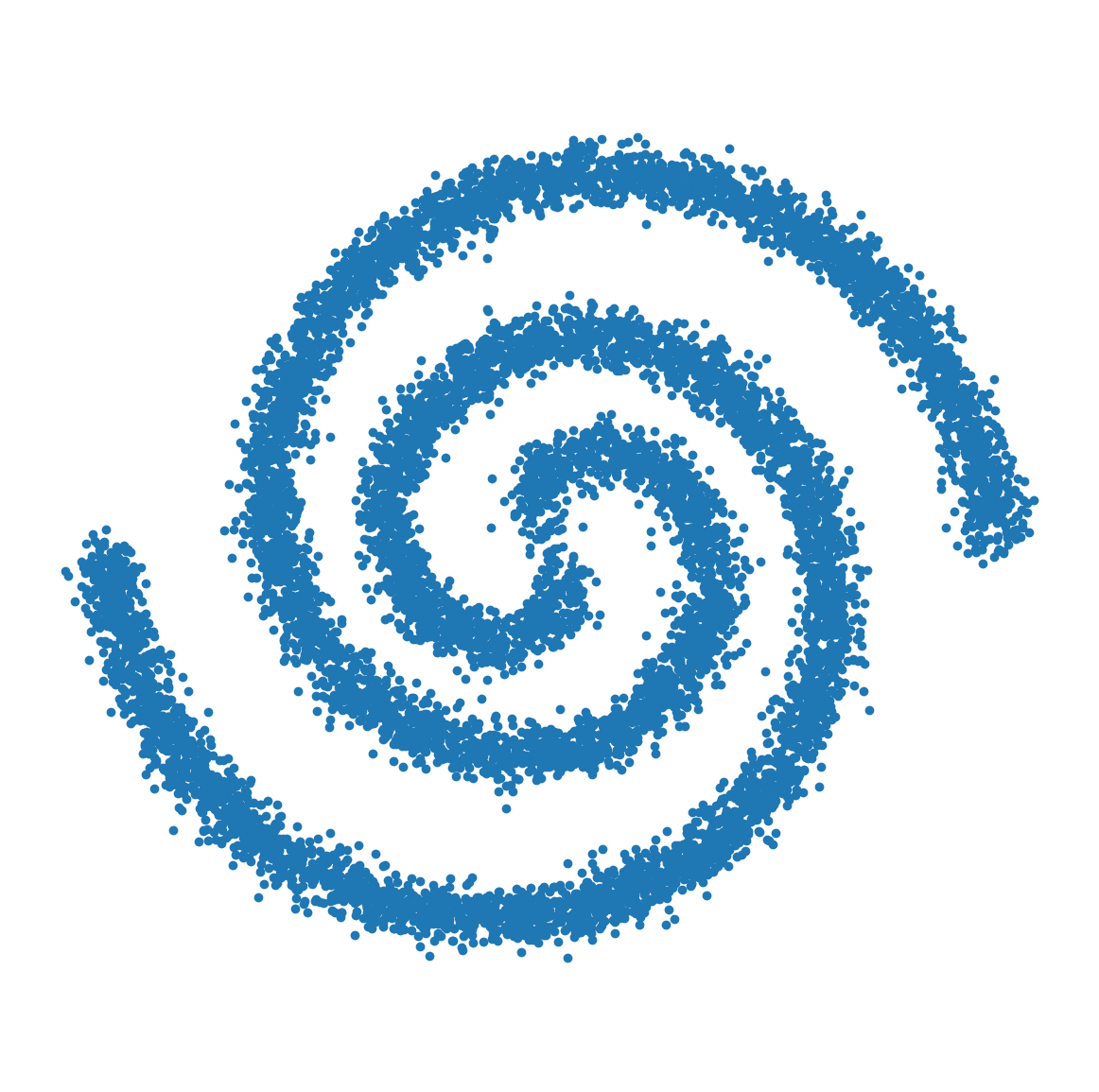} &
            \includegraphics[width=\linewidth]{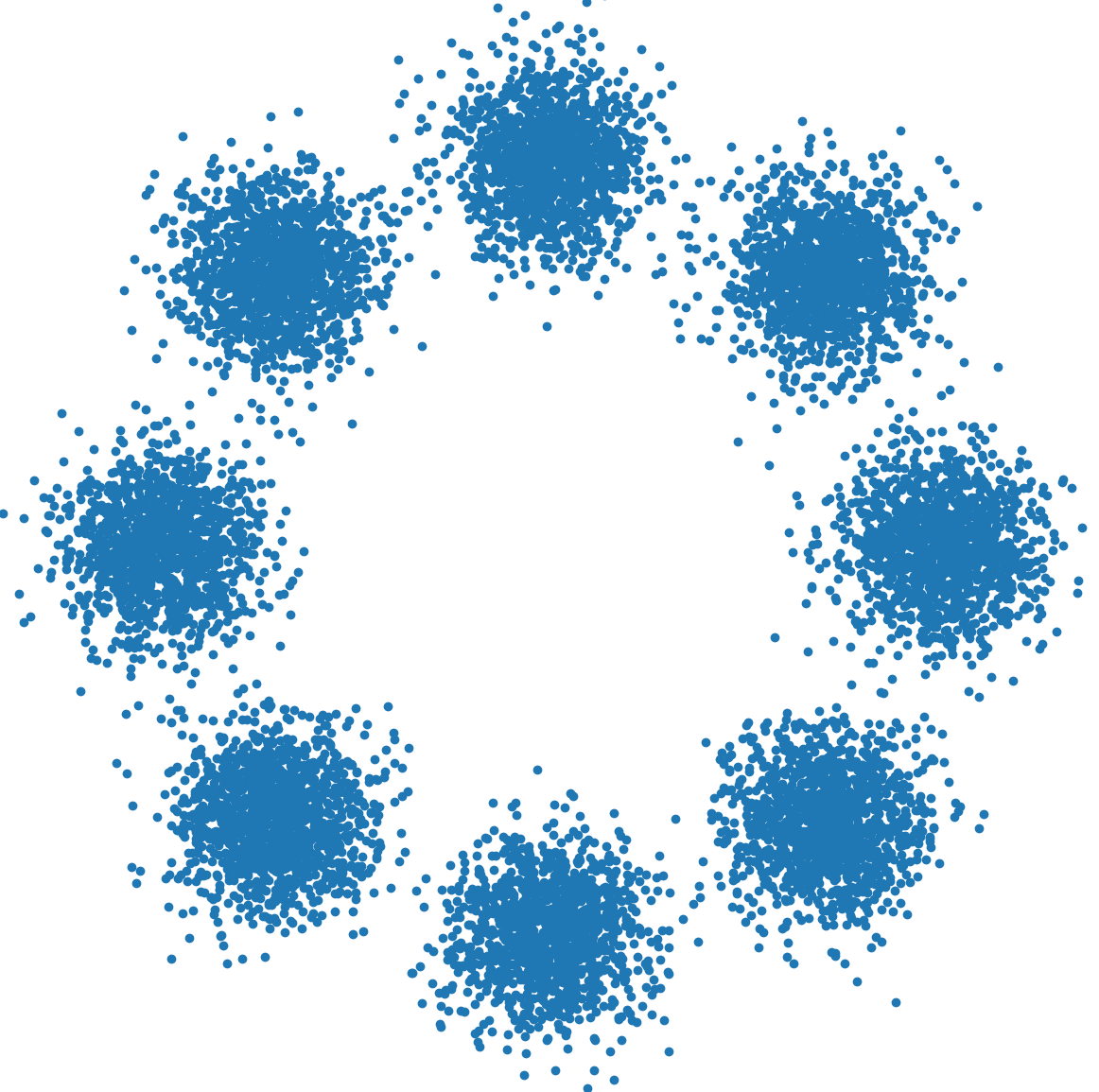} &
            \includegraphics[width=\linewidth]{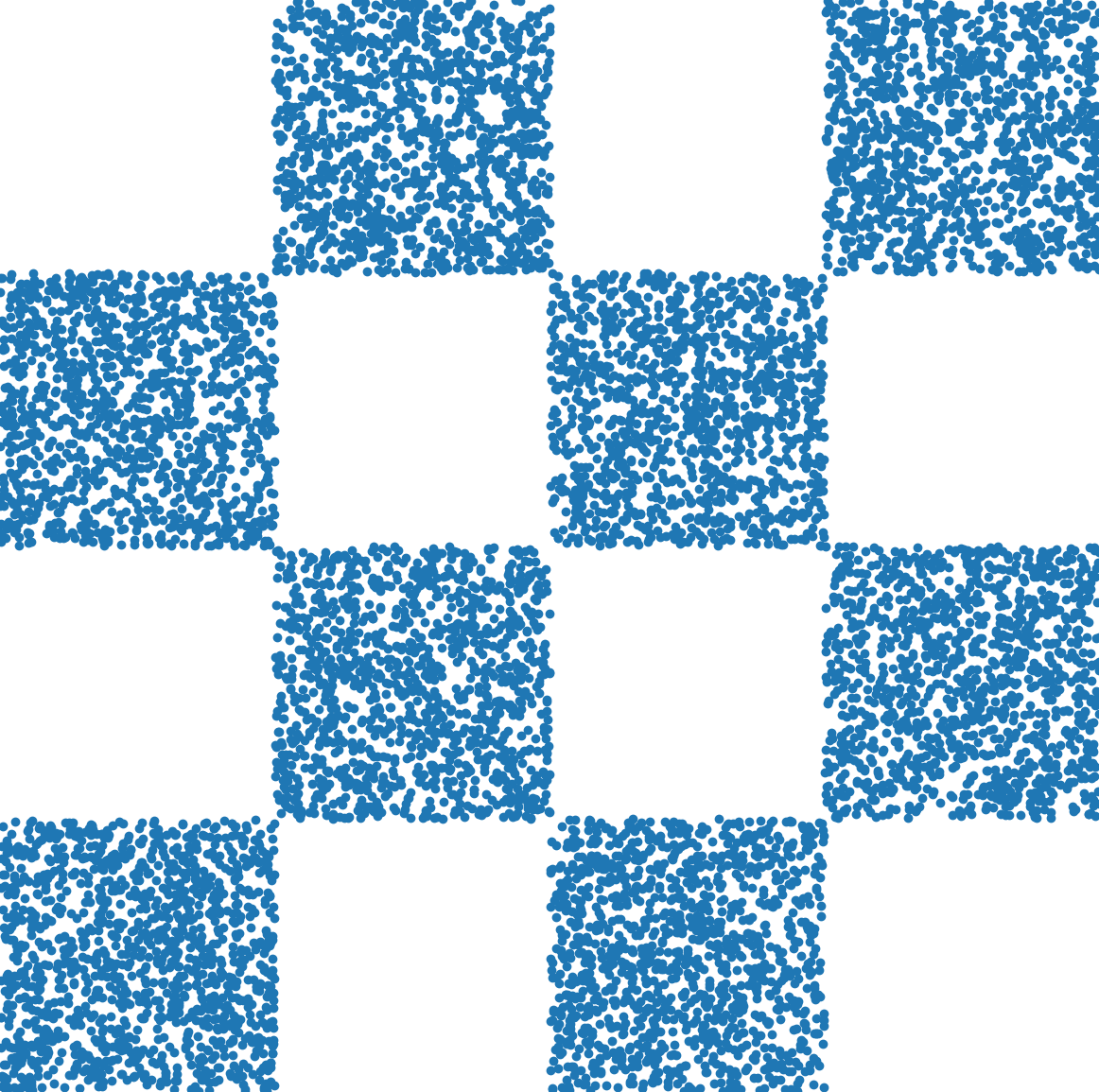} &
            \includegraphics[width=\linewidth]{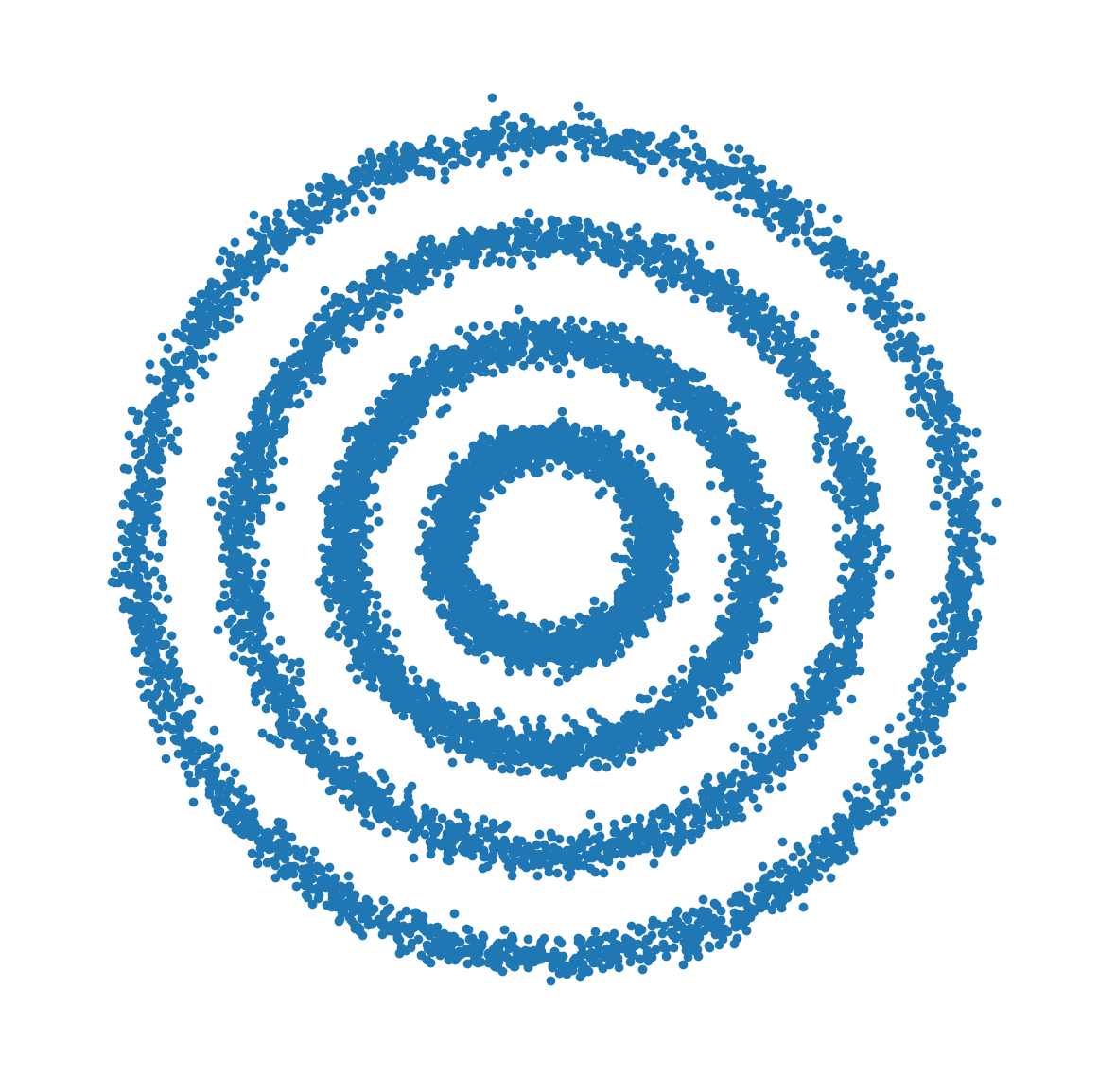}\\ 
            \textrm{GMM} &
            \includegraphics[width=\linewidth]{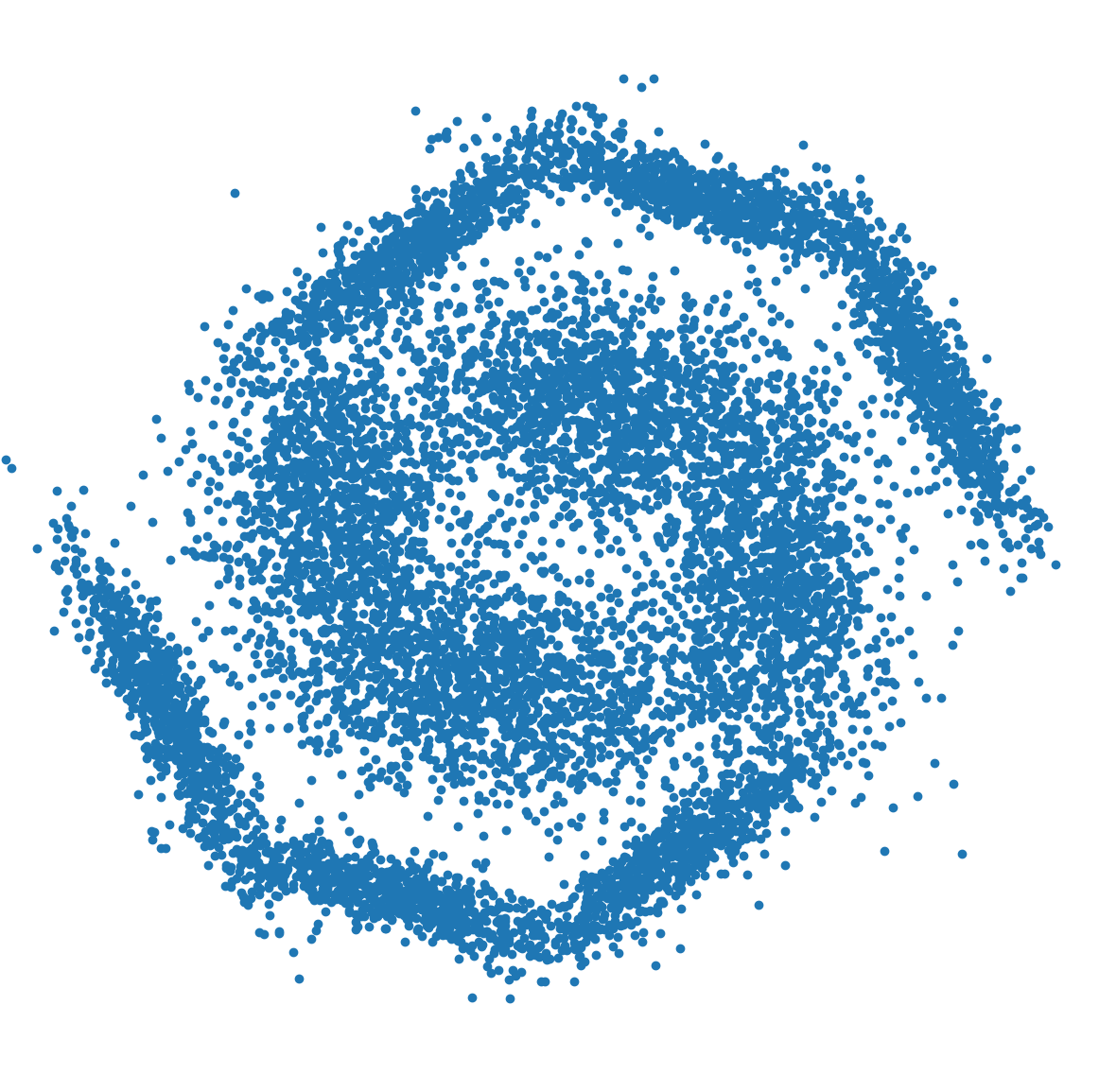} &
            \includegraphics[width=\linewidth]{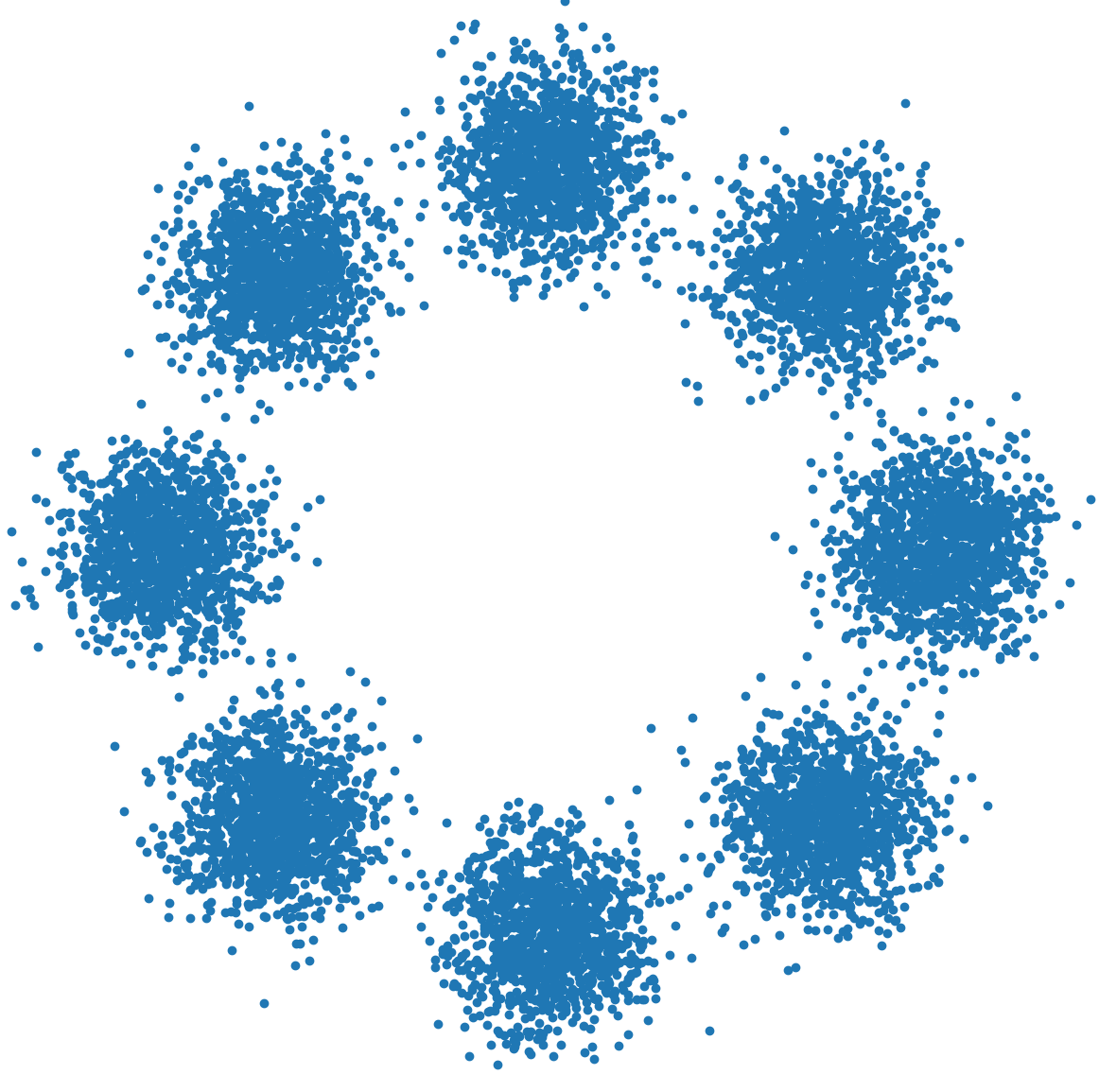} &
            \includegraphics[width=\linewidth]{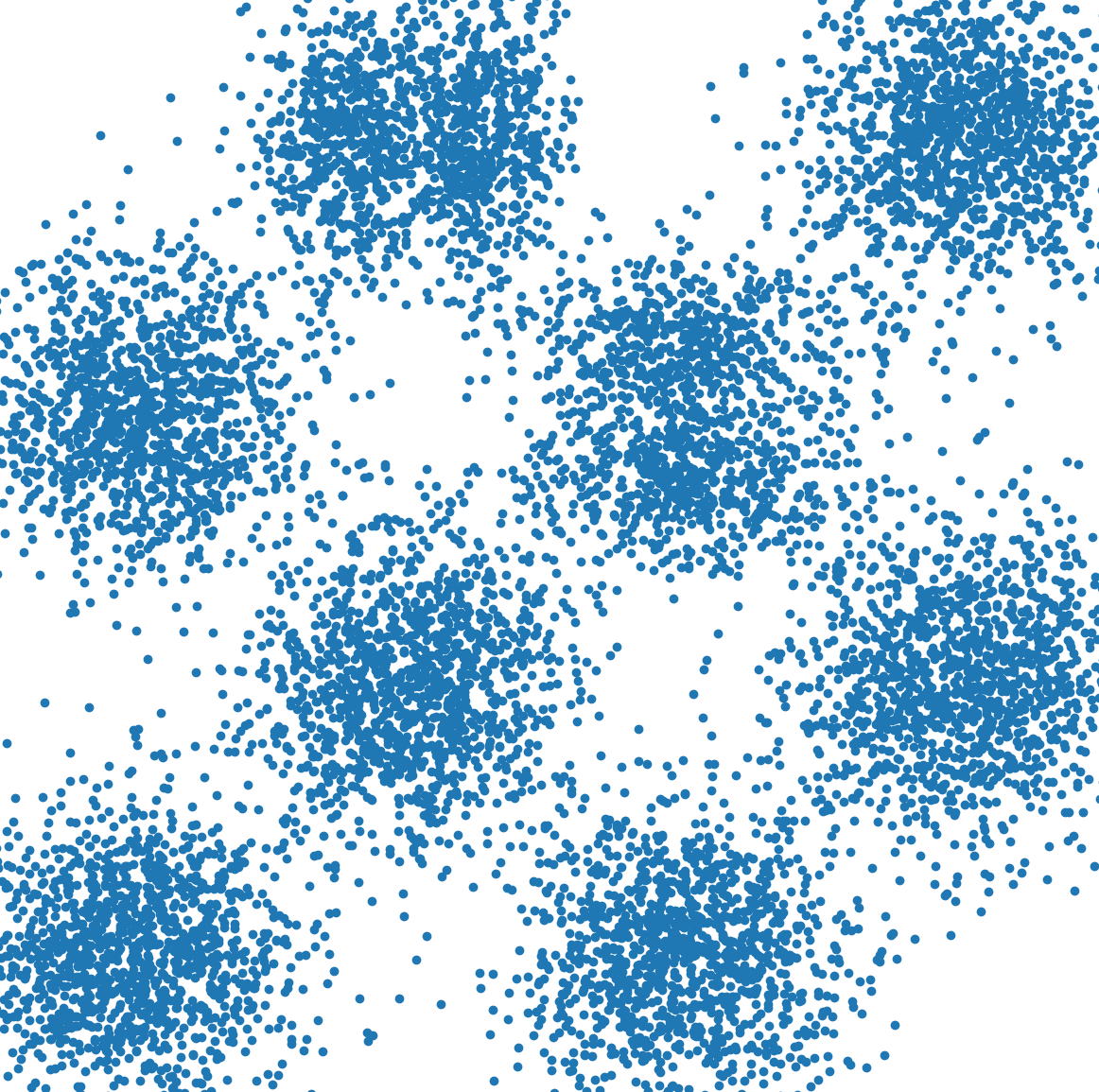} &
            \includegraphics[width=\linewidth]{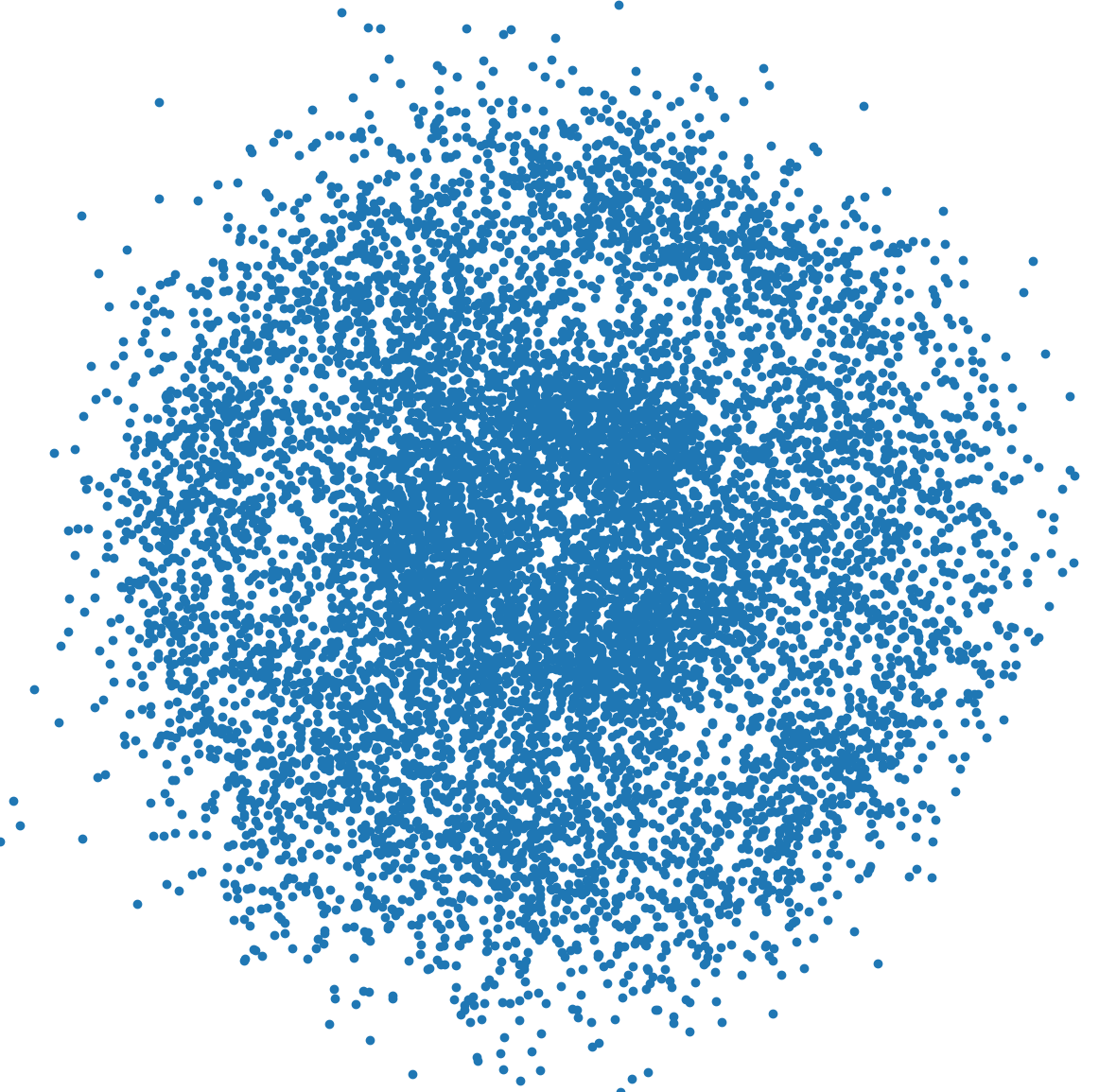}\\
            \textrm{Glow} &
            \includegraphics[width=\linewidth]{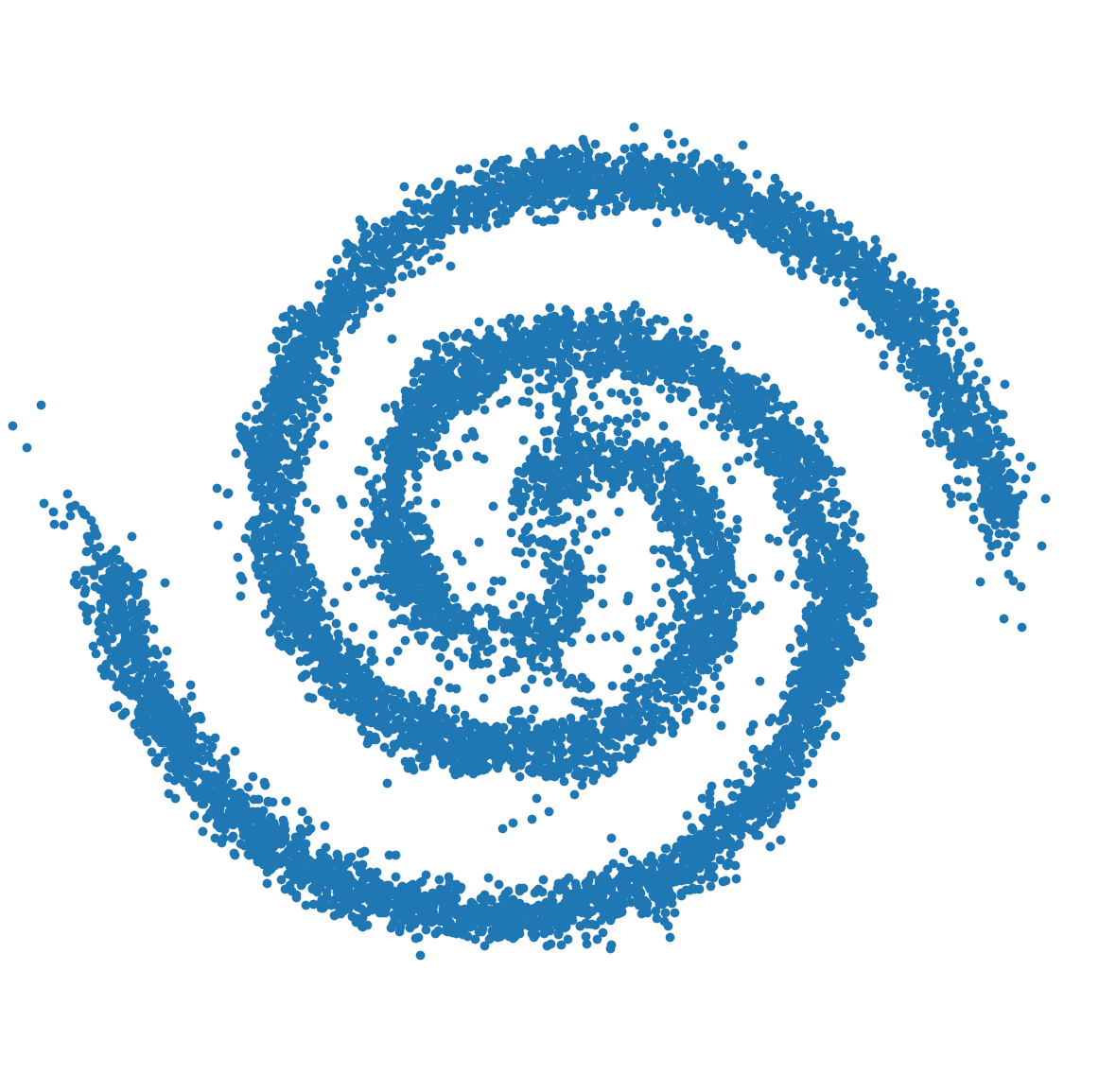} &
            \includegraphics[width=\linewidth]{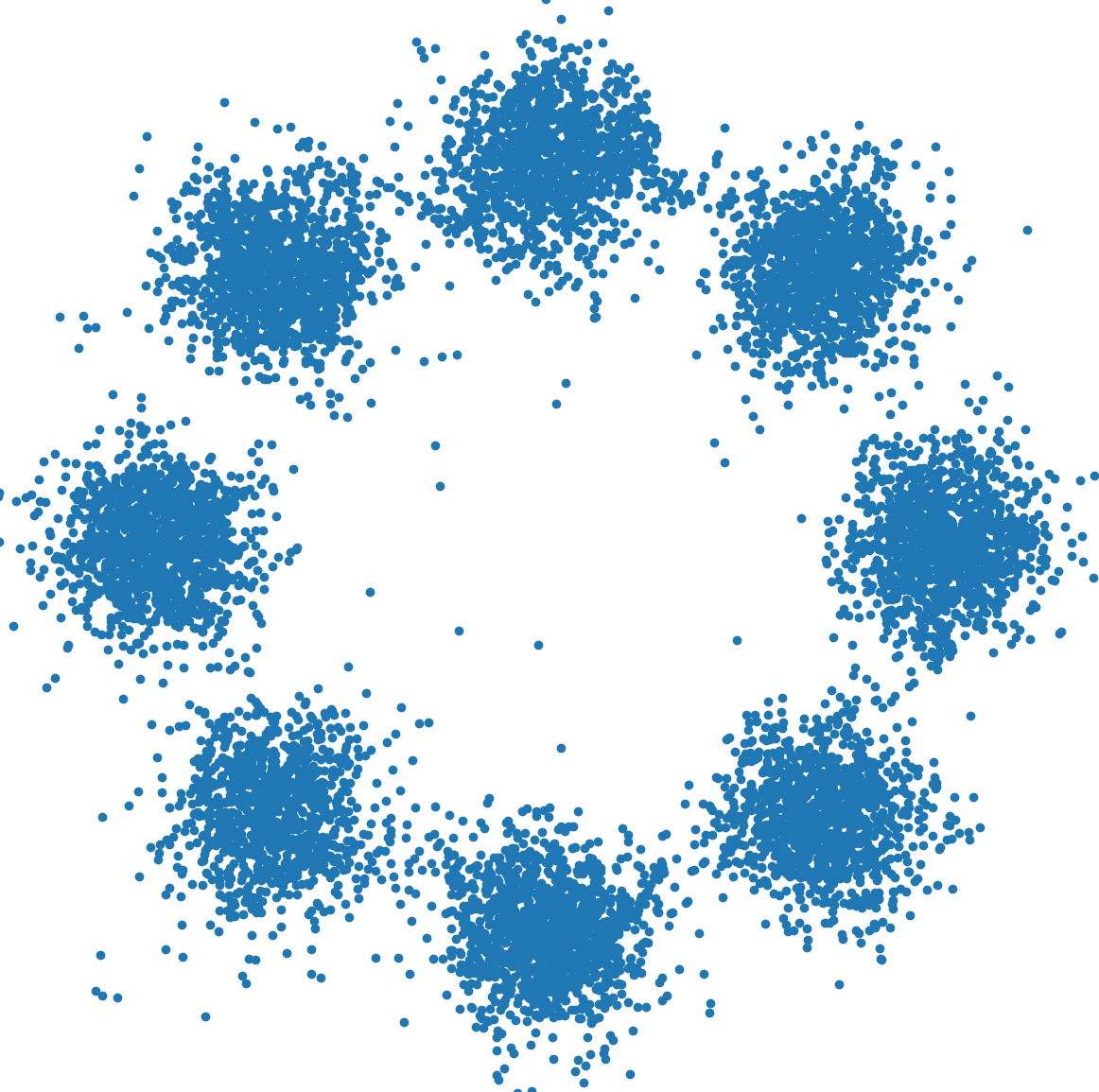} &
            \includegraphics[width=\linewidth]{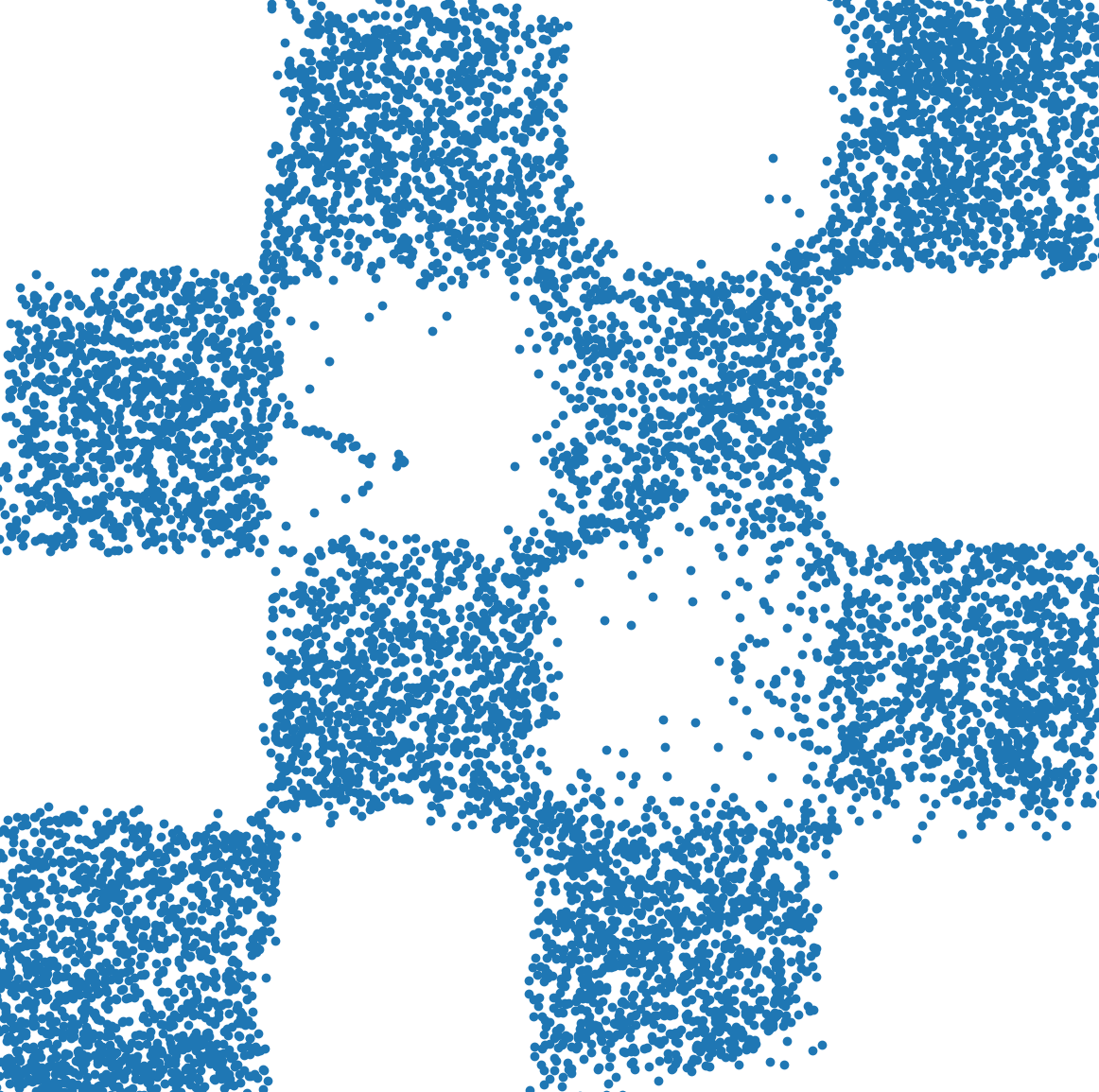} &
            \includegraphics[width=\linewidth]{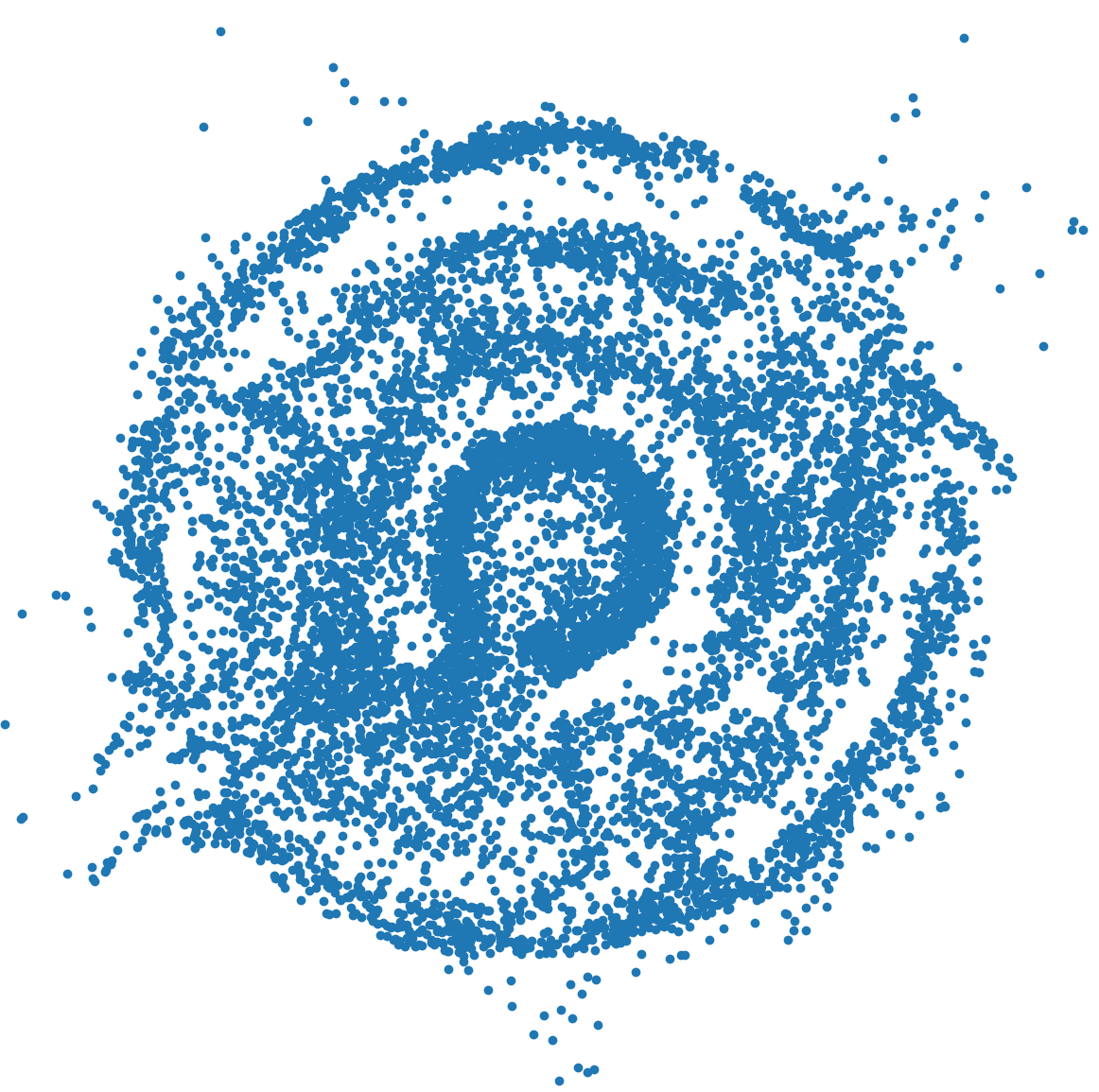}\\
            \textrm{NTK-kPF} &
            \includegraphics[width=\linewidth]{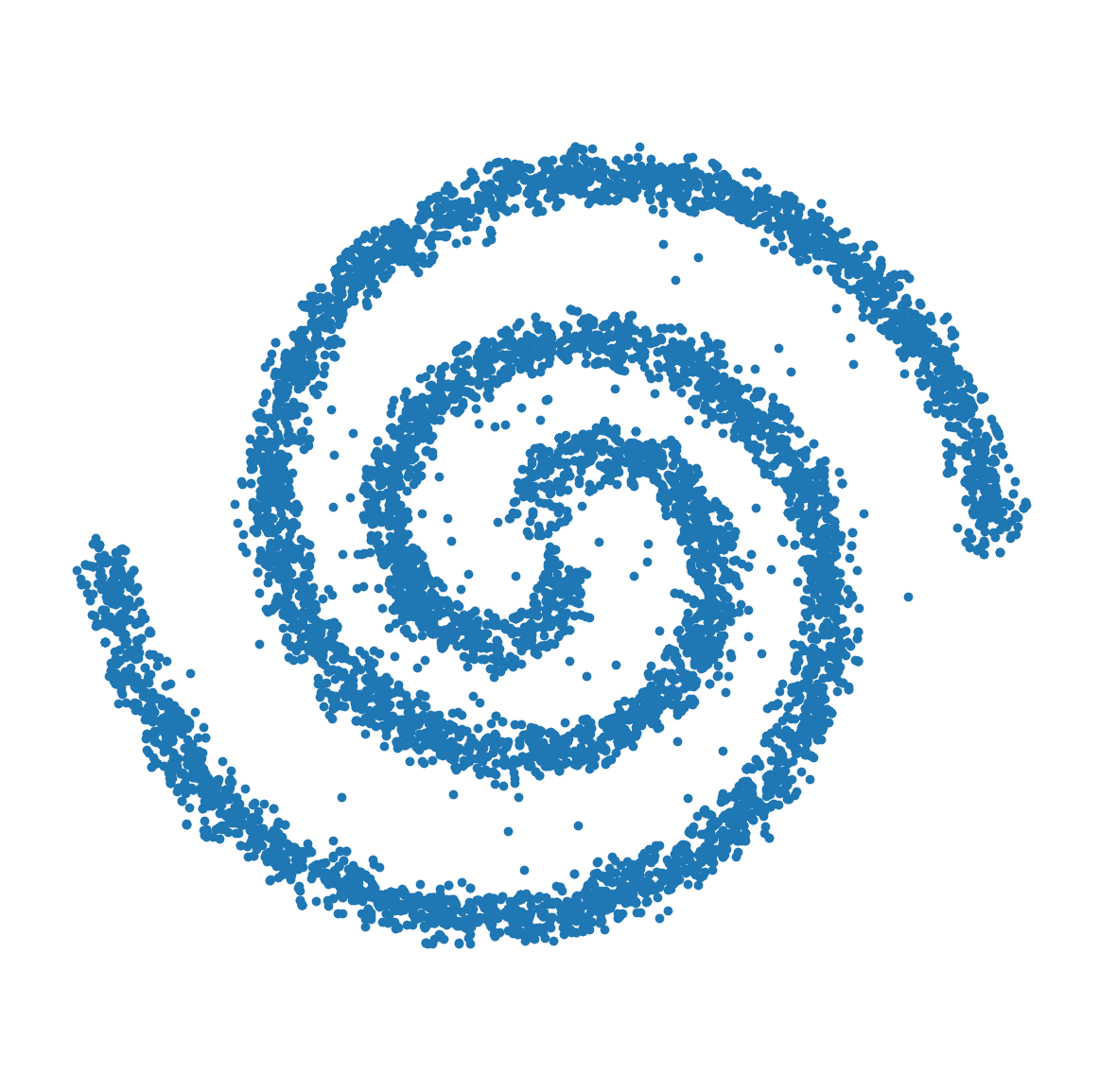} &
            \includegraphics[width=\linewidth]{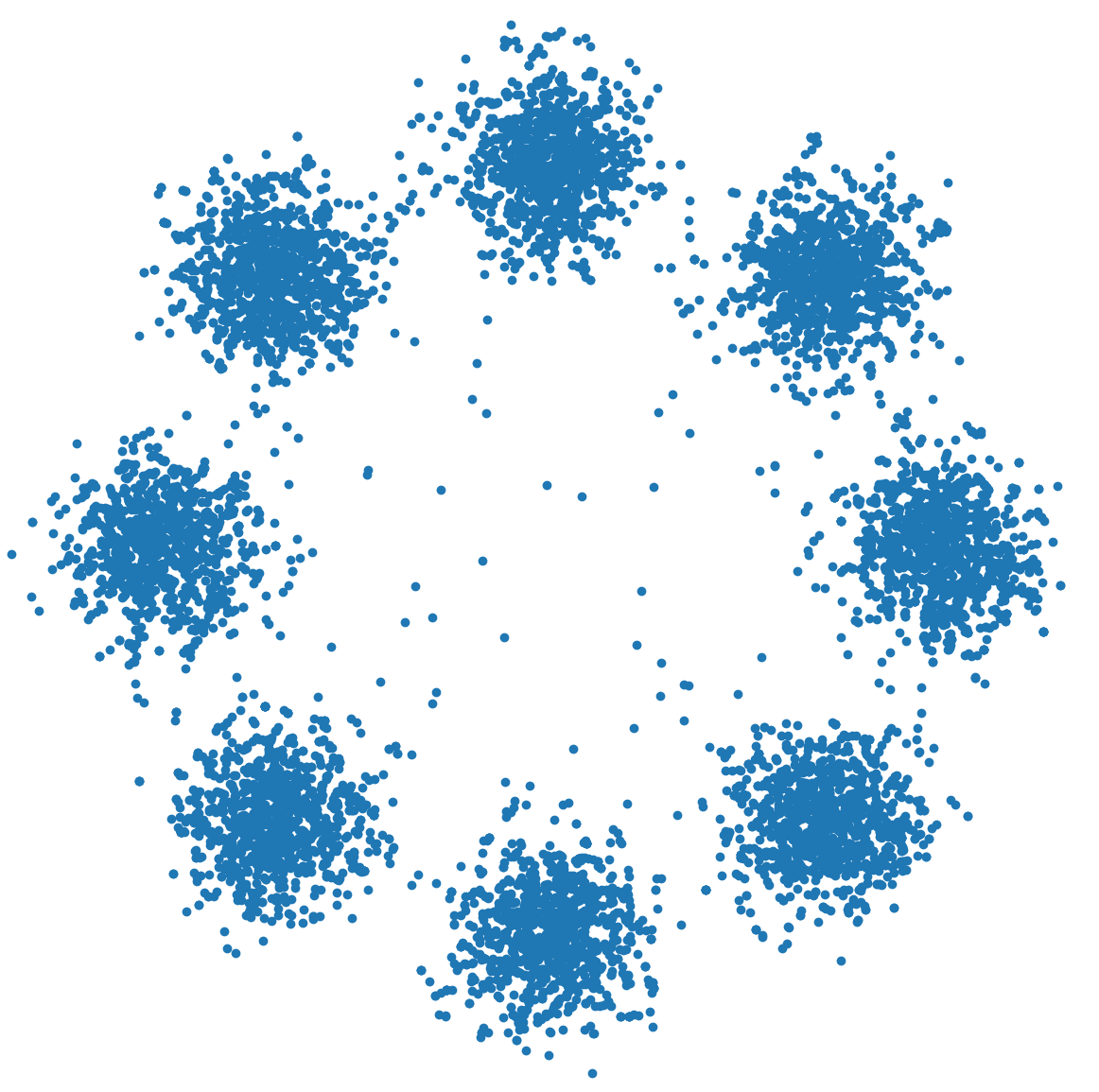} &
            \includegraphics[width=\linewidth]{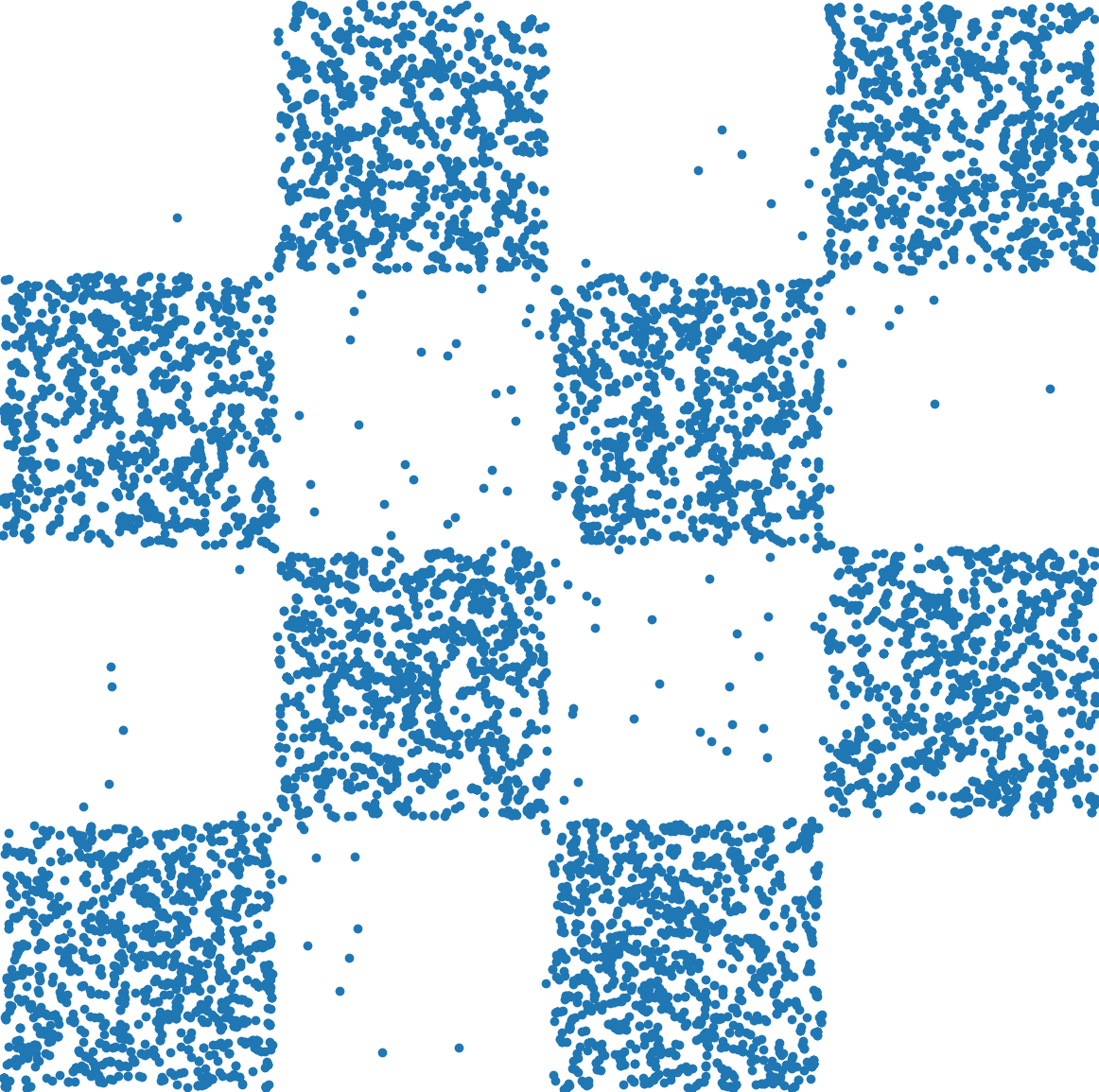} &
            \includegraphics[width=\linewidth]{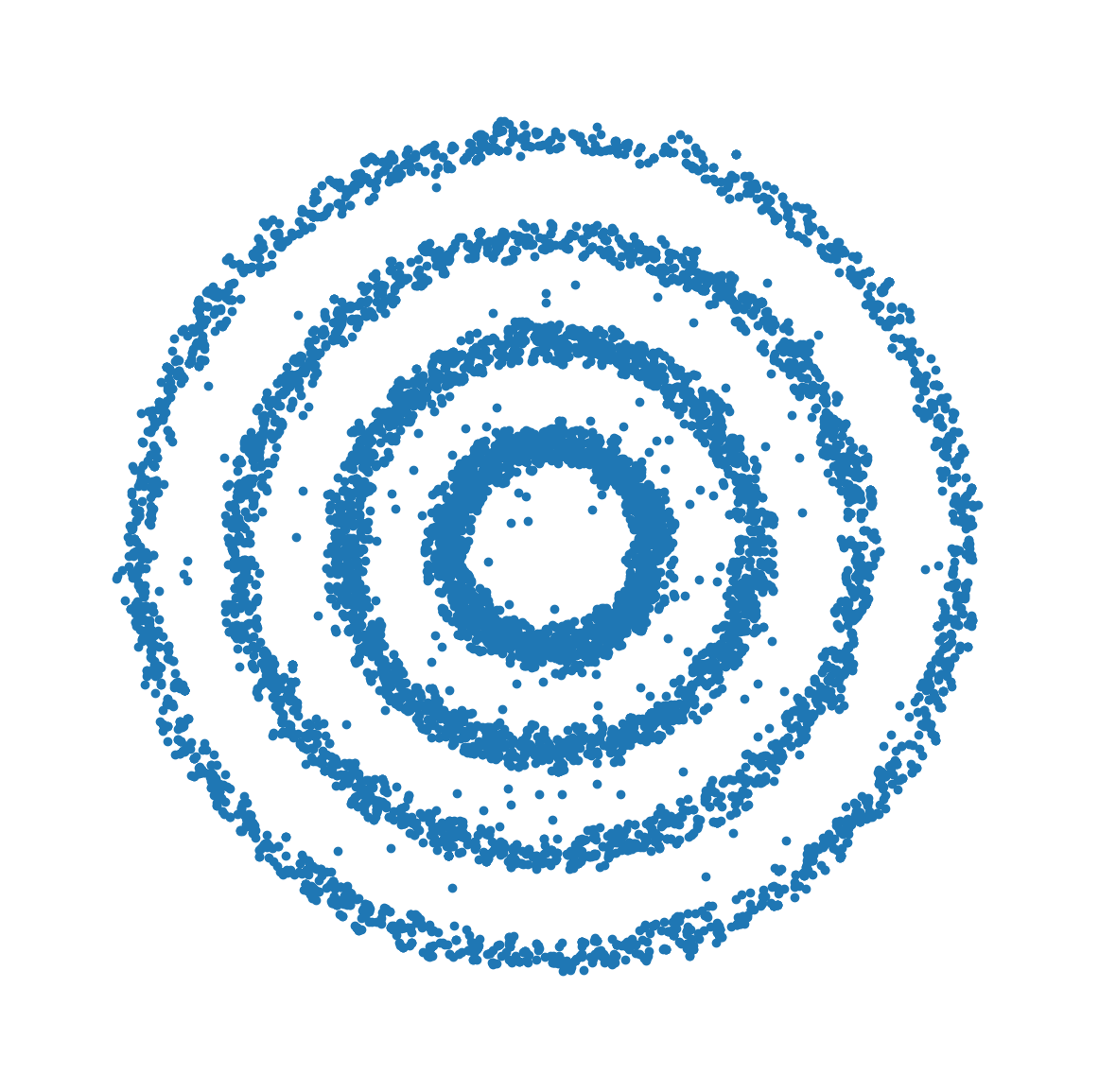}\\
        \end{tabular}
        \caption{\label{fig:samples}Sample comparisons of the distribution matching methods}
\end{figure}

\newcommand{\centered}[1]{\begin{tabular}{l} #1 \end{tabular}}
    %\end{minipage}
    %\hspace{0.02\textwidth}

\clearpage
\section{Effect of $\gamma$ on Sample Quality}

In the sampling stage, our proposed method finds the approximate preimage of the transferred kernel embeddings by taking the weighted Fr\'{e}chet mean of the top $\gamma$ neighbors among the training samples.
The choice of $\gamma$ therefore influences the quality of generation. 

From Figure \ref{fig:fid_v_gamma}, we can observe that,
in general, FID worsens as $\gamma$ increases. This observation aligns with our intuition of preserving only the local similarities represented by the kernel, and similar ideas have 
been previously used in the literature \citep{hastie2001statisticallearning,kwok2004pre}. However, significantly decreasing $\gamma$ leads 
to the undesirable result where the 
generator merely generates the training samples (in the extreme case where $\gamma = 1$, generated samples will just be reconstructions of training samples). Therefore, in our experiments, we choose $\gamma = 5$ to achieve a balance between generation quality and the distance to training samples.

\begin{figure}[h]
    \centering
    \includegraphics[width=0.6\textwidth]{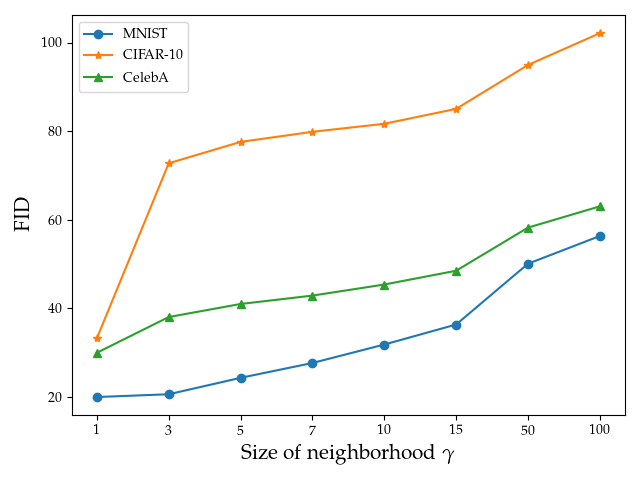}
    \caption{FID \textit{versus} $\gamma$ on a few computer vision datasets.}
    \label{fig:fid_v_gamma}
\end{figure}

\clearpage
\section{Weighted Fr\'{e}chet Mean on the Hypersphere}
\label{appdx:wfm_sphere}
While the weighted Fr\'{e}chet Mean in Euclidean space can be computed in closed-form as a weighted arithmetic mean (as in Eq. \ref{eq:wfm_euc}), on the hypersphere there is no known closed-form solution. Thus, we adopt the iterative algorithm in \citep{chakraborty2015recursive} for an approximate solution given data points $\mathbf{X} = \{\mathbf{x}_1 \dots \mathbf{x}_\gamma\}$ and weight vector $\mathbf{s}$:
\begin{align*}
    M_1 &= \mathbf{x}_1\\
    M_{i+1} &:= \cos(\|\mathbf{s}_{i+1}\mathbf{v}\|) M_i + \sin(\|\mathbf{s}_{i+1}\mathbf{v}\|)\frac{\mathbf{v}}{\|\mathbf{v}\|}
\end{align*}
where, $\mathbf{v} = \frac{\theta}{\sin(\theta)}\left(X_{i+1} - M_i \cos(\theta)\right), \theta = \arccos(M_i^tX_{i+1})$. This algorithm iterates through the data points once, yielding a complexity of only $O(\gamma d)$, where $d$ is the dimension of $\mathcal{X}$. Under the prescribed iteration, $M_{n}$ converges asymptotically to the true Fr\'{e}chet mean for finite data points. We refer the readers to \citep{chakraborty2015recursive} for further details.

\clearpage
\section{Fast approximation of Moore-Penrose inverse}

When computing the inverted kernel matrix $K_\textrm{inv}$ in Algo. \ref{alg:gen_algo}, conventional approaches typically performs SVD or Cholesky decomposition. Both procedures are hard to parallelize, and therefore, can be slow when $K$ is large. Alternatively, we can utilize an iterative procedure proposed in \citet{Razavi2014} to approximate the Moore-Penrose inverse.

\begin{align}
    Z_{1} &= K / (\Vert K \Vert_1 \Vert K \Vert_\infty)\\
    Z_{i+1} &:= Z_i (13I - KZ_i (15I - KZ_i (7I - KZ_i ))) 
\end{align}
where 
\begin{equation}
    \Vert K \Vert_1 = \max_j \sum_{i = 0}^{n} K_{ij}, \Vert K \Vert_\infty = \max_i \sum_{j = 0}^{n} K_{ij} 
\end{equation}

Since this iterative procedure mostly involves matrix multiplications, it can be efficiently parallelized and implemented on GPU. The same procedure has also seen success in approximating large self-attention matrices in language modeling \citep{xiong2021nystromformer}. For the NVAE experiment, we run this iteration for $10$ steps and use $K_\textrm{inv} = Z_{10}$.

\clearpage
\section{Nystrom Approximation of kPF}
\label{sec:nystrom}

Due to the need to store and compute a kernel matrix inverse $(K + \lambda nI)^{-1}$ or $K^\dagger$, the memory and computational complexity of kPF is at least $O(n^2)$ and $O(n^3)$, respectively. The sup-quadratic complexity hinders the use of kPF on extremely large datasets. In our experiments, we already adopted a simple subsampling strategy which randomly select 10k training samples from each dataset ($\leq$ 50k samples) to fit our hardware configuration which works 
well. But for larger datasets with potentially more modes, a larger set of subsamples must be considered, and in those cases kPF may not be 
suitable for commodity/affordable hardware. 
In order to overcome this problem, we can combine kPF with conventional kernel approximation methods such as the Nystr\"om method \citep{williams2000nystrom}.

Let $(\mathbf{X_\star}, \mathbf{Z_\star})$ be a size $v$ subset of the training set (which we refer to as the \textit{landmark points}) and $(\Psi_\star, \Phi_\star)$ be their corresponding kernel feature maps. The weighting coefficients $\mathbf{s}$ for each prior sample $z^* \sim Z$ derived in Alg. \ref{alg:gen_algo} can be approximated by
\begin{align}
    % \mathbf{s} = \Psi^\top \Psi' (\Psi'^\top \Psi')^\dagger \Psi'^\top \Psi(\Phi^\top \Phi' (\Phi'^\top \Phi')^\dagger \Phi'^\top \Phi)^\dagger
    \nonumber\mathbf{s} &= L (K + \lambda nI)^\dagger \Phi^\top k(z^\star, \cdot) \\\nonumber
    &\approx L_{\Psi_\star} W_{\Psi_\star}^\dagger L_{\Psi_\star}^\top (K_{\Phi_\star} W_{\Phi_\star}^\dagger K_{\Phi_\star}^\top + \lambda nI)^\dagger \Phi^\top k(z^\star, \cdot) \\
    &= L_{\Psi_\star} W_{\Psi_\star}^\dagger L_{\Psi_\star}^\top (\lambda n)^{-1}(I - K_{\Phi_\star}^\top(\lambda n W_{\Phi_\star}^\dagger + K_{\Phi_\star} K_{\Phi_\star}^\top)^\dagger K_{\Phi_\star}) \Phi^\top k(z^\star, \cdot)
\end{align}
where $L_{\Psi_\star} = \Psi^\top \Psi_\star \in \mathbb{R}^{n \times v}$, $W_{\Psi_\star} = \Psi_\star^\top \Psi_\star \in \mathbb{R}^{v \times v}$, $K_{\Phi_\star} = \Phi^\top \Phi_\star \in \mathbb{R}^{n \times v}$, $W_{\Phi_\star} = \Phi_\star ^\top \Phi_\star \in \mathbb{R}^{v \times v}$, and the last identity is due to applying the Woodbury formula on $(K_{\Phi_\star} W_{\Phi_\star}^\dagger K_{\Phi_\star}^\top + \lambda nI)^\dagger$. Assuming $v \ll n$, the memory complexity is reduced to $O(nv)$ and the computation complexity to $O(nv^2)$.

We empirically evaluated the Nystr\"om-approximated kPF on the CelebA experiment and present the result in Tab. \ref{tab:fid_nystrom}. It can be observed that when $v$ is sufficiently large, the performance of Nystr\"om approximated kPFs is as good as 
the ones using the full kernel matrices.

\begin{table}[h]
    \centering
    \begin{tabular}{c || c | c | c| c}
    \toprule
    \diaghead{NVNV}{$n$}{$v$} & 100 & 500 & 1000 & \thead{w/o Approximation}\\
    \midrule\midrule
         10,000 & 45.9 & 41.6 & 42.3 & 41.8\\
         \midrule
         30,000 & 46.3 & 40.5 & 42.4 & -\\
         \midrule
         50,000 & 45.2 & 44.1 & 42.0 & -\\
    \bottomrule
    \end{tabular}
    \caption{FIDs of samples generated by Nystr\"om-approximated NTK-kPF on CelebA. $n$ denotes the size of the training data subset we consider in computing kPF, while $v$ denotes the size of selected landmark points for Nystr\"om approximation. Without approximation, we cannot fit the kernel matrices onto a GPU with 11GB RAM when $n > 10,000$. It is worth noting that the approximated kPFs can perform similarly to the full kPF even with $v < 0.05n$, which indicates that Nystr\"om approximation does not sacrifice much in terms of performance while delivering significant efficiency gain.}
    \label{tab:fid_nystrom}
\end{table}

\clearpage
\section{Does kPF Memorize Training Data?}

Since in kPF, samples are generated by linearly interpolating between training samples, it is natural to wonder whether it `fools' the metrics by simply replicating training samples. For comparison, we consider an alternative scheme that generates data through direct manipulation of the training data, namely Kernel Density Estimation (KDE). 

We fit KDEs by varying noise levels $\sigma$ and compare their FIDs and nearest samples in the latent space to kPF in Fig. \ref{fig:overfitting}. We observe that, although KDE can reach very low FIDs when $\sigma$ is small, almost all new samples closely resemble some instance in the training set, which is a clear indication of memorization. In contrast, kPF can generate diverse samples that do not simply replicate the observed data.

\begin{figure}[ht]
    \begin{subfigure}[b]{0.49\linewidth}
    \centering
    \begin{tikzpicture}
    \node[inner sep=0pt] (img) at (0,0) {\includegraphics[width=\textwidth]{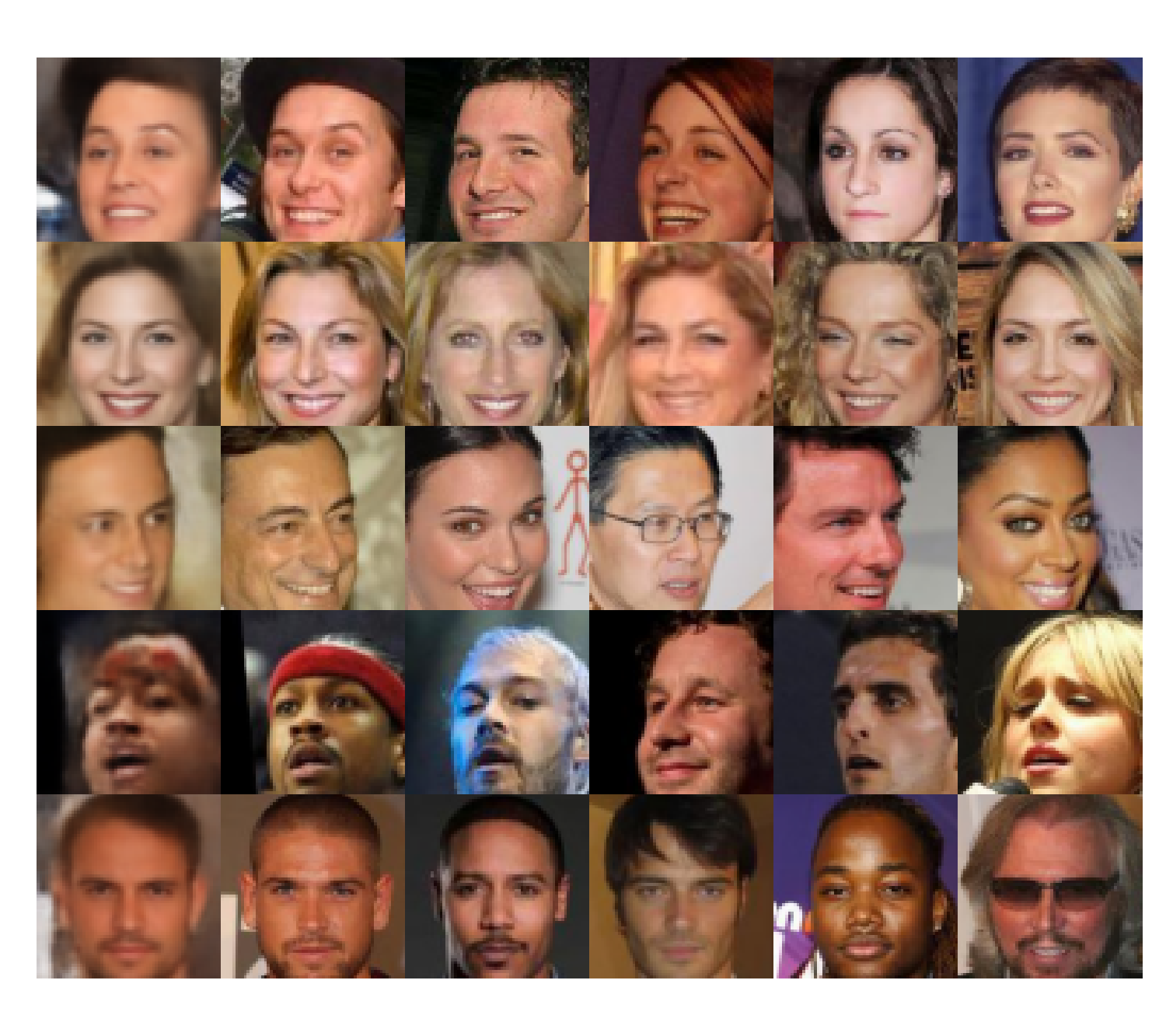}};
    \draw[draw=red, line width=2pt] (-3.2, -2.7) rectangle (-2.15, 2.7);
    \end{tikzpicture}
    \caption{KDE, $\sigma = 0.005$, FID: 30.9}
    \end{subfigure}
    \begin{subfigure}[b]{.49\linewidth}
    \centering
    \begin{tikzpicture}
    \node[inner sep=0pt] (img) at (0,0) {\includegraphics[width=\textwidth]{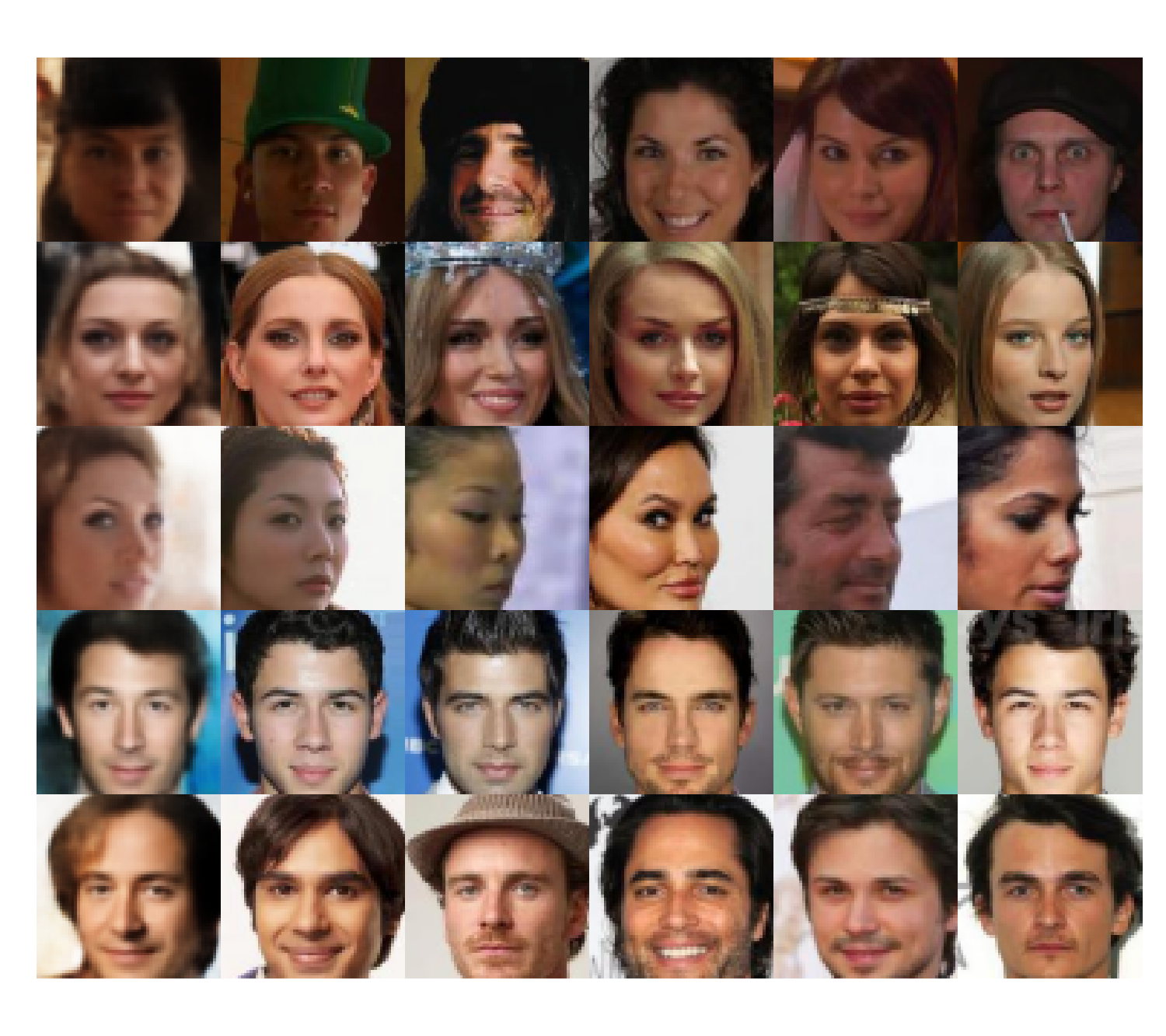}};
    \draw[draw=red, line width=2pt] (-3.2, -2.7) rectangle (-2.15, 2.7);
    \end{tikzpicture}
    \caption{KDE, $\sigma = 0.01$, FID: 36.3}
    \end{subfigure}\\
    \centering
    \begin{subfigure}[b]{.49\linewidth}
    \centering
    \begin{tikzpicture}
    \node[inner sep=0pt] (img) at (0,0) {\includegraphics[width=\textwidth]{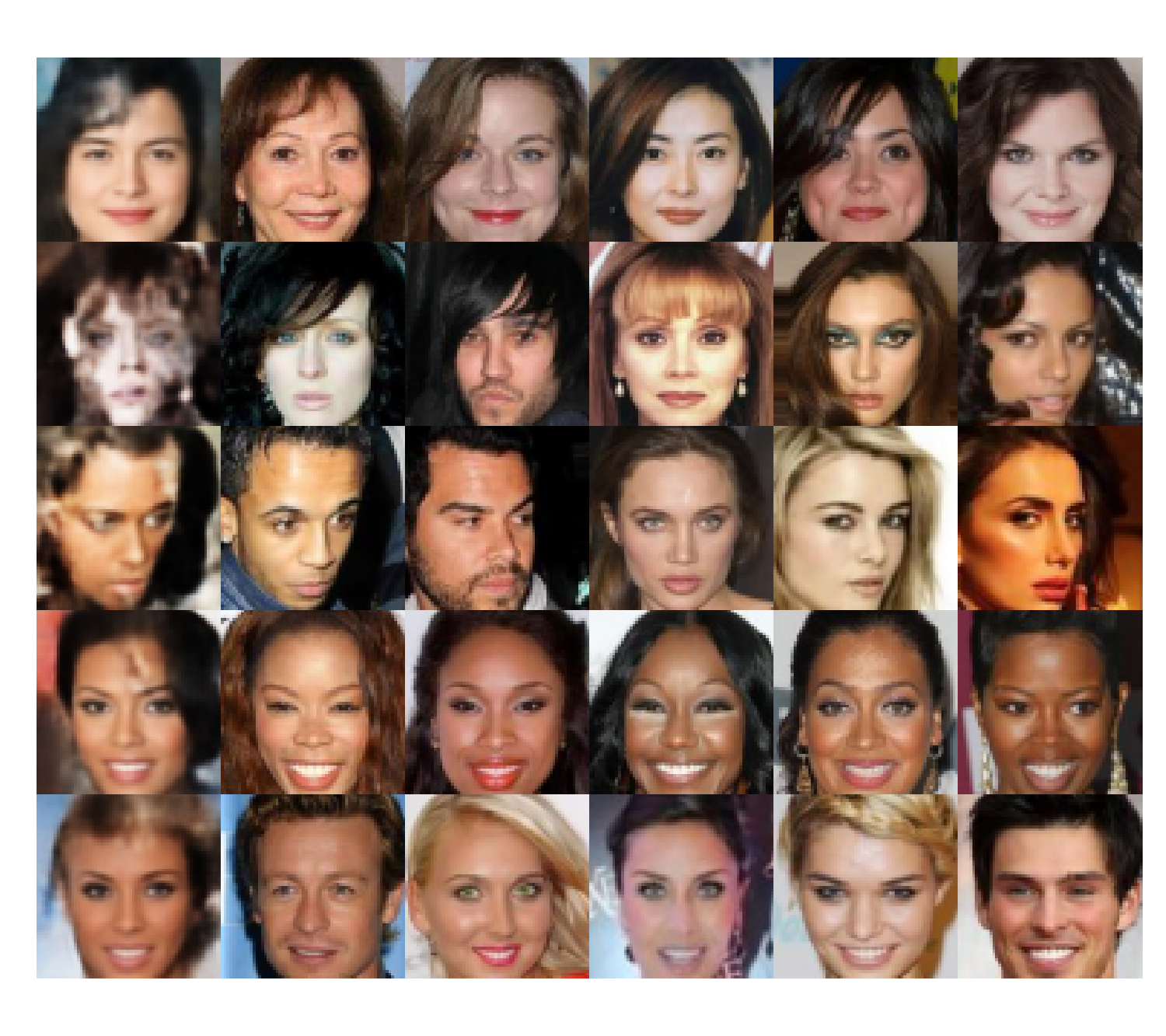}};
    \draw[draw=red, line width=2pt] (-3.2, -2.7) rectangle (-2.15, 2.7);
    \end{tikzpicture}
    \caption{KDE, $\sigma = 0.02$, FID: 48.0}
    \end{subfigure}
    \begin{subfigure}[b]{.49\linewidth}
    \centering
    \begin{tikzpicture}
    \node[inner sep=0pt] (img) at (0,0) {\includegraphics[width=\textwidth]{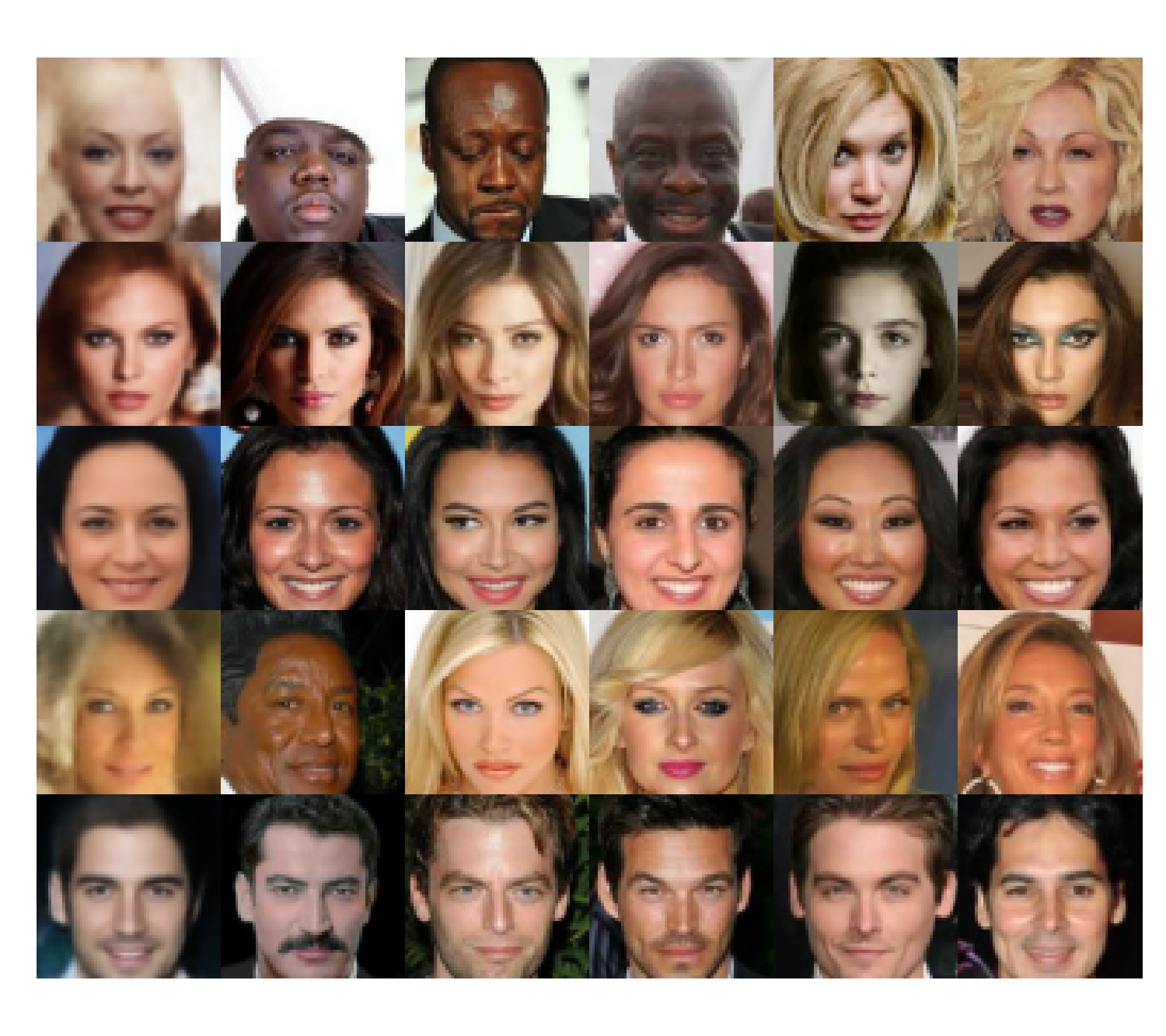}};
    \draw[draw=red, line width=2pt] (-3.2, -2.7) rectangle (-2.15, 2.7);
    \end{tikzpicture}
    \caption{kPF, FID: 41.0}
    \end{subfigure}
    \caption{Comparing KDE to kPF. Generated samples are presented in \textcolor{red}{$\msquare$}, followed by their 5 nearest neighbors in the latent space (ordered from closest to furthest)}
    \label{fig:overfitting}
\end{figure}

\clearpage
\section{Assessing Sample Diversity}

Although FID is one of the most frequently used measures for assessing sample quality of generative models, certain diversity considerations, such as mode collapse, may not be conveniently deduced from it \citep{sajjadi2018prd}. To enable explicit examination of generative models with respect to both accuracy (i.e., generating samples within the support of the data distribution) and diversity (i.e., covering the support of the data distribution as much as possible), \citet{sajjadi2018prd} proposed an approach to evaluate generative models with generalized definitions of \textit{precision} and \textit{recall} between distributions. Quality of generation can then be assessed by evaluating the PRD curve, which depicts the trade-offs between accuracy (precision) and diversity (recall). We present the PRD curves in Fig. \ref{fig:prd}. The observations align with our results in Tab. \ref{tab:fid_table} and kPF performs competitively in both accuracy and sample diversity.

\begin{figure}[h]
    \centering
    \begin{tabular}{c c c}
        \quad MNIST & \quad CIFAR-10 & \quad CelebA\\
        \includegraphics[width=0.3\textwidth]{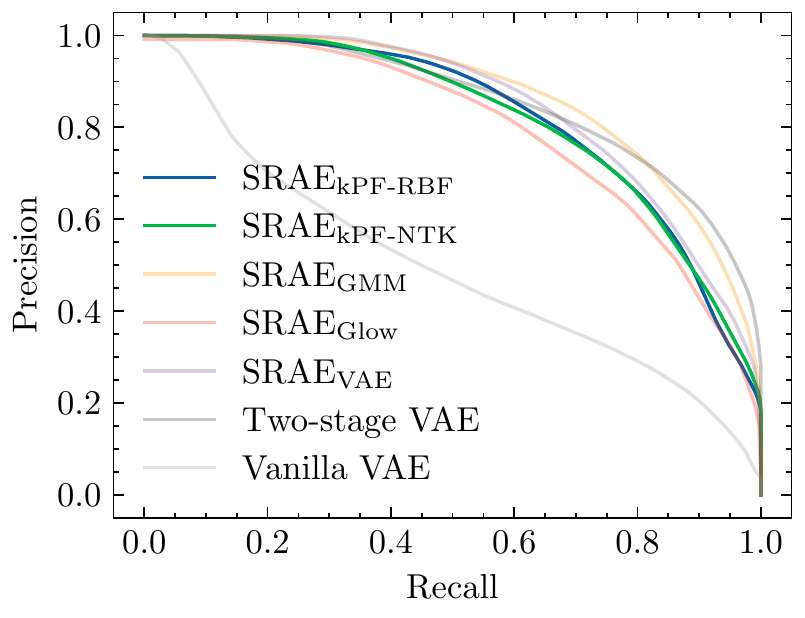}&
        \includegraphics[width=0.3\textwidth]{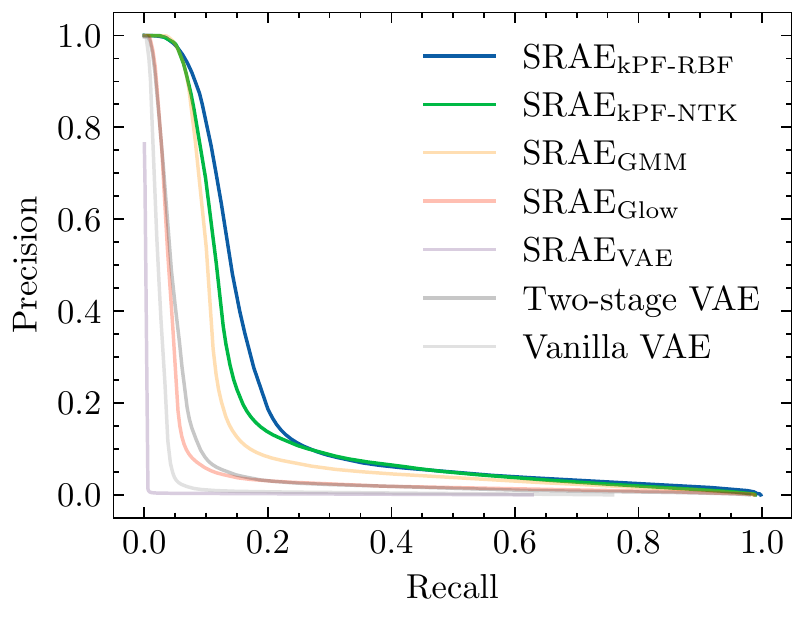}&
        \includegraphics[width=0.3\textwidth]{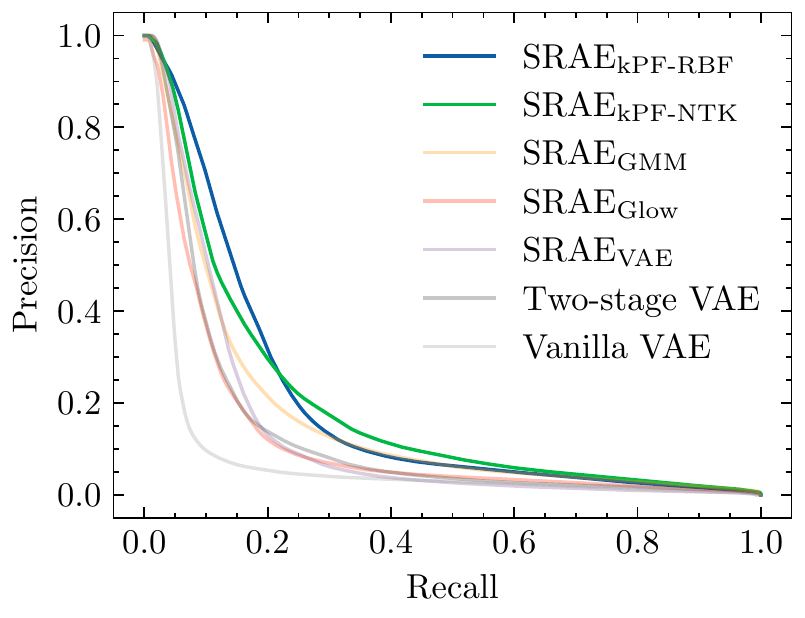}\\
    \end{tabular}
    \caption{PRD curves on all datasets. kPF is competitive to the other methods in terms of Area Under Curve (AUC)}
    \label{fig:prd}
\end{figure}

\clearpage
\section{Exploring Kernel Configurations}

To investigate the implication of kernel choices on generation quality, we tested 25 different kernel configurations for CelebA generation (results are presented in Tab. \ref{tab:kernel_config}). For RBF kernels used in the CelebA experiments of the main text, we use a bandwidth of $\sigma_{in} = {\sqrt{2 |\spc{Z}|}} / {8} \approx 0.71$ when used as input kernel and $\sigma_{out} \approx 0.34$, and we adopt the same notation here.

\begin{table}[h]
    \centering
    \begin{tabular}{ c | c | c | c | c | c}
        \toprule
         \diaghead{\theadfont Input kernel kernel}{Input \\ kernel}{Output \\ kernel}
         & \thead{RBF \\($\sigma = \sigma_{out} / 4$)} & \thead{RBF \\($\sigma = \sigma_{out} / 2$)}& \thead{RBF \\($\sigma = \sigma_{out}$)}& \thead{RBF \\($\sigma = 2\sigma_{out}$)}& \thead{RBF \\($\sigma = 4\sigma_{out}$)}\\
         \midrule\midrule
         \thead{RBF \\($\sigma = \sigma_{in} / 2$)} & 41.50 & 41.20 & 41.21 & 50.92 & 66.06 \\\midrule
         \thead{RBF \\($\sigma = \sigma_{in}$)} & 41.90 & 42.11 & 41.91 & 45.83 & 50.70 \\\midrule
         \thead{RBF \\($\sigma = 2\sigma_{in}$)} & 42.20 & 42.82 & 42.69 & 65.76 & 70.19 \\\midrule
         \thead{NTK \\ ($L=8$, $w=10,000$)} & 41.90 & 41.56 & 41.73 & \textbf{37.86} & 37.89  \\\midrule
         \thead{Arccos \\ ($L=1$, $\textrm{deg}=1$)} & 41.71 & 42.03 & 42.22 & 52.83 & 63.13\\
         \bottomrule
         
    \end{tabular}
    \caption{FID table for different kernel configurations.}
    \label{tab:kernel_config}
\end{table}

It can be seen that kernel configurations and parameters indeed has a non-trivial impact on the generation quality, with NTK-kPF being the most robust to the choice of parameters. This aligns with our previous observations and offers some support for using NTK as an input kernel despite the additional compute cost.

\clearpage

\section{Experimental Details}
\label{sec:experimental_details}

In this section, we provide the detailed specifications for all of our experiments. We have also provided our code in the supplemental material.
\subsection{Density Estimation on Toy Densities}

We generated $10000$ samples from each of the toy densities to learn the kPF operator. The input kernel $k$ is a ReLU-activated NTK corresponding to a fully-connected network with depth $L = 4$ and width $w = 10000$ at each layer, and the output kernel $l$ is a Gaussian kernel. Unless specified otherwise, we always uses a Gaussian kernel as the output kernel for the remainder of this appendix. The bandwidth of the output kernel was adjusted separately for density estimation and sampling for the purpose of demonstration. For comparison, we also fit/estimate a 10-component GMM and a Glow model with 50 coupling layers, where each of them were trained until convergence.

\subsection{Image Generation with Computer Vision Datasets}

To generate results in Tab. \ref{tab:fid_table} and Tab. \ref{tab:fid_table_limited}, we first trained an autoencoder for each dataset following the model setup in \citep{Ghosh2020From}, which uses a modified Wasserstein autoencoder \citep{tolstikhin2018wasserstein} architecture with batch normalization. Additionally, we applied spectral normalization on both the encoder and the decoder, following \citep{Ghosh2020From}, to obtain a regularized autoencoder. The latent representations were projected onto a hypersphere before decoding to image space. We trained the models on two NVIDIA GTX 1080TI GPUs. A detailed model specification is provided below in Table \ref{tab:vision_architecture}.

We used an NTK with $L = 8$ and $w = 10000$ as the input kernel $k$ (i.e. the embedding kernel of $p_Z$) for NTK-kPF, and a Gaussian kernel with bandwidth $\sigma_{in} = {\sqrt{2 |\spc{Z}|}} / {8}$ for RBF-kPF. The bandwidth for the output Gaussian kernel is selected by grid search over $\{2^{-i} * \sigma_{data}| i \in [8]\}$, where $\sigma_{data}$ is the empirical data standard deviation, based on cross-validation of a degree 3 polynomial kernel MMD between the sampled and the ground-truth latent points. Further, to mitigate the deterioration of performance of kernel methods in a high-dimensional setting due to the curse of dimensionality \citep{Evangelista2006Taming}, in practice, we model $\spc{Z}$ as a space with fewer dimensions than the input space $\spc{X}$. As a rule of the thumb, we choose $\spc{Z}$ such that $|\spc{Z}| = |\spc{X}| / 4$.

To generate images from kPF learned on NVAE latent space, we used the pre-trained checkpoints provided in \citep{vahdat2020NVAE} to obtain the latent embeddings for 2000 FFHQ images. We then construct the kPF from the concatenated latent space of the lowest resolution ($8 \times 8$). During sampling, prior samples at those resolutions are replace by the kPF samples, while for other resolutions samples remain generated from inferred Gaussian distributions. The batchnorm statistics were readjusted for $500$ iterations following \citep{vahdat2020NVAE}. We use rbf kernels as input and output kernels, with bandwidths $\sigma_k$, $\sigma_l$ chosen by the \textit{median heuristic} ($\sim100$ for input and $\sim70$ for output in our experiments).

\newcommand{\conv}[2]{\textrm{Conv}_{#1}^{#2}}
\newcommand{\convt}[2]{\textrm{ConvT}_{#1}^{#2}}
\newcommand{\resblock}[2]{\textrm{ResBlock}_{#1} \times #2}
\renewcommand{\arraystretch}{1.2}
\begin{figure}[h]
    \begin{minipage}{0.55\textwidth}
    \centering
    {\footnotesize    
    \begin{tabular}{c|c|c|c}
            \toprule
          & MNIST & CIFAR-10 & CelebA\\
          \midrule\midrule
          \multirow{4}{*}{Encoder}
                 & $\conv{128}{4\times4}$ & $\conv{128}{4\times4}$ & $\conv{128}{5\times5}$ \\
                 & $\conv{256}{4\times4}$ & $\conv{256}{4\times4}$ & $\conv{256}{5\times5}$ \\
                 & $\conv{512}{4\times4}$ & $\conv{512}{4\times4}$ & $\conv{512}{5\times5}$ \\
                 & $\conv{1024}{4\times4}$ & $\conv{1024}{4\times4}$ & $\conv{1024}{5\times5}$ \\
                 \midrule
          \multirow{4}{*}{Decoder}
                 & $\convt{512}{4\times4}$ & $\convt{512}{4\times4}$ & $\convt{512}{5\times5}$ \\
                 & $\convt{256}{4\times4}$ & $\convt{256}{4\times4}$ & $\convt{256}{5\times5}$ \\
                 & $\convt{1}{4\times4}$ & $\convt{3}{4\times4}$ & $\convt{128}{5\times5}$ \\
                 &                       &                       & $\convt{3}{5\times5}$ \\
                %  &                   &                   & \convt{32}{5\times5} \\\cline{2-4}
                %  &\multicolumn{3}{c|}{5x5 conv, stride 1} \\
                 \bottomrule
    \end{tabular}
    }
    \captionof{table}{Model architecture for computer vision experiments. Subscript denotes the number of output channels and superscript denotes the window size of the convolution kernel. Batch normalization and activation is applied between each pair of convolution layers}
    \label{tab:vision_architecture}
    \end{minipage}
    \hspace{0.02\textwidth}
    \begin{minipage}{0.4\textwidth}
    \centering
    {\footnotesize
    \begin{tabular}{p{0.25\linewidth}>{\centering\arraybackslash}p{0.65\linewidth}}
         \toprule
          \multirow{4}{*}[-0.5em]{Encoder}& 5x5 conv, stride 4 \\\cline{2-2}
                 & $\resblock{64}{2}$ \\
                 & $\resblock{64}{2}$ \\
                 & $\resblock{128}{2}$ \\
                 & $\resblock{256}{2}$ \\
                 \midrule
          \multirow{4}{*}[-0.5em]{Decoder}
                 & $\resblock{256}{2}$ \\
                 & $\resblock{128}{2}$ \\
                 & $\resblock{64}{2}$ \\
                 & $\resblock{64}{2}$ \\\cline{2-2}
                 &5x5 conv, stride 4 \\
                 \bottomrule
    \end{tabular}
    }
    \captionof{table}{Model architecture for experiments for image generation on brain imaging dataset. Subscript denotes the number of output channels. Upsampling and downsampling are performed using strided convolutions.}
    \label{tab:brain_imaging_architecture}
    \end{minipage}
\end{figure}

% \newcommand{\resblock}[2]{\textrm{ResBlock}_{#1} \times #2}
% \begin{wraptable}{r}{0.5\textwidth}
%     \centering
%     \begin{tabular}{p{0.1\textwidth}>{\centering\arraybackslash}p{0.3\textwidth}}
%          \toprule
%           \multirow{4}{*}[-0.5em]{Encoder}& 5x5 conv, stride 4 \\\cline{2-2}
%                  & $\resblock{64}{2}$ \\
%                  & $\resblock{64}{2}$ \\
%                  & $\resblock{128}{2}$ \\
%                  & $\resblock{256}{2}$ \\
%                  \midrule
%           \multirow{4}{*}[-0.5em]{Decoder}
%                  & $\resblock{256}{2}$ \\
%                  & $\resblock{128}{2}$ \\
%                  & $\resblock{64}{2}$ \\
%                  & $\resblock{64}{2}$ \\\cline{2-2}
%                  &5x5 conv, stride 4 \\
%                  \bottomrule
%     \end{tabular}
%     \caption{Model architecture for experiments for image generation on brain imaging dataset. Subscript denotes the number of output channels. Upsampling and downsampling are performed using strided convolutions.}
%     \label{tab:my_label}
%     \vspace*{-1.5em}
% \end{wraptable}

\subsection{Image Generation for Brain Images}

For the high-resolution brain imaging dataset, we used a custom version of ResNet \citep{he2016resnet} with 3D convolutions. The detailed architecture is shown in Fig. \ref{tab:brain_imaging_architecture}. Due to the large size of the data, we trained the model on 4 NVIDIA Tesla V100 GPUs. 

\paragraph{Mandatory ADNI statement regarding data use.} Data used in preparation of this article were obtained from the Alzheimer’s Disease Neuroimaging Initiative
(ADNI) database (adni.loni.usc.edu). As such, the investigators within the ADNI contributed to the design
and implementation of ADNI and/or provided data but did not participate in analysis or writing of this report.
A complete listing of ADNI investigators can be found in the \href{http://adni.loni.usc.edu/wp-content/uploads/how_to_apply/ADNI_Acknowledgement_List.pdf}{\text{ADNI Acknowledgement List}}.

The T1 MR brain dataset we utilize consists of images from $184$ subjects diagnosed with Alzheimers's disease and $292$ healthy controls/ normal subjects. Images were first coregistered to a MNI template and segmented to preserve only the white matter and grey matter. Then, all images were resliced and resized to $160 \times 196 \times 160$ and rescaled to the range of $[-1, 1]$. Voxel-based morphometry (VBM) was used to obtain the $p$-value map of data and generated images.

%\clearpage
% \clearpage

% \begin{figure}[!ht]
%     \centering
%     \scalebox{1}{    \begin{tabular}{c c c}
%             \includegraphics[width=0.32\textwidth]{imgs/mnist_gmm_generations.png} &
%             \includegraphics[width=0.32\textwidth]{images/mnist_glow_generations.png} &
%             \includegraphics[width=0.32\textwidth]{images/mnist_kpf_generations.png}\\
%             \includegraphics[width=0.32\textwidth]{images/cifar_gmm_generations.png} &
%             \includegraphics[width=0.32\textwidth]{images/cifar_glow_generations.png} &
%             \includegraphics[width=0.32\textwidth]{images/cifar_kpf_generations.png}
%             GMM & Glow & NTK-kPF\\
%         \end{tabular}
%     }
  
% \end{figure}

\newpage
\section{More Samples}

In this section we present additional uncurated set of samples on MNIST, CIFAR-10, CelebA based on pre-trained SRAE and FFHQ based on NVAE. From the figures, it can be seen that kPF produces consistent and diverse samples, often better in quality than the alternatives.
\vfill
\begin{figure}[h]
    \centering
    \includegraphics[width=0.495\textwidth]{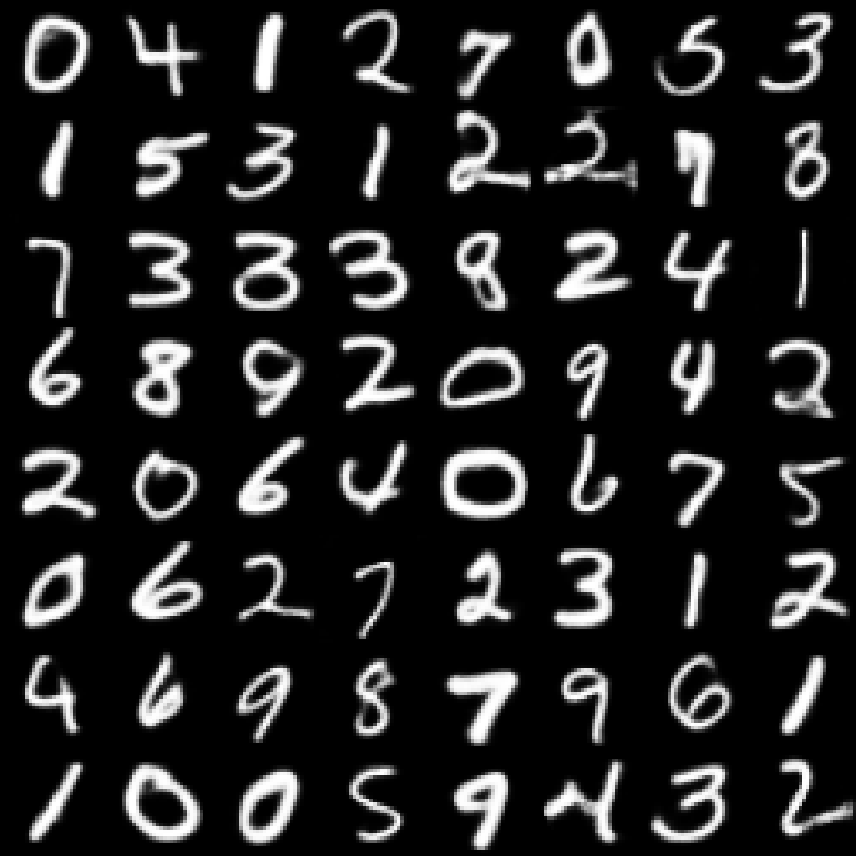}
    \includegraphics[width=0.495\textwidth]{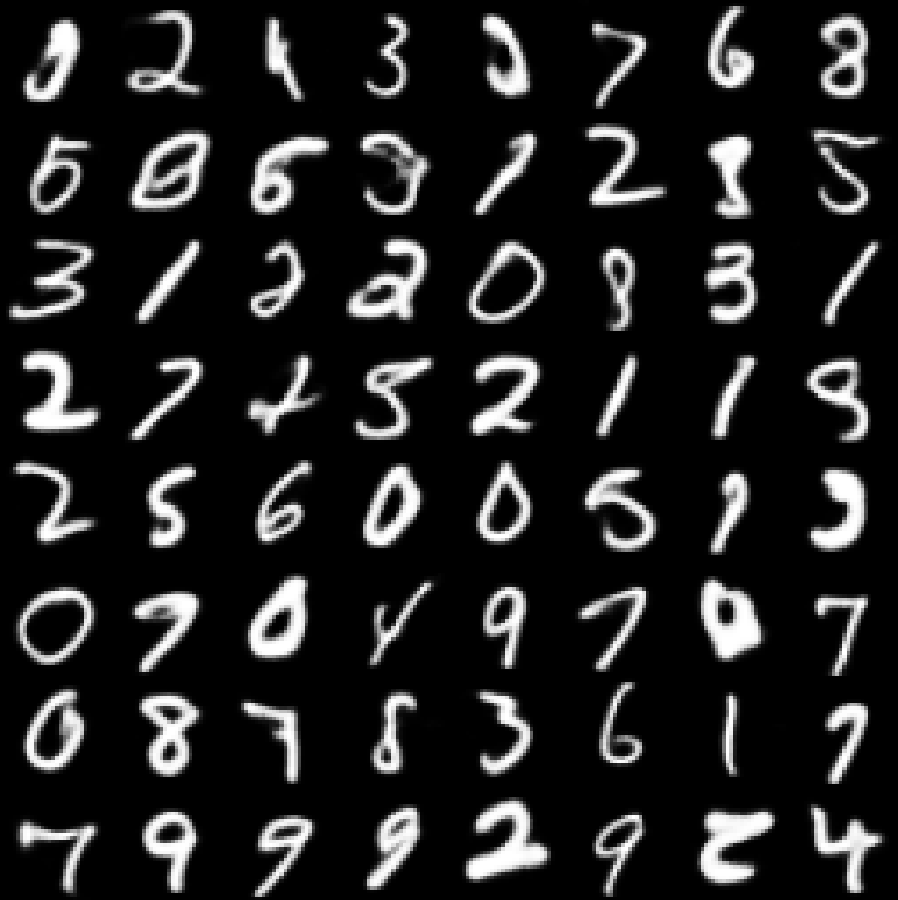}\\
    \includegraphics[width=0.495\textwidth]{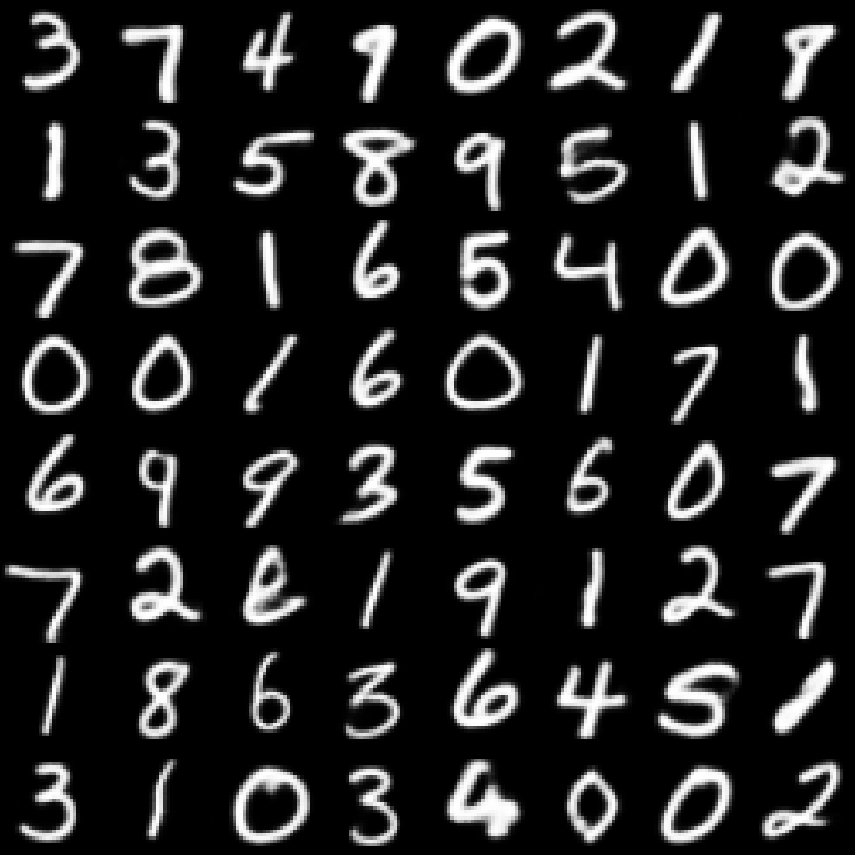}\\
    \caption{MNIST results from $\textrm{SRAE}_\textrm{GMM}$ (top left), $\textrm{SRAE}_{Glow}$ (top right) and our $\textrm{SRAE}_\textrm{NTK-kPF}$.}
    \label{fig:mnist-more-samples}
\end{figure}
\vfill

\newpage
\vfill
\begin{figure}
    \centering
    \includegraphics[width=0.495\textwidth]{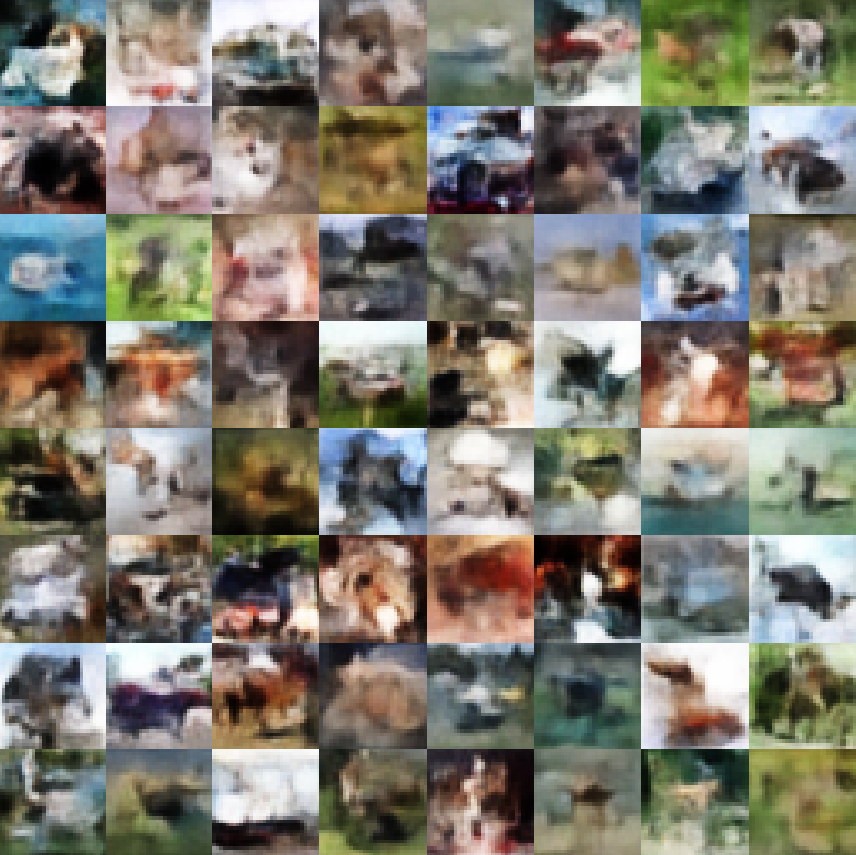}
    \includegraphics[width=0.495\textwidth]{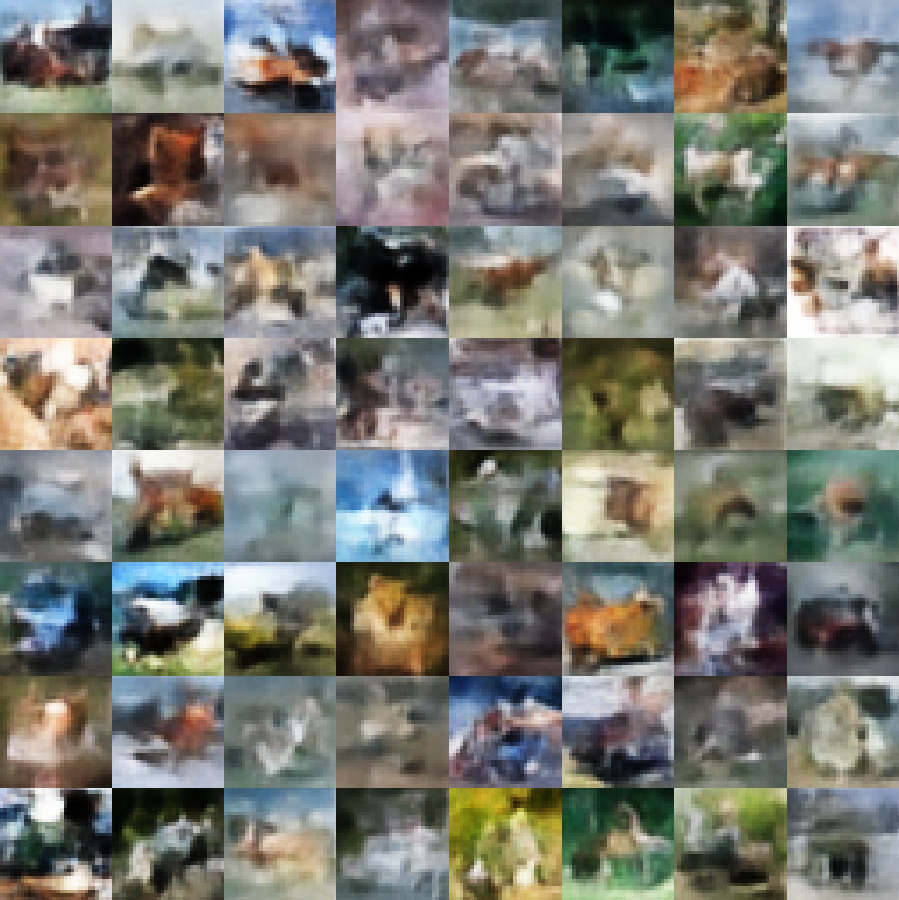}\\
    \includegraphics[width=0.495\textwidth]{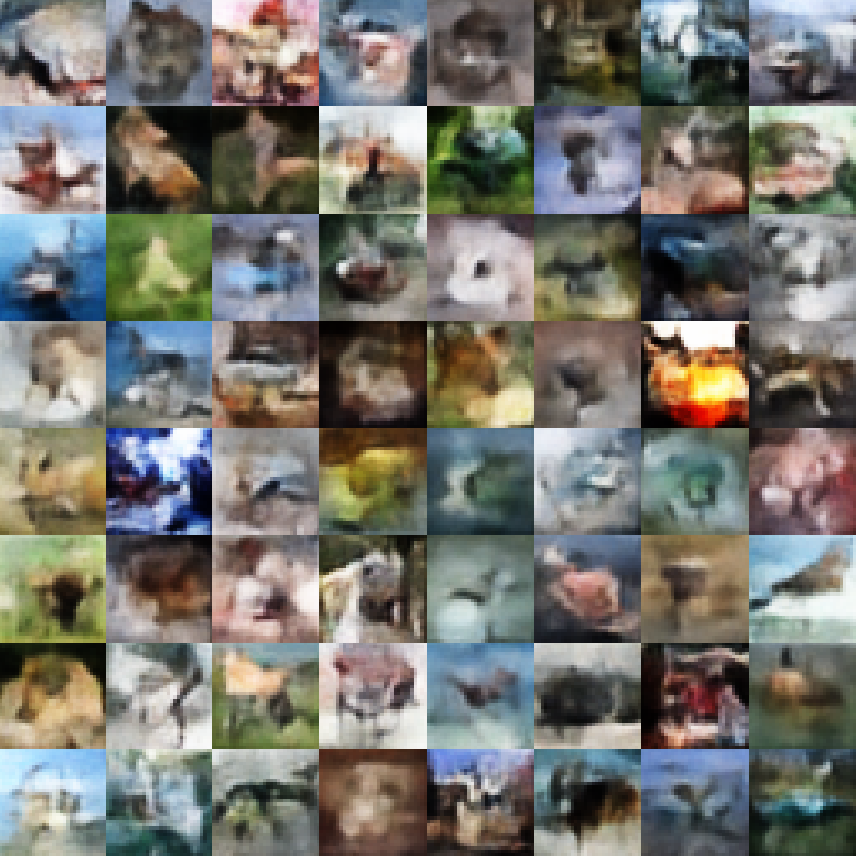}\\
    \caption{CIFAR results from $\textrm{SRAE}_\textrm{GMM}$ (top left), $\textrm{SRAE}_{Glow}$ (top right) and our $\textrm{SRAE}_\textrm{NTK-kPF}$.}
    \label{fig:cifar-more-samples}
\end{figure}
\vfill

\newpage
\vfill
\begin{figure}
    \centering
    \includegraphics[width=0.495\textwidth]{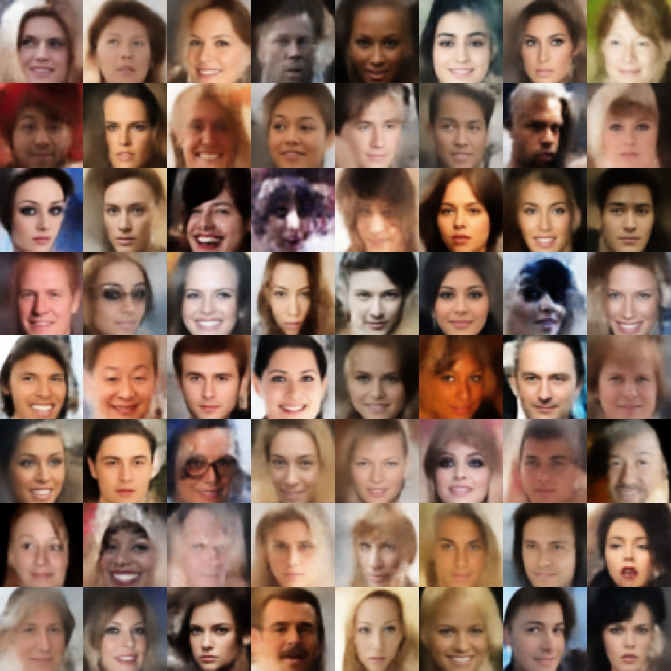}
    \includegraphics[width=0.495\textwidth]{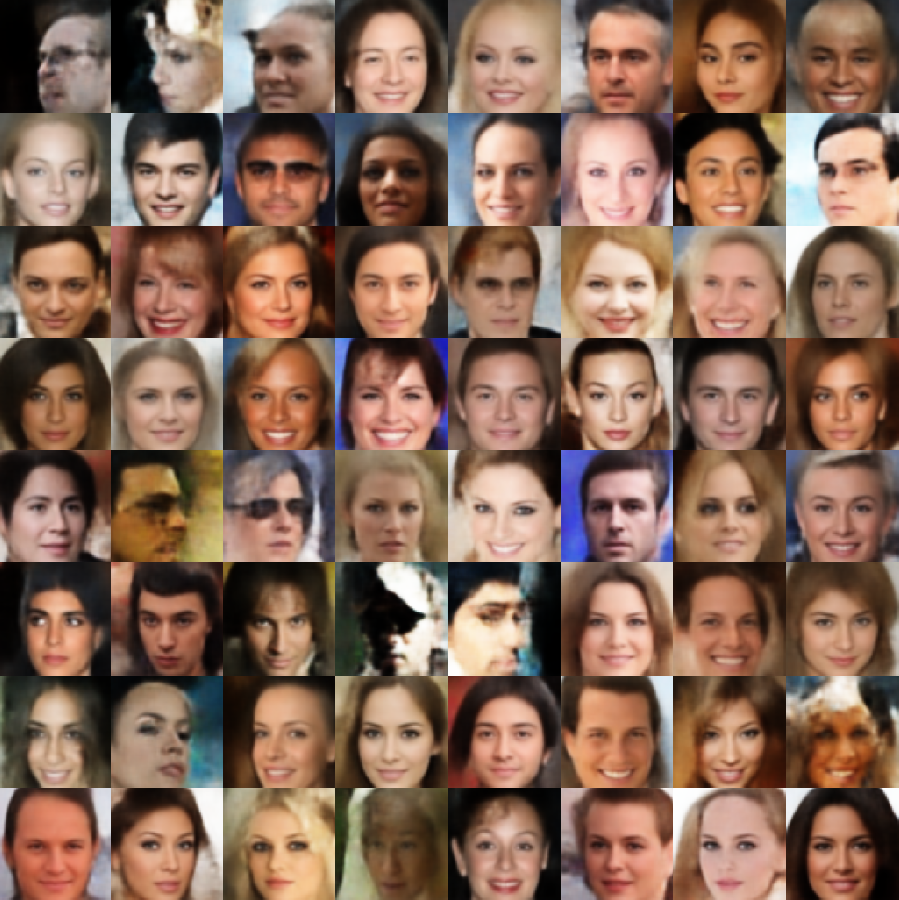}\\
    \includegraphics[width=0.495\textwidth]{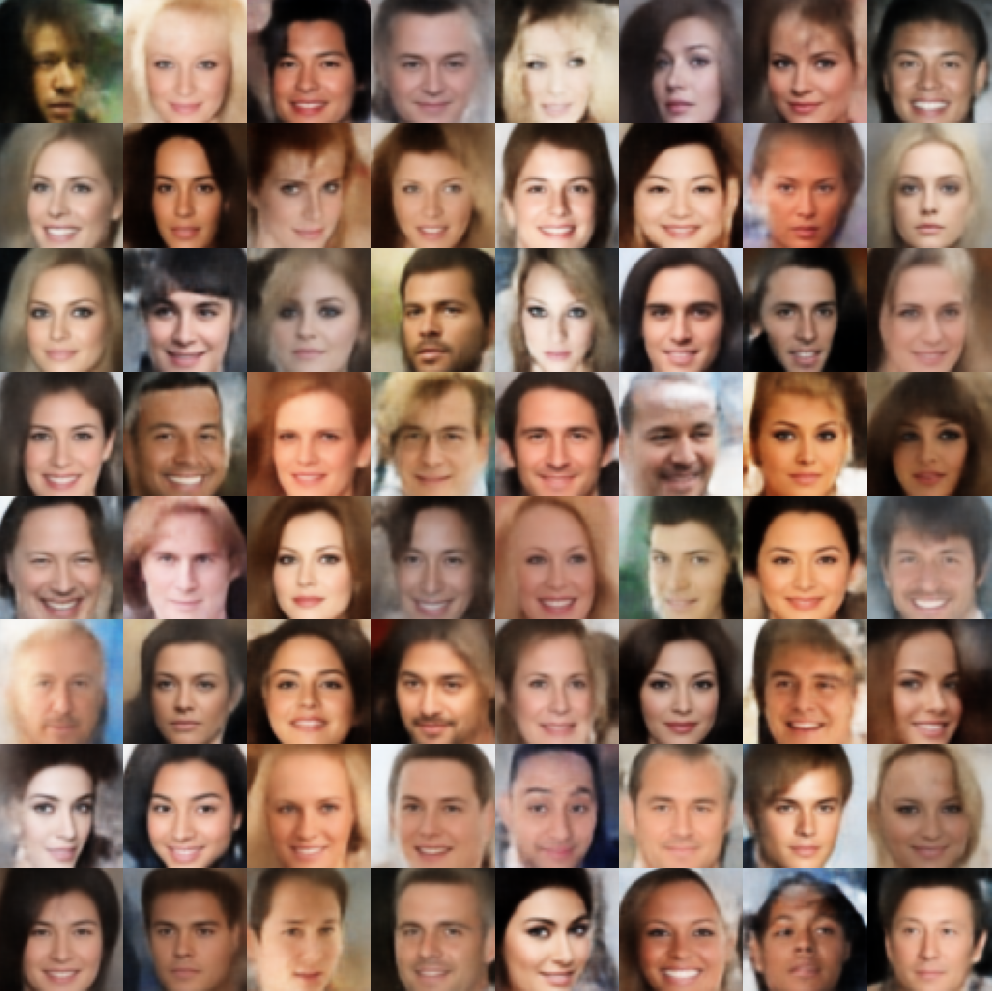}\\
    \caption{CelebA results from $\textrm{SRAE}_\textrm{GMM}$ (top left), $\textrm{SRAE}_{Glow}$ (top right) and our $\textrm{SRAE}_\textrm{NTK-kPF}$ (bottom).}
    \label{fig:celeba-more-samples}
\end{figure}
\vfill

\begin{figure}
    \centering
    \includegraphics[width = 0.95\textwidth]{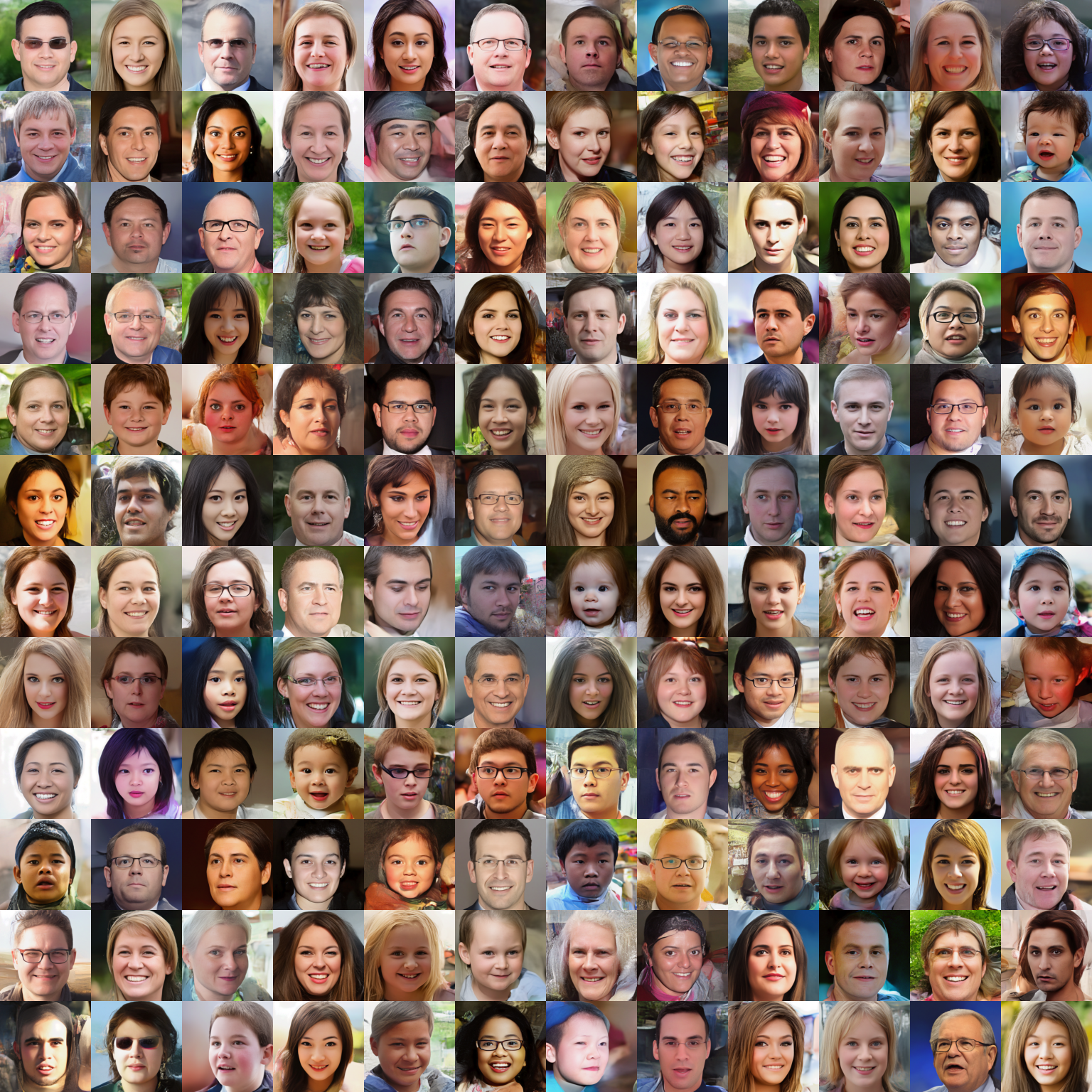}
    \caption{Additional samples from kPF+NVAE pre-trained on FFHQ}
    \label{fig:nvae-more-samples}
\end{figure}

\clearpage
\end{document}